\documentclass[12pt]{article}
\usepackage[utf8]{inputenc}
\usepackage{geometry}
\usepackage{hyperref}
\usepackage{setspace}
\usepackage{fix-cm}
\usepackage{authblk}
\usepackage{amsmath}
\usepackage{booktabs}
\usepackage[none]{hyphenat}
\usepackage{tabularx}
\usepackage{float}
\hypersetup{colorlinks=true,allcolors=[rgb]{0,0,1}}

\usepackage{url}
\usepackage{amsmath, amsfonts, amsthm, amssymb, bm, bbm, verbatim, mathtools, mathrsfs}
\usepackage{color, graphicx, appendix}
\usepackage{etoolbox}
\usepackage{authblk}
\usepackage{array}
\usepackage{romannum}
\usepackage{multirow}

\usepackage{float}
\usepackage{pgf,tikz}
\usetikzlibrary{arrows,shapes.arrows,shapes.geometric,shapes.multipart,fit,automata,decorations.pathmorphing,positioning}
\tikzset{
	>=stealth',
	true/.style={
		rectangle,
		draw=black, very thick,
		text width=6.5em,
		minimum height=2em,
		text centered,
		fill=gray, opacity = 0.5},
	punkt/.style={
		rectangle,
		rounded corners,
		draw=black, very thick,
		text width=6.5em,
		minimum height=2em,
		text centered},
	est/.style={
		circle,
		draw=black, very thick,
		text centered},
	shade/.style={
		circle,
		draw=black, very thick, fill=gray!50,
		text centered},
	weight/.style={
		circle,
		draw=black, very thick,
		text width=6.5em,
		minimum height=2em,
		text centered},
	pil/.style={
		->,
		thick,
		shorten <=2pt,
		shorten >=2pt,},
	double/.style={
		<->,
		thick,
		shorten <=2pt,
		shorten >=2pt,},
	dash/.style={
		dashed,
		thick,
		shorten <=2pt,
		shorten >=2pt,},
	dashdouble/.style={
		<->,
		dashed,
		thick,
		shorten <=2pt,
		shorten >=2pt,}
}

\makeatletter 

\usepackage{epstopdf}
\usepackage[ruled,linesnumbered,algo2e]{algorithm2e}
\usepackage{subcaption}

\usepackage[round]{natbib}

\usepackage{url}            
\usepackage{nicefrac}       
\usepackage{xcolor}
\usepackage{scalerel}
\usepackage{dashrule}
\usepackage{fancyhdr}
\usepackage{tabu}
\usepackage{enumitem}

\def\d{\mathrm{d}}

\def\var{\mathsf{var}}

\def\Cov{\mathsf{Cov}}

\renewcommand{\hat}{\widehat}

\usepackage{xr,xspace}

\usepackage{caption,subcaption,soul}
\usepackage{algpseudocode}
\makeatletter
\usepackage{romanbar}

\theoremstyle{plain}
\newtheorem{theorem}{Theorem}
\newtheorem{lemma}{Lemma}
\newtheorem{proposition}{Proposition}
\newtheorem{corollary}{Corollary}
\theoremstyle{definition}
\newtheorem{definition}{Definition}
\newtheorem{example}{Example}

\newtheorem{problem}{Problem}

\newtheorem{remark}{Remark}

\newtheorem*{remark*}{Remark}
\newtheorem{condition}{Condition}

\usepackage{algorithm}
\usepackage{algpseudocode}

\usepackage{xspace, prettyref}

\newrefformat{eq}{(\ref{#1})}
\newrefformat{chap}{Chapter~\ref{#1}}
\newrefformat{sec}{Section~\ref{#1}}
\newrefformat{alg}{Algorithm~\ref{#1}}
\newrefformat{fig}{Fig.~\ref{#1}}
\newrefformat{tab}{Table~\ref{#1}}
\newrefformat{rmk}{Remark~\ref{#1}}
\newrefformat{clm}{Claim~\ref{#1}}
\newrefformat{def}{Definition~\ref{#1}}
\newrefformat{cor}{Corollary~\ref{#1}}
\newrefformat{lmm}{Lemma~\ref{#1}}
\newrefformat{prop}{Proposition~\ref{#1}}
\newrefformat{prob}{Problem~\ref{#1}}
\newrefformat{app}{Appendix~\ref{#1}}
\newrefformat{hyp}{Hypothesis~\ref{#1}}
\newrefformat{thm}{Theorem~\ref{#1}}

\newcommand{\diag}{\mathsf{diag}}

\newcommand{\Tr}{\mathsf{Tr}}

\renewcommand{\hat}{\widehat}

\newcommand{\bma}{{\bm{a}}}

\newcommand{\bmr}{{\bm{r}}}

\newcommand{\bmu}{{\bm{u}}}
\newcommand{\bmv}{{\bm{v}}}

\newcommand{\bmx}{{\bm{x}}}
\newcommand{\bmy}{{\bm{y}}}

\newcommand{\bmA}{{\bm{A}}}

\newcommand{\bmH}{{\bm{H}}}
\newcommand{\bmI}{{\bm{I}}}

\newcommand{\bmK}{{\bm{K}}}

\newcommand{\bmS}{{\bm{S}}}

\newcommand{\bmX}{{\bm{X}}}

\usepackage{tcolorbox}

\usepackage{scalerel}

\newcommand{\cH}{\mathcal{H}}

\makeatletter
\def\ubar#1{\underline{\sbox\tw@{$#1$}\dp\tw@\z@\box\tw@}}
\makeatother

\usepackage{xr}
\makeatletter
\newcommand*{\addFileDependency}[1]{
  \typeout{(#1)}
  \@addtofilelist{#1}
  \IfFileExists{#1}{}{\typeout{No file #1.}}
}
\makeatother

\begin{document}
\sloppy

\title{Gradient-Flow Optimization as Dynamic Random-Effects Inference: Testing and Early Stopping with Applications to Deep Learning}

\author[1,*]{Minhao Yao}
\author[2,*]{Ruoyu Wang}
\author[2]{Xihong Lin}
\author[3]{Lin Liu}
\author[4,\textdagger]{Zhonghua Liu}

\affil[1]{Centre for Biomedical Data Science, Duke-NUS Medical School, National University of Singapore}
\affil[2]{Department of Biostatistics, Harvard T.H. Chan School of Public Health, Boston, MA, USA}
\affil[3]{Institute of Natural Sciences, MOE-LSC, School of Mathematical Sciences, CMA-Shanghai, SJTU-Yale Joint Center of Biostatistics and Data Science, Shanghai Jiao Tong University}
\affil[4]{Department of Biostatistics, Columbia University, New York, NY, USA}

\renewcommand\Affilfont{\itshape\small}

\date{}

\maketitle
\pagenumbering{arabic}
\renewcommand{\thepage}{\arabic{page}}

\vspace{-4em}
\begin{center}
\small 
\textsuperscript{*} Equal contribution; 
\textsuperscript{\textdagger}Correspondence: \href{mailto:zl2509@cumc.columbia.edu}{zl2509@cumc.columbia.edu}
\end{center}
\vspace{-1em}

\begin{abstract}
\singlespacing
Gradient-flow optimization is usually viewed as an algorithmic procedure for minimizing empirical loss, with training duration selected by validation or heuristic early stopping rules. We develop a statistical inference framework for gradient-flow training. We show that whenever fitted values evolve through a time-invariant positive semidefinite training operator, the output at each time is equivalent to the best linear unbiased predictor under a corresponding random-effects model. Training time then becomes a variance-component parameter governing variance reallocation from residual noise to structured signal. This turns two training decisions into inferential problems: whether training is needed becomes a variance-component test for signal beyond initialization, and how long to train becomes restricted maximum likelihood (REML) estimation of the training-time variance component. We show that the REML-guided early stopping rule selects the time at which optimized spectral losses become decorrelated from the training-operator eigenvalues. The asymptotic prediction optimality of the REML-guided early stopping time is established for fixed-design in-sample risk and random-design out-of-sample risk. Deep learning models in fixed-kernel gradient regimes provide canonical instantiations for our results. Numerical experiments and a UK Biobank proteomics application show competitive accuracy of the REML-guided early stopping time with reduced reliance on validation splits and repeated checkpoint evaluation.
\end{abstract}

\textbf{Keywords}: Deep learning; Early stopping; Gradient flow; Random effects; REML; Spectral regularization.

\setstretch{1.7}

\section{Introduction}

Deep learning has achieved remarkable success across a wide range of scientific applications \citep{lecun2015deep}, including protein structure prediction \citep{jumper2021highly,abramson2024accurate}, molecular design \citep{dauparas2022robust}, and mathematical discovery \citep{castelvecchi2026ai}. Despite this success, a principled statistical understanding of deep learning models as predictive procedures remains incomplete \citep{samek2021explaining,bartlett2021deep}. A deep learning predictor is shaped by several interacting components, including the function class, architecture, loss function, optimization algorithm, and stopping rule. Recent statistical overviews have clarified many aspects of approximation, optimization, over-parameterization, and generalization \citep{fan2021selective,bartlett2021deep}, but a statistical interpretation of the training trajectory remains less developed. This limitation is central to the interpretability of deep learning models as statistical procedures and is closely related to the broader concern that deep learning often behaves as black-box models \citep{rudin2019stop}.

A substantial statistical literature studies deep learning models as flexible nonparametric regression estimators, focusing on how network architecture, function-class complexity, and structural assumptions on the target function or covariate distribution determine the accuracy of empirical-risk-minimizing predictors~\citep{schmidt2020nonparametric,bauer2019deep,kohler2021rate,farrell2021deep,jiao2023deep}. These works have greatly advanced the statistical theory of deep learning. However, they provide an incomplete account of the fitted predictor produced by actual training, since they typically focus on the global empirical-risk minimizer, which may not be attainable in practice because of the complicated loss landscape of neural network training~\citep{achour2024loss}. The final predictor is therefore shaped not only by the function class and loss, but also by the optimization trajectory and the time at which training is stopped~\citep{goodfellow2016deep}. Understanding the statistical role of training itself consequently requires a complementary theory of the training path and stopping time. Recently, statistical early stopping methods have emerged for large language models during the reasoning stage \citep{xie2025statistical}; however, to our knowledge, statistically principled stopping rules for training remain largely underexplored.

This leaves both a conceptual and a methodological gap. Conceptually, we lack a statistical characterization of how optimization reallocates variation from residual noise to structured signal along the training path. Methodologically, we lack a corresponding statistical framework that turns the decisions of whether to train and when to stop into inferential problems, rather than relying solely on validation-based or heuristic early-stopping rules~\citep{prechelt1998early,yao2007early,goodfellow2016deep}.
The central question of this paper is whether the training path itself can be given an inferential interpretation. In particular, we ask whether training time can be treated not merely as a tuning parameter, but as an estimable statistical parameter governing signal extraction and model complexity.

We address this question by developing a general optimization--inference duality for fixed-operator squared-error gradient flow. Let $f_t(\bmX)$ denote the vector of fitted values on $n$ training samples $\bmX=(\bmx_1,\cdots,\bmx_n)^\top$ at training time $t$, and let $\bmy$ denote the response vector. Suppose that $f_t(\bmX)$ evolves according to the gradient flow \citep{kovachki2021continuous}
$
\d f_t(\bmX)/\d t=
-\bmH\{f_t(\bmX)-\bmy\}$,
where $\bm H$ is a time-invariant positive semidefinite training operator on the training-sample prediction space. In kernel settings, $\bm H$ is the Gram matrix induced by a positive semidefinite kernel $K(\cdot,\cdot)$, with entries $H_{ij}=K(\bmx_i,\bmx_j)$. The operator $\bmH$ determines the spectral structure of the training dynamics, with its eigenvectors defining residual directions and its eigenvalues controlling their decay rates.  Within our framework, the gradient-flow trajectory is exactly equivalent, at every training time $t$, to the best linear unbiased predictor (BLUP) under a corresponding working random-effects model~\citep{henderson1975best,efron2010large}. Thus, algorithmically, gradient flow is an optimization path; statistically, the same path is a random-effects inference path in which training time acts as a variance-component parameter governing how variation is reallocated from residual noise to structured signal. This operator-level formulation includes early-stopped linear regression \citep{ali2019continuous}, fixed-kernel gradient flow \citep{scholkopf2002learning}, linearized predictors, and, as a canonical deep-learning instance, infinite-width neural networks in the neural tangent kernel (NTK) regime \citep{jacot2018neural}.

The NTK provides a well-established fixed-kernel description of neural-network training in suitable infinite-width regimes \citep{jacot2018neural,du2019gradient,lee2019wide,arora2019exact,huang2020dynamics,montanari2022interpolation,yang2021tensor}. Importantly, this fixed-kernel description is not limited to fully connected networks: NTK training dynamics have been shown to be universal for broad classes of neural network architectures, including convolutional, recurrent, residual, graph, and attention-based architectures \citep{yang2021tensor}. More recently, NTK-based and linearized quantities have also been used to analyze modern AI systems, including transformer fine-tuning and large language model adaptation \citep{malladi2023kernel,afzal2025linearization,li2025efficient}, further illustrating the relevance of fixed-kernel and linearized perspectives.

At the same time, the optimization--inference duality developed in this paper is an operator-level phenomenon, not an NTK-specific one. It holds for any fixed positive semidefinite operator governing squared-error gradient flow. This distinction is also important from the perspective of classical mixed-model theory. Classical mixed-model representations of kernel ridge regression and smoothing splines typically start from a prespecified kernel or an explicit quadratic penalty and lead to a static regularized estimator indexed by a penalty or variance-ratio parameter \citep{wahba1990spline,speed1991comment,ruppert2003semiparametric,liu2007semiparametric}. By contrast, our formulation starts from the unpenalized gradient-flow trajectory. The random-effects covariance, $\exp(t\bm H)-\bm I$, is not imposed by an explicit penalty, but is induced dynamically by the residual-decay operator $\exp(-t\bm H)$ along the training path. In other words, our result is a dynamic random-effects theory for optimization trajectories: the time-indexed BLUP reproduces the early-stopped prediction path, and the training time $t$ becomes an estimable variance-component parameter.

Our dynamic representation turns two basic training decisions into statistical inference problems. First, whether to train becomes a variance-component testing problem: the null hypothesis of no useful training corresponds to the absence of the training-induced random effect, leading to a score test for signal beyond initialization~\citep{lin1997variance}. Second, conditional on training being warranted, the stopping time can be estimated by restricted maximum likelihood (REML)~\citep{henderson1975best,corbeil1976restricted}. In this formulation, early stopping is recast as estimation of a variance component controlling model complexity, rather than as a purely validation-based or heuristic tuning rule.

The REML-guided stopping rule also has an interpretable spectral characterization. In the eigenbasis of the fixed training operator, gradient flow activates directions at different rates: larger-eigenvalue directions are fitted earlier, whereas smaller-eigenvalue directions retain more of their initial residual until later in training. The REML estimating equation selects the training time at which optimized spectral losses become empirically decorrelated from the eigenvalues of the training operator. We further show that, although the REML criterion is derived from a Gaussian working random-effects model, the resulting prediction guarantee does not require Gaussian random effects or errors. Instead, under suitable regularity conditions on the actual data-generating process with independent and identically distributed (i.i.d.) errors without random effects, the REML-guided stopping time achieves oracle-type guarantees for fixed-design in-sample risk and, under additional  regularity conditions, for random-design out-of-sample risk.

We complement the theory with numerical experiments and an application to UK Biobank proteomics data~\citep{sun2023plasma}. Across simulated and real-data examples, the proposed testing procedure detects when training extracts signal beyond initialization, and the REML-guided early stopping rule achieves prediction accuracy comparable to validation-based early stopping while avoiding a separate validation split and reducing repeated checkpoint evaluation. These results demonstrate that the proposed framework provides a statistically interpretable, sample-preserving, and computationally efficient principle for testing and early stopping in settings where (approximately) fixed-operator training dynamics are appropriate.

The remainder of the paper is organized as follows. 
Section~\ref{sec:setup} introduces the data model and the fixed-operator gradient-flow formulation. 
Section~\ref{sec:random-effects} develops the random-effects representation, the variance-component test for training necessity, the REML-guided early stopping rule, the spectral interpretation, and the prediction optimality theory. 
Section~\ref{sec:numerical} presents numerical experiments  in neural-network training settings. 
Section~\ref{sec:UKB} applies the method to UK Biobank proteomics data. Section~\ref{sec:discuss} concludes the paper with a discussion.

\section{Gradient-Flow Optimization with a Fixed Training Operator}
\label{sec:setup}

This section formulates the optimization setting studied in the paper. We first introduce the data model and notation for prediction functions on the training sample. We then describe squared-error gradient flow  through a positive semidefinite training operator. This formulation is intentionally general: it is not tied to a particular model class, and includes linear regression \citep{ali2019continuous}, fixed-kernel methods \citep{scholkopf2002learning,hastie2009elements}, linearized predictors, and neural networks in the neural tangent kernel regime \citep{jacot2018neural,lee2019wide} as special cases. The central optimization object is the fixed-operator gradient-flow trajectory, whose closed-form solution provides the basis for the statistical inference developed in Section~\ref{sec:random-effects}.

\subsection{Data Model and Notation}

Let $\{(\bmx_i,y_i)\}_{i=1}^{n}$ be the training data, generated from
\begin{equation}
\label{eqn:truemodel}
y_i=f_\star(\bmx_i)+\varepsilon_i,
\end{equation}
where $f_\star$ is the ground truth function, $\bmx_i\in\mathbb{R}^d$ is a $d$-dimensional input feature vector, $y_i\in\mathbb{R}$ is the continuous response variable, and $\varepsilon_i$ are i.i.d. error terms with mean zero and variance $\sigma_\varepsilon^2$. Let
$\bmX=(\bmx_1,\ldots,\bmx_n)^\top\in\mathbb{R}^{n\times d}$,
$\bmy=(y_1,\ldots,y_n)^\top$, and
$\bm{\varepsilon}=(\varepsilon_1,\ldots,\varepsilon_n)^\top$
be the design matrix, response vector, and error vector of the training data, respectively. For any scalar function $f$, we write
$f(\bmX)=\{f(\bmx_1),\ldots,f(\bmx_n)\}^{\top}\in \mathbb{R}^n$, i.e., the vector of predictions obtained by applying $f$ row-wise to the design matrix $\bm{X} \in \mathbb{R}^{n \times d}$. Unless otherwise stated, we condition on the design matrix $\bmX$ and treat the input features as fixed; the random-design setting is considered separately in Corollary~\ref{cor:out_sample_optimality}. 
For notational simplicity, we omit hats for intermediate quantities along the training trajectory and reserve the hat notation for final estimators or estimated quantities when distinction is needed.

\subsection{Squared-Error Gradient Flow and Prediction}

We formulate the training dynamics directly through the prediction vector on the training sample. Let $f_t$ denote the predictor at training time $t$. Consider the unnormalized squared-error loss

\begin{equation}
L(f) = \frac{1}{2} \sum_{i=1}^n (y_i - f(\bm{x}_i))^2 = \frac{1}{2} \|\bm{y} - f(\bm{X})\|^2,
\label{eq: squared error loss function}
\end{equation}
where $\|\cdot\|$ is the Euclidean norm on $\mathbb{R}^n$.

The continuous-time formulation below can be viewed as the infinitesimal-learning-rate limit of full-batch gradient descent. Suppose that $f(\bmx)=f_{\bm{\theta}}(\bmx)$ is parametrized by $\bm{\theta}$, where the parameter space may be finite- or infinite-dimensional; see Examples \ref{eg: linear} and \ref{eg: RKHS} below for finite- and infinite-dimensional examples, respectively.  Let $f_m(\bmX) = f_{\bm{\theta}_m}(\bmX)$ be the prediction vector after the $m$-th discrete update with learning rate $\eta$. For full-batch gradient descent applied to the squared-error loss in \eqref{eq: squared error loss function},  the induced prediction update admits the first-order expansion
\begin{equation*}
    f_{m+1}(\bm{X})-f_m(\bm{X})
=
\bm{J}_{m}\{-\eta \nabla_{\bm{\theta}} L(f_{\bm{\theta}})\big|_{\bm{\theta}=\bm{\theta}_m}\}
=
-\eta\bm{H}_m(f_m(\bm{X})-\bm{y}),
\end{equation*}
where $\bm{J}_{m} = \nabla_{\bm{\theta}^{\top}} f_{\bm{\theta}}(\bmX)\big|_{\bm{\theta}=\bm{\theta}_m}$ and $\bm{H}_m = \bm{J}_{m}\bm{J}_{m}^{\top}$ is symmetric positive semidefinite. Introducing the continuous training time $t=m\eta$ and taking the limit $\eta\to 0$ yields the gradient-flow description of the training trajectory. This is the standard continuous-time idealization of gradient descent for least-squares problems~\citep{ali2019continuous}; in kernel and neural-network settings, the corresponding prediction-space dynamics are governed by the kernel or tangent-kernel operator~\citep{jacot2018neural,lee2019wide}.
This motivates the following prediction-space gradient-flow ordinary differential equation (ODE):
\begin{equation}
   \frac{\d f_t(\bmX)}{\d t}
   =
   -\bmH(t)\{f_t(\bmX)-\bmy\},
   \label{eq: varying operator flow}
\end{equation}
where $\bmH(t)=\bm{J}(t)\bm{J}(t)^\top\succeq 0$ is a time-dependent training operator that defines the linear map from the  residual to the derivative of the predictions and $\bm{J}(t) = \nabla_{\bm{\theta}^{\top}} f_{\bm{\theta}}(\bmX)\big|_{\bm{\theta}=\bm{\theta}_t}$ is the Jacobian operator. Let $\mathcal T(t)$ denote the residual-evolution operator satisfying
$\d\mathcal T(t)/\d t=-\bmH(t)\mathcal T(t)$ and
$\mathcal T(0)=\bmI$. Then the solution of \eqref{eq: varying operator flow}
with initial condition $f_{t}(\bmX)|_{t=0}=f_0(\bmX)$ is
$
f_t(\bmX)
=
\bm{f}_0(\bm{X}) + \{\bm{I} - \mathcal T(t)\}\{\bmy-f_0(\bmX)\}$.
If $\bmH(s)\bmH(r)=\bmH(r)\bmH(s)$ for all $r,s\in[0,t]$, that is, if the time-dependent training operators are pairwise commuting on $[0,t]$, then
\begin{equation}
 \mathcal T(t)
=
\exp\left\{-\int_0^t \bmH(s)\,\mathrm d s\right\}
=
\exp\{-t\bar{\bmH}(t)\},
\qquad
\bar{\bmH}(t)
=
t^{-1}\int_0^t \bmH(s)\,\mathrm ds.
\label{eq:timevaryingOperator}
\end{equation}
Here $\bar{\bmH}(t)$ is the time-averaged training operator. 

Inspired by continuous-time analyses of early stopping in least-squares regression \citep{ali2019continuous} and fixed-kernel gradient-flow limits in deep learning \citep{lee2019wide}, we focus on the time-invariant case $\bmH(t)\equiv\bmH$. In this case, the residual-evolution operator reduces to $\mathcal T(t)=\exp(-t\bmH)$, yielding
\begin{equation}
   \widehat f_t^{\bmH}(\bmX)
   =
   f_0(\bmX)
   +
   \{\bmI-\exp(-t\bmH)\}
   \{\bmy-f_0(\bmX)\}.
   \label{eq: fixed operator solution}
\end{equation}
When a cross-operator $h(\bmx,\bmX)$ is available for a new input $\bmx$, the corresponding out-of-sample extension of \eqref{eq: fixed operator solution} is
\begin{equation}
\widehat f_t^{\bmH}(\bmx)
=
f_0(\bmx)
+
h(\bmx,\bmX)\bmH^\dagger
\{\bmI-\exp(-t\bmH)\}
\{\bmy-f_0(\bmX)\},
\label{eq: fixed operator estimator}
\end{equation}
where $\bmH^\dagger$ denotes the Moore--Penrose inverse \citep{penrose1955generalized}. If $\bmH$ is positive definite, then $\bmH^\dagger=\bmH^{-1}$. The in-sample expression \eqref{eq: fixed operator solution} is recovered from \eqref{eq: fixed operator estimator} by evaluating $\bmx$ at the training points, in which case $h(\bmx_i,\bmX)$ is the $i$-th row of $\bmH$. The operator $\bm H$ is model-specific. Once a fixed operator $\bm H$ and an initial condition $f_0(\bm X)$ are specified, the gradient-flow ODE uniquely determines the prediction-space trajectory ${\widehat f_t^H(\bm X):t\geq0}$. We now give three concrete examples of such fixed operators.

\begin{example}[Linear regression]\label{eg: linear}
For linear regression, the predictor is $f(\bm{x}) = \bm{x}^\top \bm{\theta}$ with $\bm{\theta} \in \mathbb{R}^d$. The prediction vector on the training set is $\bm{f} = \bm{X}\bm{\theta}$, where $\bm{X} \in \mathbb{R}^{n \times d}$ is the design matrix. The Jacobian operator is $\bm{J}(t) = \bm{X}$, which does not depend on $t$, so the fixed training operator is $\bm{H} = \bm{X}\bm{X}^\top$.
\end{example}

\begin{example}[Unpenalized kernel regression]\label{eg: RKHS}
For kernel regression in a reproducing kernel Hilbert space (RKHS) associated with a kernel \citep{scholkopf2002learning}, consider gradient flow for the unpenalized objective
$
\min_{f \in \mathcal{H}_K} \frac{1}{2} \|\bm{y} - f(\bm{X})\|^2,
$
where $\cH_K$ is the RKHS associated with kernel $K$. Let $\{\mu_j\}_{j=1}^{\infty}$ and $\{\phi_j\}_{j=1}^{\infty}$ denote the eigenvalues and eigenfunctions of the integral operator associated with $K$. Then any function $f\in\cH_K$ can be represented as $f_{\bm{\theta}}(\bmx) = \sum_{j=1}^{\infty}\theta_j\sqrt{\mu_j}\phi_j(\bmx)$ with $\bm{\theta} = (\theta_j)_{j=1}^{\infty}$ satisfying $\sum_{j=1}^{\infty}\theta_j^2 < \infty$.
Under this parametrization, the Jacobian operator is $\bm{J}(t) = (\sqrt{\mu}_j\phi_j(\bmX))_{j=1}^{\infty}$, which is independent of $t$. Consequently, the gradient-flow dynamics for the above objective have the fixed training operator $\bmH=\bmK$, where $\bmK=(K(\bmx_i,\bmx_j))_{i,j=1}^{n}$ is the kernel Gram matrix. With zero initialization $f_0(\bmX)=\bm0$, the resulting early-stopped predictor is
$f_t(\bmX) = (\bm{I} - e^{-t\bm{K}})\bm{y}$. In this example, no explicit penalty term is added; regularization arises solely from early stopping at time $t$. This differs from kernel ridge regression, which adds the explicit penalty term $\rho\|f\|_{\mathcal{H}_K}^2$ to the objective function~\citep{liu2007semiparametric}.
\end{example}

\begin{example}[Deep learning architectures in the NTK regime]
For a neural network architecture in the infinite-width NTK regime, the neural tangent kernel (NTK) converges to a deterministic kernel $h_\infty(\bm{x}, \bm{x}')$ \citep{jacot2018neural,lee2019wide}. This fixed-kernel description is not limited to fully connected networks; NTK training dynamics have been shown to be architecturally universal for broad classes of neural network architectures expressible through tensor programs~\citep{yang2020tensor,yang2021tensor}, including modern architectures such as convolutional, recurrent, residual \citep{he2016deep}, graph, and attention-based networks \citep{vaswani2017attention}. The NTK Gram matrix on the training set is $\bm{H}_\infty = (h_{\infty}(\bm{x}_i, \bm{x}_j))_{i,j=1}^n$. In this regime, the NTK remains constant during training, which implies that the training operator is $\bm{H} = \bm{H}_\infty$ in the gradient-flow equation.
\end{example}

\section{Random-Effects Representation of Fixed-Operator Training}
\label{sec:random-effects}
This section develops the random-effects inference framework for fixed-operator gradient flow. We first establish the exact equivalence between the gradient-flow trajectory and the BLUP under a working random-effects model. We then use this equivalence to construct a variance-component test for signal beyond initialization and a REML-guided estimator of the stopping time. Finally, we derive the spectral interpretation of the REML stopping rule, define the resulting effective degrees of freedom, and establish oracle-type prediction guarantees.

\subsection{Fixed-Operator Gradient Flow as BLUP}
We now establish an exact random-effects representation for the fixed-operator gradient-flow trajectory in equation \eqref{eq: fixed operator solution}. Let $\Tr(\bmA)$ denote the trace of a matrix $\bmA$, let $\bm{I}$ denote the $n\times n$ identity matrix, and let $\gamma_t = n^{-1}\Tr\{\exp(t\bmH)\}$ be the average eigenvalue of $\exp(t\bmH)$. The following theorem shows that this trajectory is equivalent to the BLUP under a time-indexed Gaussian working random-effects model. The proof is given in the Supplementary Materials.

\begin{theorem}[Equivalence between fixed-operator gradient flow and BLUP]
\label{thm: equivalence}
For each training time $t\geq 0$, the fitted-value trajectory $\widehat f_t^{\bmH}(\bmX)$ in \eqref{eq: fixed operator solution} satisfies
\begin{equation*}
    \widehat f_t^{\bmH}(\bmX)
    =
    f_0(\bmX)
    +
    \widehat{\bm{u}}_t,
\end{equation*}
where $\widehat{\bm{u}}_t$ is the BLUP  of the random effects $\bm{u}_t$ in the following working random-effects model:
\begin{equation}
    \begin{split}
        \bmy &= f_0(\bmX) + \bm{u}_t + \bm{\varepsilon}_{t}, \\
        \bm{u}_t &= (u_{1,t},\cdots,u_{n,t})^\top
        \sim
        N\left(\bm{0},
        \sigma_{\varepsilon,t}^2\{\exp(t\bmH)-\bm{I}\}\right), \\
        \bm{\varepsilon}_{t}
        &=
        (\varepsilon_{t,1},\cdots,\varepsilon_{t,n})^\top
        \sim
        N(\bm{0},\sigma_{\varepsilon,t}^2\bm{I}),
    \end{split}
    \label{eq: model RE}
\end{equation}
where $\bm{u}_{t}$ and $\bm{\varepsilon}_{t}$ are independent and
$\sigma_{\varepsilon,t}^{2}
=
\gamma_t^{-1}\sigma_{\varepsilon}^{2}$
for some constant $\sigma_{\varepsilon}^{2}>0$.
\end{theorem}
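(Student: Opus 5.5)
The plan is to compute the BLUP in model \eqref{eq: model RE} explicitly and compare it with \eqref{eq: fixed operator solution}. Treat $f_0(\bmX)$ as a known offset (fixed effect) and write $\bm r=\bmy-f_0(\bmX)$. Set $\bm\Sigma_u=\sigma_{\varepsilon,t}^2\{\exp(t\bmH)-\bmI\}$ and $\bm\Sigma_\varepsilon=\sigma_{\varepsilon,t}^2\bmI$. Under joint Gaussianity of $(\bm u_t,\bm r)$, the BLUP of $\bm u_t$ coincides with the conditional mean:
\[
\widehat{\bm u}_t=\bm\Sigma_u(\bm\Sigma_u+\bm\Sigma_\varepsilon)^{-1}\bm r .
\]
The same expression also follows from Henderson's mixed-model equations. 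Since no unknown fixed effect is estimated beyond the offset, unbiasedness of the predictor adds no extra projection term. The marginal covariance is $\bm\Sigma_u+\bm\Sigma_\varepsilon=\sigma_{\varepsilon,t}^2\exp(t\bmH)$, which is positive definite for every $t\ge 0$ and every positive semidefinite $\bmH$. So the inverse exists even when $\bmH$ is singular, and this is why the model is parametrized through $\exp(t\bmH)-\bmI$.

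Next, simplify the expression. The scalar $\sigma_{\varepsilon,t}^2$ cancels, and because $\exp(t\bmH)-\bmI$ and $\exp(-t\bmH)$ are functions of the same symmetric matrix, they commute. This gives
\[
\widehat{\bm u}_t=\{\exp(t\bmH)-\bmI\}\exp(-t\bmH)\bm r=\{\bmI-\exp(-t\bmH)\}\{\bmy-f_0(\bmX)\}.
\]
Adding $f_0(\bmX)$ reproduces \eqref{eq: fixed operator solution} exactly. The identity can also be checked eigenvalue by eigenvalue: writing $\bmH=\bm U\bm\Lambda\bm U^\top$, each direction carries the shrinkage factor $(e^{t\lambda_j}-1)/e^{t\lambda_j}=1-e^{-t\lambda_j}$.

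Finally, address the scaling $\sigma_{\varepsilon,t}^2=\gamma_t^{-1}\sigma_\varepsilon^2$. The BLUP is invariant to a common scalar multiplying both variance components, so any positive choice of $\sigma_{\varepsilon,t}^2$ gives the same predictor. This choice simply normalizes the model so that $\Tr\{\mathrm{Var}(\bmy)\}/n=\sigma_{\varepsilon,t}^2\gamma_t=\sigma_\varepsilon^2$, which makes the total variance constant in $t$. I would record this normalization as a remark rather than as part of the identity.

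The case $t=0$ needs no separate treatment. There $\bm u_t$ is degenerate with covariance $\bm 0$, and the formula gives $\widehat{\bm u}_0=\bm 0$, which matches \eqref{eq: fixed operator solution}.

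The only step needing care is justifying that the BLUP equals $\bm\Sigma_u\bm V^{-1}\bm r$ when $\bm\Sigma_u$ is singular (for $t=0$ or when $\bmH$ is rank-deficient). I would handle this by deriving the BLUP directly as the minimizer of the mean squared error $E\|\bm A\bm r-\bm u_t\|^2$ over linear predictors $\bm A\bm r$. This derivation requires only that $\bm V=\sigma_{\varepsilon,t}^2\exp(t\bmH)$ be invertible, not $\bm\Sigma_u$, so the rank-deficient cases are covered.
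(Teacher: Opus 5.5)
Your proposal is correct and follows the paper's own proof essentially step for step. The paper also computes $\widehat{\bm u}_t=\Cov(\bm u_t,\bm r_0)\{\var(\bm r_0)\}^{-1}\bm r_0$, using that $\var(\bm r_0)=\sigma_{\varepsilon,t}^2\exp(t\bmH)$ is positive definite even when $\exp(t\bmH)-\bmI$ is singular, cancels the scale to obtain $\{\bmI-\exp(-t\bmH)\}\bm r_0$, and notes that $\sigma_{\varepsilon,t}^2=\gamma_t^{-1}\sigma_\varepsilon^2$ is only a normalization that keeps the total variance constant in $t$.
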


The Gaussian specification in \eqref{eq: model RE} is stronger than needed for the BLUP equivalence in Theorem \ref{thm: equivalence}. The BLUP depends only on the specified first two moments, namely the mean and covariance structure of $(\bm{u}_t^{\top}, \bm{\varepsilon}_t^{\top})^{\top}$. Thus, the equivalence between the fixed-operator gradient-flow trajectory and the BLUP is an algebraic covariance identity. The Gaussian working model will be used later as a likelihood device for the variance-component score test in Section \ref{sec:test} and REML estimation in Section \ref{subsec: REML-guided time}.

\begin{remark}[Comparison with explicit quadratic-penalty estimators]
Classical estimators with explicit quadratic penalties, including penalized splines \citep{wahba1990spline, speed1991comment,ruppert2003semiparametric,scholkopf2002learning,hastie2009elements} and kernel ridge regression \citep{liu2007semiparametric}, also admit mixed-model representations. However, these representations are static: they arise from an explicitly penalized objective, 
typically of the form
$
\min_f \sum_{i=1}^n\{y_i-f(\bmx_i)\}^2+\rho\mathcal P(f),
$
and describe the final regularized estimator through a covariance of the form $\bm C_\rho=\rho^{-1}\bm C$, up to a variance scale, where $\bm C$ is determined by the penalty $\mathcal P$.

Our construction differs in two key respects. First, it starts from the unpenalized squared-error objective and regularizes implicitly through early stopping. Second, it gives a time-indexed random-effects representation of the entire gradient-flow trajectory: for every $t$, the BLUP reproduces the corresponding early-stopped predictor. The covariance $\bm C_t=\exp(t\bm H)-\bm I$, equivalently the residual-decay operator $\exp(-t\bm H)$, is induced by the training dynamics rather than by a static penalty. This dynamic covariance is central to the convex REML profile objective in Proposition~\ref{prop: existence uniqueness empirical REML} and to the prediction guarantees in Theorem~\ref{thm: test error} and Corollary~\ref{cor:out_sample_optimality}. These distinctions are summarized in Table~\ref{tab:compare} in the Supplementary Materials.
\end{remark}

\subsection{Testing Whether Training Is Needed}
\label{sec:test}

A key question in gradient-flow optimization is whether training from the initialized predictor extracts predictive structure in $\bmy$ from $\bmX$. This question can be formulated through the random-effects representation in \eqref{eq: model RE}. Under the working model, the absence of trainable structure beyond initialization corresponds to a zero training-induced random effect, $\bmu_t=0$, or equivalently $\bmy-f_0(\bmX)=\bm{\varepsilon}_t$. Under the data-generating model \eqref{eqn:truemodel}, there is no need to train under the no-signal null which is naturally expressed, after centering, as $f_\star(\bmX)=0$.

The above two null formulations need not coincide when the initialized prediction $f_0(\bmX)$ is nonzero. To unify these null formulations, we therefore construct the score test after removing the contribution of the initialization, so that the null behavior is not driven by $f_0(\bmX)$. Specifically, assuming $f_0(\bmX)\neq \bm{0}$, we project onto the subspace orthogonal to $f_{0}(\bmX)$ using
$\bm{\Pi} = \bm{I} - (f_0(\bm{X})^{\top}f_0(\bm{X}))^{-1}f_0(\bm{X})f_0(\bm{X})^{\top}$,
and construct the score test from the resulting residual likelihood. Since $\bm{\Pi}$ projects onto an $n-1$ dimensional subspace, it has $n-1$ non-zero eigenvalues. Let $\bm{M}\in \mathbb{R}^{n\times (n-1)}$ be the matrix whose columns are the eigenvectors corresponding to the non-zero eigenvalues of $\bm{\Pi}$. Then $\bm{\Pi}=\bm{M}\bm{M}^\top$ and $\bm{I}_{n-1} = \bm{M}^\top\bm{M}$. Applying this orthogonal transformation to the random-effects model \eqref{eq: model RE} gives
$\widetilde{\bm{y}} = \bm{M}^{\top}\bm{u}_t + \bm{M}^{\top}\bm{\varepsilon}_t$,
with $\Cov(\widetilde{\bm{y}}) = \sigma_{\varepsilon,t}^{2}\bm{M}^{\top}\exp(t\bmH)\bm{M}$.
In this transformed model, the initialized prediction has been removed, so inference targets predictive structure revealed by training beyond initialization.

The residual log-likelihood function after removing the initialized prediction is
\begin{equation}
\begin{aligned}
    \widetilde{\ell}_R(t,\sigma_{\varepsilon,t}^2) &=  - \frac{1}{2} \log  \det\left[\sigma_{\varepsilon,t}^2 \bm{M}^\top \exp(t\bm{H}) \bm{M}\right] - \frac{1}{2 \sigma_{\varepsilon,t}^2} \widetilde{\bmy}^\top \left(\bm{M}^\top\exp(t\bm{H}) \bm{M} \right)^{-1} \widetilde{\bmy}.
\end{aligned} \label{eq: REML projected}
\end{equation}
Then, testing whether training is statistically necessary can be formulated as the hypothesis testing problem $H_0:t=0$ versus $H_1:t>0$. Under the random-effects representation in \eqref{eq: model RE}, the null hypothesis $t=0$ corresponds to the absence of predictive structure extracted beyond initialization. Following~\citet{lin1997variance}, we obtain the following score test statistic 
\begin{equation*}
    \widetilde{T} = \frac{(n-1)\widetilde{\bmy}^\top  \widetilde{\bm{H}} \widetilde{\bmy}}{\widetilde{\bmy}^\top\widetilde{\bmy}},
\end{equation*}
where $\widetilde{\bm{H}} = \bm{M}^\top\bm{H}\bm{M}$. Under the Gaussian working null model, the $p$-value can be computed using the exact method \citep{davies1980algorithm}.

\subsection{REML-Guided Estimation of the Stopping Time}\label{subsec: REML-guided time}

We next use the random-effects representation in \eqref{eq: model RE} to derive a REML-guided early-stopping rule. The working model contains two unknown quantities, the training time $t$ and the noise variance $\sigma_{\varepsilon,t}^2$. We estimate them by restricted maximum likelihood, with $t$ serving as the variance-component parameter that indexes the training trajectory. This leads to the following restricted log-likelihood:
\begin{equation*}
\begin{aligned}
    \ell_R(t,\sigma_{\varepsilon,t}^2) 
    &= -\frac{n}{2} \log (\sigma_{\varepsilon,t}^2) - \frac{1}{2} \log  \det\left[\exp(t \bm{H})\right] - \frac{1}{2 \sigma_{\varepsilon,t}^2} (\bmy - f_{0}(\bmX))^\top \exp(-t \bm{H}) (\bmy - f_{0}(\bmX)).
\end{aligned} \label{eq: REML}
\end{equation*}

For a fixed $t$, maximizing $\ell_R(t,\sigma_{\varepsilon,t}^2)$ with respect to $\sigma_{\varepsilon,t}^2$ gives 
$\widehat{\sigma}_{\varepsilon,t}^2 = n^{-1} (\bmy - f_{0}(\bmX))^\top \exp(-t \bm{H}) (\bmy - f_{0}(\bmX))$.
Substituting this expression back into the restricted log-likelihood shows that maximizing $ \ell_R(t,\widehat{\sigma}_{\varepsilon,t}^2)$ is equivalent to minimizing
\begin{equation*}
    Q(t) = n \log \left( (\bmy - f_{0}(\bmX))^\top \exp(-t \bm{H}) (\bmy - f_{0}(\bmX)) \right) + \log \det \left[ \exp(t \bm{H}) \right].
\end{equation*}
Let $\{\lambda_k,\bm{v}_k\}_{k=1}^n$ be the eigenvalues and eigenvectors of the fixed training operator $\bm{H}$, sorted in descending order. Define the projection coefficient of residual $\bmy - f_{0}(\bmX)$ onto the $k$-th eigenspace as $c_k=\bm{v}_k^\top (\bmy - f_{0}(\bmX))$. Note that $\bmy - f_{0}(\bmX)=\sum_{k=1}^n c_k\bm{v}_k$. Then, we have
\begin{equation}\label{eq: REML loss}
    Q(t) = n \log \left(\sum_{k=1}^n c_k^2 \exp(-t\lambda_k) \right) + t \sum_{k=1}^n \lambda_k, \quad t \geq 0. 
\end{equation}
Differentiating $Q(t)$ with respect to $t$ and setting $Q'(t)=0$ gives:
\begin{equation*}
    Q'(t) =  - n \frac{\sum_{k=1}^n \lambda_k c_k^2 \exp(-t\lambda_k) }{\sum_{k=1}^n  c_k^2 \exp(-t\lambda_k)} + \sum_{k=1}^n \lambda_k = 0.
\end{equation*}
Rearranging this first-order condition, the REML estimate of the stopping time is obtained as the solution to the following estimating equation
\begin{equation}\label{eq: REML EE}
    \Psi_n(t)=\frac{1}{n}\sum_{k=1}^{n}\Big(\lambda_k - \frac{1}{n}\sum_{j=1}^n \lambda_j\Big) \Big\{c_k^2 \exp(-t\lambda_k) - \frac{1}{n}\sum_{j=1}^n c_j^2 \exp(-t\lambda_j)\Big\} = 0.
\end{equation}

The estimating equation \eqref{eq: REML EE} can also be understood without relying directly on the Gaussian likelihood. Let
$\bm r_0=\bmy-f_0(\bmX)$
denote the initial residual vector for the training sample of size $n$, and let $\bm H
=
\bm V\bm\Lambda\bm V^\top$, $\bm\Lambda=\diag(\lambda_1,\dots,\lambda_n)$,
$\bm V=(\bm v_1,\dots,\bm v_n)$,
be the eigendecomposition of $\bm H$, where
$\lambda_1\geq \lambda_2\geq \cdots \geq \lambda_n\geq 0$. Then
$\exp(-t\bm H)
=
\bm V\exp(-t\bm\Lambda)\bm V^\top$,
$
\exp(-t\bm\Lambda)
=
\diag\{\exp(-t\lambda_1),\dots,\exp(-t\lambda_n)\}$. For a candidate training time $t$, define the vector of eigen-residuals $\mathrm{ER}(t)$ as follows 
\begin{equation*}
\mathrm{ER}(t)
=
\exp(-t\bm\Lambda/2)\bm V^\top\bm r_0,
\end{equation*}
where the $k$-th eigen-residual is
$\mathrm{ER}_k(t)
=
\exp(-t\lambda_k/2)\bm v_k^\top\bm r_0, k=1,\dots,n$.

The REML early stopping time is the point at which the squared eigen-residuals become empirically uncorrelated with their corresponding eigenvalues. Equivalently, it solves the moment condition
\begin{equation*}
    \Psi_n(t)
    =
    \frac{1}{n}\sum_{k=1}^n
    \Big(\lambda_k-\frac{1}{n}\sum_{j=1}^n\lambda_j\Big)
    \Big\{
    \mathrm{ER}_k^2(t)
    -
    \frac{1}{n}\sum_{j=1}^n\mathrm{ER}_j^2(t)
    \Big\}
    =
    0.
\end{equation*}
Thus, the REML stopping time is the balance point at which this Empirical Spectral Covariance (ESC) $\Psi_n(t)$ is zero. This moment-based interpretation does not rely directly on the Gaussian likelihood and motivates the spectral loss-decorrelation view developed in Section \ref{subsec:spectral}.

Next, we establish sufficient conditions for the existence and uniqueness of the empirical REML-guided early stopping time $\widehat{t}^{\mathrm{REML}}_n$, defined as a solution to the empirical estimating equation \eqref{eq: REML EE}.
By construction, $\widehat{t}^{\mathrm{REML}}_n$ is any solution to $Q'(t)=0$.
To characterize the shape of $Q(t)$, define $p_k(t)=c_k^2\exp(-t\lambda_k)/\{\sum_{j=1}^n c_j^2\exp(-t\lambda_j)\}$.
Then, $p_1(t),\ldots,p_n(t)$  form a probability distribution $\mathcal{P}^\lambda_t$ on $\{1,\ldots,n\}$, and straightforward calculation yields
\begin{equation*}
Q''(t) = n\sum_{k=1}^n p_k(t)\Big\{\lambda_k-\sum_{j=1}^n p_j(t)\lambda_j\Big\}^2 = n\,\var_{\mathcal{P}^\lambda_t}(\lambda) \geq 0.
\label{eq: supp Q2}
\end{equation*}
Because $Q''(t)$ is non-negative, $Q(t)$ is convex on $[0,\infty)$. The convexity of $Q(t)$ is illustrated empirically in Supplementary Figure~\ref{fig:supp_Q_convexity}(b). The following proposition gives  sufficient conditions for the existence and uniqueness of $\widehat{t}^{\rm REML}_n$.

\begin{proposition}
\label{prop: existence uniqueness empirical REML}
Suppose that (i) $Q'(0)<0$; (ii) there exist $k,j\in \{1,\cdots,n\}$ such that $\lambda_k\neq \lambda_j$. Assume $\varepsilon_i$ is a continuous random variable for $i = 1,\dots,n$.
Then, with probability one, there exists a unique $\widehat{t}^{\rm REML}_n\in(0,\infty)$ such that
$Q'(\widehat{t}^{\rm REML}_n)=0$.
Moreover, $\widehat{t}^{\rm REML}_n$ is the unique minimizer of $Q(t)$ over $t\in(0,\infty)$. 
\end{proposition}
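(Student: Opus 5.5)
The plan is to use the convexity computation that precedes the statement and upgrade it to strict convexity. Then we pin down the sign of $Q'$ at both ends of $(0,\infty)$ and conclude with the intermediate value theorem. Recall
\[
Q'(t) = \sum_{k=1}^n \lambda_k - n\sum_{k=1}^n p_k(t)\lambda_k
\quad\text{and}\quad
Q''(t)=n\,\var_{\mathcal P^\lambda_t}(\lambda).
\]
Both are continuous in $t$ as long as $\sum_j c_j^2>0$.

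\textbf{Step 1 (all $c_k\neq 0$ almost surely).} Write $c_k=\bm v_k^\top\{f_\star(\bmX)-f_0(\bmX)\}+\bm v_k^\top\bm\varepsilon$. Since $\bm v_k$ is a unit vector, some coordinate $v_{ik}\neq 0$. Because the $\varepsilon_i$ are independent and continuous, $\bm v_k^\top\bm\varepsilon$ is the convolution of an atomless law $v_{ik}\varepsilon_i$ with an independent law. Such a convolution is atomless, so $P(c_k=0)=0$. A union bound over $k=1,\dots,n$ gives $c_k\neq 0$ for all $k$ with probability one. Here we treat $\bmX$ as fixed and $f_0(\bmX)$ as fixed or independent of $\bm\varepsilon$. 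Hence $p_k(t)>0$ for every $k$ and every $t$.

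\textbf{Step 2 (strict convexity).} On the event of Step 1, $\mathcal P^\lambda_t$ has full support on $\{1,\dots,n\}$. By condition (ii), $\lambda$ is not constant on this support, so $\var_{\mathcal P^\lambda_t}(\lambda)>0$ for all $t\ge0$. Thus $Q''>0$ and $Q'$ is strictly increasing on $[0,\infty)$. This already gives at most one zero of $Q'$.

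\textbf{Step 3 (behavior at infinity).} Let $\lambda_{\min}=\lambda_n$ and $S=\{k:\lambda_k=\lambda_{\min}\}$. Since all $c_k\neq0$, as $t\to\infty$ we have
\[
p_k(t)\to c_k^2\Big/\sum_{j\in S}c_j^2 \ \text{ for } k\in S,
\qquad
p_k(t)\to 0 \ \text{ for } k\notin S.
\]
Therefore $\sum_k p_k(t)\lambda_k\to\lambda_{\min}$ and $Q'(t)\to\sum_k\lambda_k-n\lambda_{\min}$. This limit is strictly positive because, by (ii), not all eigenvalues equal $\lambda_{\min}$. Combined with $Q'(0)<0$ from (i), continuity and the intermediate value theorem give some $\widehat t^{\rm REML}_n\in(0,\infty)$ with $Q'(\widehat t^{\rm REML}_n)=0$. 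It is unique by Step 2.

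\textbf{Step 4 (minimizer).} Since $Q'<0$ on $(0,\widehat t^{\rm REML}_n)$ and $Q'>0$ after it, $\widehat t^{\rm REML}_n$ is the unique minimizer of $Q$ over $(0,\infty)$.

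The main obstacle is Step 1: the almost-sure nonvanishing of every $c_k$, which is exactly what makes the variance strictly positive and the limit at infinity equal to $\lambda_{\min}$. Without it, the support of $\mathcal P^\lambda_t$ could sit on a single eigenvalue, or the limit could be a larger eigenvalue and no root need exist. If the independence of $f_0(\bmX)$ from $\bm\varepsilon$ is in doubt, the fallback is to condition on $f_0(\bmX)$. Alternatively, one can require only that $\{k:c_k\neq0\}$ contain two distinct eigenvalues and replace $\lambda_{\min}$ by the smallest eigenvalue over that set, with the sign at infinity then needing a separate check.
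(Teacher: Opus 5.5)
Your proposal is correct and follows essentially the same route as the paper's proof. In both, almost-sure nonvanishing of every $c_k$ gives $Q''(t)=n\,\var_{\mathcal P^\lambda_t}(\lambda)>0$, which yields strict convexity and hence uniqueness; then $Q'(0)<0$ together with $Q'(t)\to n(\bar\lambda-\lambda_n)>0$ gives existence by the intermediate value theorem. Your convolution argument for $P(c_k=0)=0$, which uses independence of the $\varepsilon_i$ and a nonzero coordinate of $\bm v_k$, usefully fills in a step the paper only asserts.
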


\begin{remark}
The uniqueness result exploits the exponential covariance structure of the random effect term
$\bm{u}_t \sim N(\bm{0},\sigma_{\varepsilon,t}^2 ( \exp(t\bm{H}) - \bm{I}))$.
This structure yields the convexity of the REML profile objective and hence the uniqueness of the REML stopping time. In contrast, REML criteria for smoothing-parameter selection in penalized least squares may exhibit multiple local optima \citep{reiss2009smoothing}, as also illustrated by a numerical example in Supplementary Figure \ref{fig:supp_Q_convexity}(a).
\end{remark}

\subsection{Spectral Loss Decorrelation in Training Dynamics}
\label{subsec:spectral}
In this section, we develop a geometric and spectral interpretation of the fixed-operator gradient-flow training dynamics. 
We show that, under the random-effects representation, training progressively reallocates variance from residual noise to a fixed-operator-structured random effect and acts as a coordinate-wise spectral regularization procedure in the fixed-operator eigenbasis.  This interpretation leads to a new  notion of spectral loss decorrelation: the REML estimating equation selects the stopping time at which the optimized spectral losses are empirically decorrelated from the eigenvalues of $\bmH$.  The spectral interpretation also yields a natural effective degrees-of-freedom (edf) measure for the trained model, with each eigendirection of $\bmH$ contributing according to its training-time-dependent activation. 
Together, these results clarify how model complexity evolves during training and provide a statistical rationale for the REML-guided early-stopping criterion.

We first examine how training time reallocates variance between residual noise and the training-induced random effect in the working model. Model \eqref{eq: model RE} uses the random effect $\bm{u}_{t}$ to model the difference $f_\star(\bm{X}) - f_{0}(\bm{X})$.  The variance scale then satisfies $\sigma_{\varepsilon, t}^{2} = n\sigma_{\varepsilon}^{2} / \sum_{k = 1}^{n}\exp(t\lambda_k)$. Under this formulation, the total marginal variance $\sigma^2_{\text{Total}}=\Tr\{\var(\bm{\varepsilon}_{t})\} + \Tr\{\var(\bm{u}_{t})\} = n\sigma_{\varepsilon}^{2}$ remains constant for all $t \geq 0$. In the random-effects model, the covariance of the random effect admits the following spectral decomposition
\begin{equation*}
    \var(\bmu_t) = \sigma_{\varepsilon,t}^2 ( \exp(t\bm{H}) - \bm{I}) = n\sigma_{\varepsilon}^{2}\sum_{k = 1}^{n}(p_{k, t} - p_{0, t})\bm{v}_k\bm{v}_k^{\top},
\end{equation*}
where $\bm{v}_k$ is the eigenvector of $\bm{H}$ corresponding to eigenvalue $\lambda_k$,
$p_{k,t}=\exp(t\lambda_k)/\sum_{j=1}^{n}\exp(t\lambda_j)$ is the softmax weight assigned to
the $k$-th eigendirection of $\bm{H}$, and
$p_{0,t}=\{\sum_{j=1}^{n}\exp(t\lambda_j)\}^{-1}$. The above decomposition indicates that the eigenprojector $\bm{v}_k\bm{v}_k^{\top}$ is prioritized in $\var(\bm{u}_{t})$ according to the softmax value $p_{k, t}$.
As training progresses, the error variance $\Tr\{\var(\bm{\varepsilon}_{t})\} = n^{2}\sigma_{\varepsilon}^{2} / \sum_{k = 1}^{n}\exp(t\lambda_k)$ is monotonically decreasing in $t$, while the variance of the random effect, $\Tr\{\var(\bm{u}_{t})\} = n\sigma_{\varepsilon}^{2} - \Tr\{\var(\bm{\varepsilon}_{t})\}$, is monotonically increasing. Here, $\var(\bm{u}_t)$ represents the signal captured from the initial residual for prediction. The implied signal-to-noise ratio increases over time under the working random effect model. Specifically, the proportion of the variance explained by the random effect is given by 
$$\frac{\Tr\{\var(\bm{u}_{t})\}}{\sigma^2_{\text{Total}}} = 1 - \frac{n}{\sum_{k = 1}^{n}\exp(t\lambda_k)}.$$
Thus, training time $t$ can be viewed as a variance-allocation parameter. As $t$ increases, a larger share of the total marginal variation is attributed to the structured random effect rather than to residual noise.

In addition, the random effect perspective naturally motivates interpreting fixed-operator gradient-flow training as an implicit spectral regularization method. Note that $\bm{v}_{1},\dots, \bm{v}_{n}$ span $\mathbb{R}^n$, the predicted random effect $\widehat{\bm{u}}_{t}$ can be written as $\sum_{k = 1}^{n}\widehat{a}_{k,t}\bm{v}_{k}$, where the coefficients $\{\widehat{a}_{k,t}\}_{k=1}^n$ are characterized by the following proposition, which establishes a geometric interpretation of the gradient-flow training dynamics. This result also shows the converse side of the optimization--inference duality: the BLUP induced by the random-effects model is  the solution to a  spectral regularization problem.

\begin{proposition}
\label{prop: regularization}
For each $t> 0$, the $\rm BLUP$  $\widehat{\bmu}_t$ can be obtained by solving the following spectrally regularized optimization problem
$\widehat{\bmu}_t = \arg\min_{\bmu\in\mathbb{R}^n}\left\{\|\bmy - f_0(\bmX) - \bm{u}\|^2 + \sum_{k=1}^n(\bm{v}_k^\top \bm{u})^2/\{\exp(t\lambda_k) - 1\}\right\}$.
The predicted random effect $\widehat{\bm{u}}_{t}$ can be written as $\sum_{k = 1}^{n}\widehat{a}_{k,t}\bm{v}_{k}$, where
the coefficient vector $\widehat{\bma}_t=(\widehat{a}_{1,t},\dots,\widehat{a}_{n,t})^\top$ can be obtained by solving the following regularized optimization problem 
\begin{equation}
    \widehat{\bm{a}}_{t} = \arg\min_{\bm{a}\in \mathbb{R}^n} \left\{ \|\bmy - f_0(\bmX) - \sum_{k = 1}^{n}a_k\bm{v}_k\|^2 + \sum_{k=1}^n\frac{ a_k^{2}}{\exp(t\lambda_k) - 1} \right\} \label{eq: Regularization}
\end{equation}
with conventions that $1/0=\infty$ and $0\cdot\infty=0$.
In addition, we have $\widehat{a}_{k,t} = c_k \left(1-\exp(-t\lambda_k)\right)$ and the above objective function can be decomposed as
$\sum_{k=1}^n \mathcal{J}_k(a_k;t)$,
where $\mathcal{J}_k(a_k;t)=c_k^2-2c_ka_k+a_k^2/\{1-\exp(-t\lambda_k)\}$,
and $\mathcal{J}_k(\widehat{a}_{k,t};t) = c_k^2 \exp(-t\lambda_k)$.
\end{proposition}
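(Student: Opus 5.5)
The plan is to use the standard characterization of the BLUP as the minimizer of Henderson's penalized least-squares criterion, and then diagonalize everything in the eigenbasis of $\bmH$. Write $\bm r_0=\bmy-f_0(\bmX)$ and $\bm C_t=\exp(t\bmH)-\bmI=\bm V\{\exp(t\bm\Lambda)-\bmI\}\bm V^\top$.

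\emph{Case $\bm C_t$ nonsingular.} This holds exactly when all $\lambda_k>0$, since $t>0$. Here Henderson's mixed-model equations for model \eqref{eq: model RE} say that $\widehat{\bmu}_t$ minimizes
\[
\sigma_{\varepsilon,t}^{-2}\|\bm r_0-\bmu\|^2+\sigma_{\varepsilon,t}^{-2}\bmu^\top\bm C_t^{-1}\bmu .
\]
The common factor $\sigma_{\varepsilon,t}^{-2}$ cancels. Moreover $\bmu^\top\bm C_t^{-1}\bmu=\sum_k(\bm v_k^\top\bmu)^2/\{\exp(t\lambda_k)-1\}$, which gives the first display. As a direct check, the minimizer is $(\bmI+\bm C_t^{-1})^{-1}\bm r_0=\bm C_t(\bm C_t+\bmI)^{-1}\bm r_0=\{\bmI-\exp(-t\bmH)\}\bm r_0$. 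This is the conditional mean $\bm C_t\exp(-t\bmH)\bm r_0$, i.e. the BLUP, consistent with Theorem~\ref{thm: equivalence}.

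\emph{Case $\lambda_k=0$ for some $k$.} Then $\var(\bmu_t)$ is singular, and the random effect has zero variance along $\bm v_k$. Under the conventions $1/0=\infty$ and $0\cdot\infty=0$, the penalty is infinite unless $\bm v_k^\top\bmu=0$, so these coordinates are forced to zero. On the orthogonal complement the argument above applies with the restricted, invertible $\bm C_t$. The BLUP also has zero component along $\bm v_k$, because $\bm C_t\bm v_k=0$. So the two characterizations agree.

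\emph{Reparametrization and separation.} Substitute $\bmu=\sum_k a_k\bm v_k=\bm V\bm a$. Since $\bm V$ is orthogonal, the map $\bm a\mapsto\bm V\bm a$ is a bijection, so minimizing over $\bm a$ is equivalent and yields \eqref{eq: Regularization}. Using $\bm r_0=\sum_k c_k\bm v_k$ and Parseval,
\[
\|\bm r_0-\bm V\bm a\|^2=\sum_k(c_k-a_k)^2 .
\]
Adding the penalty $a_k^2/\{e^{t\lambda_k}-1\}$ gives, coordinatewise,
\[
c_k^2-2c_ka_k+a_k^2\Bigl\{1+\tfrac{1}{e^{t\lambda_k}-1}\Bigr\}=c_k^2-2c_ka_k+\frac{a_k^2}{1-e^{-t\lambda_k}}=\mathcal J_k(a_k;t).
\]
Each $\mathcal J_k$ is a strictly convex quadratic when $\lambda_k>0$. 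Setting its derivative to zero gives $\widehat a_{k,t}=c_k(1-e^{-t\lambda_k})$. Plugging back in,
\[
\mathcal J_k(\widehat a_{k,t};t)=c_k^2-2c_k^2(1-e^{-t\lambda_k})+c_k^2(1-e^{-t\lambda_k})=c_k^2e^{-t\lambda_k}.
\]
When $\lambda_k=0$, the convention forces $a_k=0$, which matches $c_k(1-1)=0$, and $\mathcal J_k=c_k^2=c_k^2e^{0}$.

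The main obstacle is the degenerate eigenvalue case: making the $1/0=\infty$ convention and the BLUP of a singularly distributed random effect line up rigorously. I would handle it by splitting $\mathbb R^n$ into the range and null space of $\bmH$ and applying the nonsingular argument on the range. Everything else is routine algebra.
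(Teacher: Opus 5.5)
Your proof is correct and follows essentially the same route as the paper. Both rewrite the objective in the eigenbasis of $\bm H$ via $\bm u=\sum_k a_k\bm v_k$ and Parseval, decouple it into the scalar quadratics $\mathcal{J}_k$, solve for $\widehat a_{k,t}=c_k(1-e^{-t\lambda_k})$, and substitute back to obtain $c_k^2e^{-t\lambda_k}$. The differences are minor. The paper identifies the minimizer with the BLUP by matching it to the gradient-flow solution and invoking Theorem~\ref{thm: equivalence}, whereas you use Henderson's criterion together with the direct identity $(\bm I+\bm C_t^{-1})^{-1}=\bm I-\exp(-t\bm H)$. You also treat the $\lambda_k=0$ directions explicitly, which the paper handles only through the stated conventions and the remark after the proposition.
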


When $\lambda_k=0$, the corresponding term is interpreted in the limiting sense, that is, the coefficient along that eigendirection is constrained to be zero, so that  $\widehat a_{k,t}=0$ and $\mathcal{J}_k(\widehat a_{k,t};t)=c_k^2$.
Equivalently, the coefficient-space optimization in \eqref{eq: Regularization} may be written only over eigendirections with $\lambda_k>0$.
We also note that, in some NTK settings, all eigendirections are active. For example, for feedforward networks with bias terms, at least two hidden layers, and a continuous, almost-everywhere differentiable, non-polynomial activation function, the infinite-width NTK matrix $\bmH_\infty$ is strictly positive definite for distinct inputs \citep{carvalho2025positivity}.

Proposition \ref{prop: regularization} shows that training induces an eigenspace-specific spectral regularization mechanism in the eigenbasis of the fixed operator $\bmH$. Specifically, the estimated coefficient along the $k$-th eigendirection satisfies $\widehat a_{k,t}=c_k\{1-\exp(-t\lambda_k)\}$, $1\leq k \leq n$. Because the factor $1-\exp(-t\lambda_k)$ increases monotonically with $\lambda_k$, directions associated with larger eigenvalues are learned more rapidly at an exponential rate and contribute more strongly to the fitted model, whereas directions associated with smaller eigenvalues are incorporated more gradually during training and zero-eigenvalue directions are left unactivated in training. Thus, early stopping acts as a coordinate-wise spectral regularization rule, controlling how quickly different eigenspaces of $\bmH$ are activated along the training trajectory. This eigenspace-specific filtering determines both how much each spectral component contributes to the fitted model and how much optimized spectral loss remains after training.

To characterize the spectral evolution of the training trajectory, we introduce the following notion of spectral loss decorrelation.
\begin{definition}[Spectral loss decorrelation]
Let $\mathcal{J}_{k,t}=\min_{a_k}\mathcal{J}_k(a_k;t)=\text{ER}_k^2(t)$ be the optimized spectral loss on the $k$-th eigenspace, equivalently, the squared eigen-residual. We say that spectral loss decorrelation holds at time $t$ if the empirical covariance between $\{\mathcal{J}_{k,t}\}_{k=1}^n$ and $\{\lambda_k\}_{k=1}^n$ is zero, i.e.,
$$
\frac{1}{n} \sum_{k=1}^n 
(\mathcal{J}_{k,t} - \bar{\mathcal{J}}_{t})(\lambda_k - \bar \lambda) = 0,
$$
where $\bar{\mathcal{J}}_{t} = n^{-1} \sum_{k=1}^n \mathcal{J}_{k,t}$ and $\bar \lambda = n^{-1} \sum_{k=1}^n \lambda_k$.
\end{definition}

Since $\mathcal{J}_{k,t} = c_k^2\exp(-t\lambda_k)$ according to Proposition \ref{prop: regularization}, the REML estimating equation in~\eqref{eq: REML EE} can be interpreted as selecting the training time $t$ such that the Empirical Spectral Covariance (ESC) $\Psi_n(t)$ between the optimized spectral loss $\{\mathcal{J}_{k,t}\}_{k = 1}^{n}$ and fixed-operator eigenvalues $\{\lambda_k\}_{k=1}^n$ is zero. In this sense, the REML-guided early stopping rule selects $t$ that achieves  \emph{spectral loss decorrelation}, as shown in Figure \ref{fig: spectral loss balance}(a).

\begin{figure}[!htbp]
    \centering
    \includegraphics[width=0.9\linewidth]{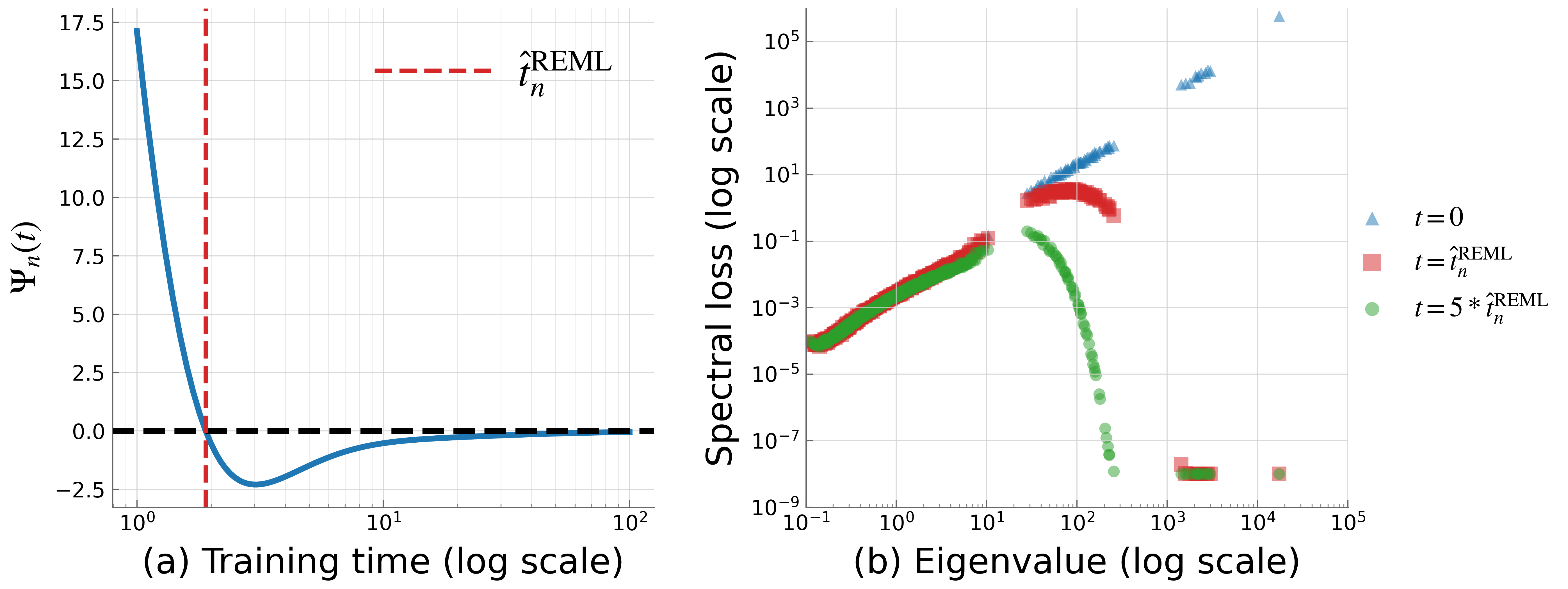}
    \caption{(a) Empirical spectral covariance (ESC) $\Psi_n(t)$ during the training process. The training time is displayed on the $\log_{10}$ scale. (b) Scatter plot of $\{(\lambda_k, \min_{a_k}\mathcal{J}_k(a_k; t))\}_{k=1}^n$  at three training times: $t=0$, $\widehat{t}_n^{\mathrm{REML}}$ and $5\times \widehat{t}_n^{\mathrm{REML}}$. The horizontal axis shows the eigenvalues of the fixed operator $\bmH$, and the vertical axis shows the corresponding optimized spectral losses; both axes are displayed on the $\log_{10}$ scale. For visualization, values on the vertical axis below $10^{-8}$ are truncated at $10^{-8}$.}
    \label{fig: spectral loss balance}
\end{figure}

Figure \ref{fig: spectral loss balance}(b) illustrates how training time reallocates the optimized spectral losses across eigendirections of the fixed operator $\bmH$. At $t<\widehat{t}_n^{\rm REML}$ (e.g. $t=0$), the empirical spectral covariance $\Psi_n(t)>0$, indicating that directions with larger eigenvalues still carry larger spectral losses; this corresponds to an underfitting regime, since even the dominant, more learnable eigendirections have not yet been sufficiently fitted. At $t=\widehat{t}_n^{\rm REML}$, we have $\Psi_n(\widehat{t}_n^{\rm REML})=0$, suggesting that the optimized spectral losses are balanced across eigendirections. This indicates a stopping time at which gradient flow has substantially learned the meaningful signal carried by the leading eigendirections, without yet overfitting the noisier small-eigenvalue directions. At $t>\widehat{t}_n^{\rm REML}$ (e.g., $t=5\times\widehat{t}_n^{\rm REML}$), the empirical spectral covariance $\Psi_n(t)<0$, meaning that the large-eigenvalue directions have already been fitted aggressively while the remaining loss is concentrated in the small-eigenvalue directions; this indicates an overfitting regime in which further training primarily fits noisy directions associated with small eigenvalues.

The spectral regularization perspective also yields a natural measure of model complexity. Let $\bmr_0=\bmy-f_0(\bmX)$. Since $\widehat{\bmu}_t = \left( \bm{I} - \exp\left( - t \bm{H} \right)\right) \bm{r}_0 $, the BLUP $\widehat{\bmu}_t$ is a linear smoother of $\bmr_0$ with smoother matrix $\bmS_t = \bm{I} - \exp\left( - t \bm{H} \right)$. Following \citet{buja1989linear}, 
we define the effective degrees of freedom (edf) of fixed-operator gradient flow at training time $t$ as
\begin{equation*}
    \mathrm{edf}(t) := \Tr(\bmS_t) = \Tr \left( \bm{I} - \exp\left( - t \bm{H} \right) \right) = \sum_{k=1}^n \{1-\exp(-t\lambda_k)\}.
\end{equation*}

This time-varying quantity provides a natural measure of model complexity along the training trajectory. Each fixed-operator eigendirection contributes $1-\exp(-t\lambda_k)$ to $\mathrm{edf}(t)$, so $\mathrm{edf}(t)$ summarizes the effective number of spectral directions activated by training time $t$. In particular, $\mathrm{edf}(0)=0$, and $\mathrm{edf}(t)$ increases monotonically with $t$, indicating that model complexity grows continuously during training. Thus, early stopping controls model complexity by regulating the extent to which the spectral directions of $\bmH$ are used.

\subsection{Prediction Optimality of REML-Guided Early Stopping}
\label{sec:prediction}
In this section, we study the prediction performance of the REML-guided early stopping time under the data-generating model in equation~\eqref{eqn:truemodel}. Let $\widehat{y}_{i,t}=\widehat f^{\bmH}_t(\bmx_i)$ denote the in-sample prediction for observation $i$ at training time $t\geq 0$.
Following the prediction-error framework for evaluating fitted predictors \citep{hastie2009elements}, we consider the following in-sample prediction risk
$$
\mathcal{E}_{n}(t;\bm H) = \frac{1}{n}\sum_{i = 1}^{n}\mathbb{E}[(y_{{\rm new}, i} - \widehat{y}_{i, t})^{2}],
$$
where $y_{{\rm new}, i} = f_{\star}(\bmx_{i}) + \varepsilon_{{\rm new}, i}$,  $\varepsilon_{{\rm new}, i}$ is independent of $\bmy$ and has the same distribution as $\varepsilon_{i}$ for $i = 1,\dots, n$. In the theoretical analysis, we allow $\var(\varepsilon_{i})$ to depend on $n$ and denote it by $\sigma_{\varepsilon, n}^{2}$. This allows the theoretical results to cover the different noise regimes, including those with $\lim_{n\to \infty}\sigma_{\varepsilon, n}^{2} > 0$ (standard-noise regime) and $\lim_{n\to \infty}\sigma_{\varepsilon, n}^{2} = 0$ (small-noise regime) \citep{carroll1984power,li2021multi}. For analytical convenience, we assume that the response and the initial prediction are centered such that $\mathbb{E}[n^{-1}\sum_{i = 1}^{n}y_{i}] = 0$ and $n^{-1}\sum_{i = 1}^{n}f_{0}(\bmx_{i}) = 0$. Next, we establish the oracle property of $\widehat{t}_{n}^{\rm REML}$. Note that
$$
  \bmy - \hat{\bmy}_{t} = \exp\left( - t \bm{H} \right) \left\{\bm{y} - f_0(\bmX) \right\},
$$
where $\hat{\bmy}_{t} = (\widehat{y}_{1, t}, \dots, \widehat{y}_{n, t})^{\top}$. In Section \ref{subsec: REML-guided time}, we show that the REML stopping time $\widehat{t}_{n}^{\rm REML}$ minimizes $Q(t)$, or equivalently, after an 
exponential transformation, minimizes:
$$
\begin{aligned}
    V_n(t) 
    = n^{-1}\exp\{n^{-1}Q(t)\}
    = \underbrace{n^{-1}(\bmy - \hat{\bmy}_{t})^\top \exp(t \bm{H}) (\bmy - \hat{\bmy}_{t})}_{\text{weighted training error}} \quad\times \underbrace{\exp(t\bar{\lambda}).}_{\text{spectral adjustment}} 
\end{aligned}
$$

In this decomposition of the REML criterion $V_n(t)$, the weighted training error term
$n^{-1}(\bmy - \hat{\bmy}_{t})^\top \exp(t \bm{H}) (\bmy - \hat{\bmy}_{t}) = n^{-1}(\bmy - f_{0}(\bmX))^\top \exp(-t \bm{H}) (\bmy - f_{0}(\bmX))$
decreases as $t$ increases and therefore favors longer training. By contrast, the spectral adjustment $\exp(t\bar{\lambda})$ increases with $t$ and counterbalances this decrease through the average eigenvalue of the training operator. The balance between these two factors determines the REML-guided stopping time. Moreover, the specific structure of $V_n(t)$ enables it to approximate the in-sample prediction risk near its minimizer.  Leveraging this observation, we establish the optimality of $\widehat{t}_n^{\mathrm{REML}}$ in terms of the in-sample prediction risk under the following conditions. Let
$b_k=\bmv_k^\top (f_{\star}(\bmX)-f_0(\bmX))$.
The following conditions formalize the requirements used in the risk comparison: attainability of the oracle risk, bounded initialization bias, sufficient spectral spread of the training operator, and concentration of the error terms.

\begin{condition}[Learnability]\label{ass: learnable}
    $\sigma_{\varepsilon, n}^{-2}\inf_{t\geq 0}\mathcal{E}_{n}(t;\bmH) \to 1$.
\end{condition}

\begin{condition}[Bounded projection]\label{ass: bounded projection}
    $n^{-1}\sum_{i=1}^{n}\{f_\star(\bmx_i) - f_0(\bmx_i)\}^2$ is uniformly bounded above.
\end{condition}

\begin{condition}[Existence of small eigenvalue]\label{ass: small eigenvalue}
    There is some $\alpha > 0$ and $\delta_{n} \to 0$ such that $n^{-1}\sum_{k = 1}^{n}1\{\lambda_k \leq \delta_{n}\bar{\lambda}\} \geq \alpha$.
\end{condition}
\begin{condition}[Sub-Gaussian error]\label{ass: subgaussian}
    The normalized error term $\sigma_{\varepsilon, n}^{-1}\varepsilon_{i}$ ($i = 1,\dots, n$) is sub-Gaussian with sub-Gaussian norm uniformly bounded above.
\end{condition}
Condition \ref{ass: learnable} assumes that the optimal prediction risk $\sigma_{\varepsilon, n}^{2}$ is attainable given a sufficiently appropriate training duration. Condition \ref{ass: bounded projection} imposes a constraint on the initialization bias $f_{\star}(\bmX) - f_{0}(\bmX)$. Condition \ref{ass: small eigenvalue} requires that a non-negligible proportion of the eigenvalues $\{\lambda_k\}_{k=1}^n$ are substantially smaller than the average eigenvalue $\bar{\lambda}$, which is typically ensured when the eigenvalues decay sufficiently fast. For instance, suppose $\lambda_k / \lambda_1$ is of order $k^{-\zeta}$ for some $\zeta > 1$. Then $\bar{\lambda} / \lambda_1$ is of order $n^{-1}$, and the number of eigenvalues satisfying $\lambda_k / \lambda_1 \geq \delta_n  \bar{\lambda} / \lambda_1$ is of order $(n/\delta_n)^{1/\zeta}$. This quantity is $o(n)$ if one set $\delta_n = n^{-(\zeta - 1)/2}$. Consequently, $n^{-1}\sum_{k = 1}^{n}1\{\lambda_k \leq \delta_{n}\bar{\lambda}\} \to 1$ and Condition \ref{ass: small eigenvalue} holds for any $\alpha \in (0,1)$ under this choice of $\delta_n$.
Condition \ref{ass: subgaussian} is a standard assumption on the error term to establish concentration results \citep{wainwright2019high}.
We are now ready to state the following theorem.
\begin{theorem}[In-sample prediction optimality of REML-guided early stopping]
\label{thm: test error}
    Suppose $n^{a}\sigma_{\varepsilon,n}^{2} \to \infty$ for some $a \in (0, 1)$, $\sigma_{\varepsilon,n}^{2}$ is uniformly bounded above, and $\lambda_n > 0$. Under the model in \eqref{eqn:truemodel} and Conditions \ref{ass: learnable}--\ref{ass: subgaussian}, we have
    $$
      \frac{\mathcal{E}_{n}(\widehat{t}_{n}^{\rm REML};\bm H)}{\inf_{t\geq 0}\mathcal{E}_{n}(t;\bm H)} \to 1,
    $$
    in probability as $n \to \infty$.
\end{theorem}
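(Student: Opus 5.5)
The approach is to compare the REML criterion $V_n(t)$ with the in-sample risk in the eigenbasis of $\bmH$. With $c_k=b_k+e_k$, where $e_k=\bmv_k^\top\bm\varepsilon$, the residual is $\bmy-\hat\bmy_t=\exp(-t\bmH)\bmr_0$. A bias–variance computation gives
\[
\mathcal{E}_n(t;\bmH)=\sigma_{\varepsilon,n}^2+\frac1n\sum_k b_k^2e^{-2t\lambda_k}+\frac{\sigma_{\varepsilon,n}^2}{n}\sum_k(1-e^{-t\lambda_k})^2 .
\]
Condition~\ref{ass: learnable} says the excess terms $R_n(t)=\mathcal{E}_n(t)-\sigma_{\varepsilon,n}^2$ are $o(\sigma_{\varepsilon,n}^2)$ at the oracle time $t^\star$. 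The statement is therefore equivalent to $R_n(\widehat t_n^{\rm REML})=o_p(\sigma_{\varepsilon,n}^2)$. It suffices to show two things for the (random) REML time: (a) the bias part $n^{-1}\sum_k b_k^2e^{-2\widehat t\lambda_k}$ is $o_p(\sigma_{\varepsilon,n}^2)$, and (b) the variance part $\mathrm{edf}$-type quantity $n^{-1}\sum_k(1-e^{-\widehat t\lambda_k})^2$ is $o_p(1)$. Since $\lambda_n>0$ and $Q'(0)<0$ with high probability, Proposition~\ref{prop: existence uniqueness empirical REML} makes $\widehat t$ well defined.

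The key step is to work with $\log V_n(t)=\log\{n^{-1}\sum_k c_k^2e^{-t\lambda_k}\}+t\bar\lambda$ and to show that, uniformly on the relevant range of $t$, $n^{-1}\sum_kc_k^2e^{-t\lambda_k}$ concentrates around its mean $m_n(t)=n^{-1}\sum_k b_k^2e^{-t\lambda_k}+\sigma_{\varepsilon,n}^2 n^{-1}\sum_k e^{-t\lambda_k}$. Because the weights $e^{-t\lambda_k}\in(0,1]$ are monotone in $t$, this holds with relative error $O_p(n^{-1/2}\log n)$ times $\sigma_{\varepsilon,n}^2$. I would prove it using Hanson–Wright for the quadratic form $\sum_k e_k^2w_k$ and a sub-Gaussian bound for the cross term $\sum_k b_ke_kw_k$ (Conditions~\ref{ass: bounded projection} and \ref{ass: subgaussian}), plus a chaining/monotonicity argument over a grid in $t$. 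The rate condition $n^a\sigma_{\varepsilon,n}^2\to\infty$ ensures that the fluctuations, of order $n^{-1/2}\sigma_{\varepsilon,n}$ for the cross term, are $o(\sigma_{\varepsilon,n}^2)$. Comparing $V_n(\widehat t)\le V_n(t^\star)$ then reduces to the deterministic inequality $\log m_n(\widehat t)+\widehat t\bar\lambda\le \log m_n(t^\star)+t^\star\bar\lambda+o_p(1)$.

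Next I would analyze this deterministic criterion. At $t^\star$ the bias is $o(\sigma^2)$ and $n^{-1}\sum_k e^{-t^\star\lambda_k}=1-o(1)$ (from the variance part of $R_n$). Hence $\log m_n(t^\star)=\log\sigma^2_{\varepsilon,n}+o(1)$ and also $t^\star\bar\lambda=o(1)$, the latter by Jensen since $n^{-1}\sum_k(1-e^{-t\lambda_k})\ge 1-e^{-t\bar\lambda}$. For (b), Condition~\ref{ass: small eigenvalue} gives $m_n(t)\ge\sigma^2\alpha e^{-t\delta_n\bar\lambda}$. The REML inequality then forces $\widehat t\bar\lambda$ to stay bounded, and in fact $o_p(1)$, because the left side is at least $\log\sigma^2+\log\{n^{-1}\sum_k e^{-\widehat t\lambda_k}\}+\widehat t\bar\lambda\ge\log\sigma^2$ by Jensen, with equality only asymptotically. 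Quantitatively, $\log(n^{-1}\sum_k e^{-t\lambda_k})+t\bar\lambda$ is a nonnegative convex "Jensen gap." Using the spread from Condition~\ref{ass: small eigenvalue}, I would show that this gap being $o(1)$ forces $n^{-1}\sum_k(1-e^{-t\lambda_k})^2=o(1)$. Adding the nonnegative bias contribution then yields (a): the gap bound gives $\log\{1+\mathrm{bias}/(\sigma^2 n^{-1}\sum e^{-\widehat t\lambda_k})\}=o_p(1)$, so the bias is $o_p(\sigma^2)$.

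The main obstacle is the Jensen-gap step: converting $\log\{n^{-1}\sum_k e^{-t\lambda_k}\}+t\bar\lambda=o(1)$ into $n^{-1}\sum_k(1-e^{-t\lambda_k})^2=o(1)$. A second-order Jensen bound only controls the variance of $e^{-t\lambda_k}$ across $k$, not its mean deviation. I would first use the small-eigenvalue mass: on the $\alpha$-fraction where $t\lambda_k\le t\delta_n\bar\lambda$, we have $e^{-t\lambda_k}\approx1$ as long as $t\bar\lambda=O(1)$. So the variance of $e^{-t\lambda_k}$ is at least about $\alpha(1-\alpha)(1-\overline{e^{-t\lambda}})^2$, which turns control of the variance into control of the mean. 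I would also check that the uniform concentration remains valid on the random range where $\widehat t\bar\lambda$ could be large, for instance by first proving $\widehat t\bar\lambda=O_p(1)$ with the crude bound above.
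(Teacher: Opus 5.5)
Your proof breaks at the claim that the oracle time satisfies $t^\star\bar\lambda=o(1)$. The Jensen inequality you cite runs the other way. Convexity of $x\mapsto e^{-x}$ gives $n^{-1}\sum_k e^{-t\lambda_k}\ge e^{-t\bar\lambda}$, i.e.\ $n^{-1}\sum_k(1-e^{-t\lambda_k})\le 1-e^{-t\bar\lambda}$. So a small variance term at $t^\star$ does not control $t^\star\bar\lambda$: a few very large eigenvalues inflate $\bar\lambda$ without adding variance.

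You cannot drop this step. Write $s(t)=n^{-1}\sum_k e^{-t\lambda_k}$. Your comparison needs some $\tau$ with $\log m_n(\tau)+\tau\bar\lambda\le\log\sigma_{\varepsilon,n}^2+o(1)$. Since $s(\tau)e^{\tau\bar\lambda}\ge1$, with limit $1$ only if $\tau\bar\lambda\to0$ (Lemma~\ref{lemma: t_star control}), you need near-oracle risk at some $\tau$ with $\tau\bar\lambda\to0$. Conditions~1--4 do not give this, and without it the conclusion fails, as the following example shows.
\begin{itemize}
\item Take $\sigma_{\varepsilon,n}^2=1$, Gaussian errors and $f_0=0$.
\item Let $\bmH$ have eigenvalue $n$ on $\mathbf 1/\sqrt n$, eigenvalue $1$ on $\lfloor\sqrt n\rfloor$ further orthonormal directions, and eigenvalue $1/n$ on the rest.
\item Let $f_\star(\bmX)$ lie in the span of the eigenvalue-$1$ directions, with $n^{-1}\|f_\star(\bmX)\|^2=1$. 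It is mean zero because it is orthogonal to $\mathbf 1$.
\end{itemize}
Then $\bar\lambda=1+O(n^{-1/2})$ and every hypothesis holds; Condition~3 holds with $\delta_n=1/n$, and $\mathcal{E}_n(\log n;\bmH)\to1$. But the bias $e^{-2t}$ forces every near-oracle time to have $t\to\infty$. For $t\ge 3\log n/n$, the mean of $\lambda_k$ under $p_k(t)\propto c_k^2e^{-t\lambda_k}$ is below $1<\bar\lambda$ with high probability, so $Q'(t)>0$ there. Hence $\widehat t_n^{\rm REML}=O_p(\log n/n)$, the signal is essentially not fitted, and $\mathcal{E}_n(\widehat t_n^{\rm REML};\bmH)\to2$ while $\inf_t\mathcal{E}_n(t;\bmH)\to1$.

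There is also a fixable mismatch. $m_n(t^\star)$ contains $n^{-1}\sum_k b_k^2e^{-t^\star\lambda_k}$, whereas Condition~1 only controls $n^{-1}\sum_k b_k^2e^{-2t^\star\lambda_k}$. The paper pairs $V_n(t)$ with $\mathcal{E}_n(t/2;\bmH)$ and compares against $t_{\rm opt}=2t^\star$; you need to compare with $V_n(2t^\star)$ in the same way.

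The paper's own proof rests on the same claim, $t_{\rm opt}\bar\lambda\to0$. It derives it from a display bounding $\mu_2(t_{\rm opt})$ below by the small-eigenvalue fraction times $(1-e^{-t_{\rm opt}\bar\lambda/2})^2$. That bound is invalid, because the indices with $\lambda_k\le\delta_n\bar\lambda$ are exactly those where $(1-e^{-t\lambda_k/2})^2$ is small; the example above violates it. So an extra hypothesis is genuinely needed, for instance that $\inf_t\mathcal{E}_n(t;\bmH)$ is attained up to $o(\sigma_{\varepsilon,n}^2)$ at some $t$ with $t\bar\lambda\to0$.

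Granting that hypothesis, the rest of your plan works and differs usefully from the paper's:
\begin{itemize}
\item Log-scale relative concentration, made uniform by monotonicity in $t$, replaces the multiplicative comparison $\nu(t)$ and Lemma~\ref{lemma: lip cont}.
\item Convexity of $Q$ gives $\widehat t\bar\lambda=O_p(1)$.
\item Your second-order Jensen bound $\var_k(e^{-t\lambda_k})\le 2\{\log s(t)+t\bar\lambda\}$ follows from $-(\log w)''\ge1$ on $(0,1]$. Combined with the $\alpha$-fraction of weights near one, it gives $s(\widehat t)\to1$, hence $n^{-1}\sum_k(1-e^{-\widehat t\lambda_k})^2\le 1-s(\widehat t)\to0$.
\end{itemize}
Finally, the hypotheses do not imply $Q'(0)<0$ with high probability, but you do not need it if $\widehat t$ denotes the minimizer of $Q$ on $[0,\infty)$.
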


Theorem~\ref{thm: test error} establishes oracle optimality for fixed-design in-sample prediction. We next state the corresponding random-design result. Let
\begin{equation*}
\mathcal{E}_{n}^{*}(t;\bm H)
=
\mathbb{E}\Big[
\Big\{
y_{\rm new}-\widehat f^{\bmH}_t(\bmx_{\rm new})
\Big\}^{2}
\Big]
\end{equation*}
denote the out-of-sample prediction risk, where the expectation is taken over both the training data and an independent test observation and $\{\bm{x}_{i}\}_{i=1}^{n}$ is assumed to be i.i.d. Here 
$y_{\rm new}=f_{\star}(\bmx_{\rm new})+\varepsilon_{\rm new}$, and
$\bmx_{\rm new}$ and $\varepsilon_{\rm new}$ are independent of $\bmy$ and $\bmX$ and have the same
distributions as $\bmx_i$ and $\varepsilon_i$, respectively.

\begin{corollary}[Out-of-sample prediction optimality of REML-guided early stopping]
\label{cor:out_sample_optimality}
Assume that the conditions of Theorem~\ref{thm: test error} hold and
$\sqrt{n}\sigma_{\varepsilon,n}^{2}\to\infty$. In addition, assume that $h(\cdot,\cdot)$ is a bounded
Mercer kernel
\citep{mercer1909functions,scholkopf2002learning}, $\mathbb{E}\{f_0(\bmx)^4\}<\infty$, and the Condition \ref{ass: NTK spectral} in the Supplementary Materials
hold. Then, the REML-guided stopping time $\widehat{t}_{n}^{\rm REML}$ is asymptotically optimal
for out-of-sample prediction:
\begin{equation*}
\frac{
\mathcal{E}_{n}^{*}(\widehat{t}_{n}^{\rm REML};\bm H)
}{
\inf_{t\geq 0}\mathcal{E}_{n}^{*}(t;\bm H)
}
\to 1,
\end{equation*}
in probability as $n\to\infty$.
\end{corollary}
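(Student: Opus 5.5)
The plan is to reduce the out-of-sample statement to the in-sample oracle result of Theorem~\ref{thm: test error}. The bridge is a uniform-in-$t$ comparison between the random-design risk $\mathcal{E}_{n}^{*}(t;\bmH)$ and a suitably averaged fixed-design risk. First I would write the out-of-sample predictor through the cross-operator representation \eqref{eq: fixed operator estimator}:
\[
\widehat f_t^{\bmH}(\bmx)-f_0(\bmx)=h(\bmx,\bmX)\bmH^{-1}\bmS_t\bm r_0 ,
\]
which uses $\lambda_n>0$ so that $\bmH^\dagger=\bmH^{-1}$. Then I would decompose
\[
\mathcal{E}_{n}^{*}(t;\bmH)=\sigma_{\varepsilon,n}^2+\mathbb{E}\bigl[\{f_\star(\bmx_{\rm new})-\widehat f_t^{\bmH}(\bmx_{\rm new})\}^2\bigr],
\]
and similarly $\mathcal{E}_{n}(t;\bmH)=\sigma_{\varepsilon,n}^2+n^{-1}\mathbb{E}\|f_\star(\bmX)-\hat\bmy_t\|^2$. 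Since Condition~\ref{ass: learnable} forces the excess in-sample risk at the oracle time to be $o(\sigma_{\varepsilon,n}^2)$, it suffices to show two things. The excess out-of-sample risk at the relevant times should be $o(\sigma_{\varepsilon,n}^2)$, and the oracle out-of-sample risk is at least $\sigma_{\varepsilon,n}^2$, which is trivial. The goal is therefore the one-sided statement
\[
\mathbb{E}_{\bmx_{\rm new}}\{f_\star(\bmx_{\rm new})-\widehat f_{\widehat t}^{\bmH}(\bmx_{\rm new})\}^2=o_p(\sigma_{\varepsilon,n}^2),
\]
where $\widehat t=\widehat t_n^{\rm REML}$.

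Second, I would control the in-sample excess error at $\widehat t$. The proof of Theorem~\ref{thm: test error} gives $n^{-1}\|f_\star(\bmX)-\hat\bmy_{\widehat t}\|^2=o_p(\sigma_{\varepsilon,n}^2)$; if only the ratio form is available, I would rederive it from that argument. Next I would transfer this empirical $L^2(P_n)$ error to population $L^2(P)$ error by a uniform law of large numbers over the relevant function class. In the spectral picture, Mercer's theorem lets me write $h(\bmx,\bmX)\bmH^{-1}\bmv_k$ as the Nyström extension of the $k$-th eigenvector. The function $\widehat f_t^{\bmH}-f_0$ lies in the RKHS of $h$, and its norm satisfies
\[
\|\widehat f_t^{\bmH}-f_0\|_{\mathcal H}^2=\sum_k \lambda_k^{-1}c_k^2\{1-e^{-t\lambda_k}\}^2\le t\sum_k c_k^2\,\{1-e^{-t\lambda_k}\}.
\]
I would bound $t$ at $\widehat t$ by a polynomial in $n$ using the REML equation and Condition~\ref{ass: small eigenvalue}. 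Combining this norm control with boundedness of $h$ and a Rademacher-complexity or local-complexity bound for RKHS balls, I would get
\[
\sup_{\|g\|_{\mathcal H}\le R}\bigl|\|g\|_{L^2(P)}^2-\|g\|_{L^2(P_n)}^2\bigr|=O_p(R^2 n^{-1/2}).
\]
The term $f_\star-f_0$ would be handled by the plain LLN, using $\mathbb{E} f_0^4<\infty$ and Condition~\ref{ass: bounded projection}. The rate $n^{-1/2}$ is exactly where the extra assumption $\sqrt n\sigma_{\varepsilon,n}^2\to\infty$ enters, making the fluctuation $o(\sigma_{\varepsilon,n}^2)$.

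Third, I would deal with the expectation outside: $\mathcal{E}_n^{*}$ averages over the training data, whereas the above gives in-probability statements. Boundedness of $h$, sub-Gaussian errors (Condition~\ref{ass: subgaussian}), and $\mathbb{E} f_0^4<\infty$ give uniform integrability of the squared prediction error. With that, convergence in probability upgrades to convergence of the ratio. I would invoke Condition~\ref{ass: NTK spectral} from the Supplement to guarantee that the empirical eigenstructure of $\bmH/n$ tracks the Mercer spectrum. That tracking is what allows the population oracle $\inf_t\mathcal{E}_n^{*}(t;\bmH)$ to be related to the in-sample oracle.

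The main obstacle is the second step. The RKHS norm of $\widehat f_{\widehat t}^{\bmH}$ can blow up because of small $\lambda_k$, so a crude uniform bound over RKHS balls of radius depending on $\widehat t$ may not be $o(\sigma_{\varepsilon,n}^2)$. I would first try a localized bound restricted to the top eigendirections, using the supplementary spectral condition, and treat the tail directions separately through the factor $1-e^{-\widehat t\lambda_k}\approx \widehat t\lambda_k$.
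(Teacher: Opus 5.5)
Your overall reduction matches the paper's. The denominator is handled by the trivial bound $\inf_{t\ge0}\mathcal{E}_n^*(t;\bmH)\ge\sigma_{\varepsilon,n}^2$, and Theorem~\ref{thm: test error} gives in-sample risk $\sigma_{\varepsilon,n}^2\{1+o_p(1)\}$ at $\widehat t_n^{\rm REML}$. The fixed-to-random-design gap is then controlled by a uniform $O_p(n^{-1/2})$ empirical-process bound over an RKHS ball, which $\sqrt n\sigma_{\varepsilon,n}^2\to\infty$ absorbs.

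The gap is the step you flag yourself. That bound is only useful if the RKHS radius of $\widehat f^{\bmH}_{\widehat t}-f_0$ is $O_p(1)$, and neither of your ingredients delivers this. A polynomial-in-$n$ bound on $\widehat t_n^{\rm REML}$ is far too weak; even the proof of Theorem~\ref{thm: test error} only gives $\widehat t_n^{\rm REML}\bar\lambda\to0$. What is needed is the rate $\widehat t_n^{\rm REML}=O_p(n^{-1/2})$, which the paper proves as a separate lemma from Condition~\ref{ass: NTK spectral}:
\begin{itemize}
\item Via the spectral approximation $\lambda_k/n\approx\mu_k$ and eigenspace perturbation, the signal is essentially fully fitted by $t=n^{-1/2}$, i.e.\ $n^{-1}\sum_kb_k^2e^{-\lambda_k/\sqrt n}=O_p(n^{-1})$.
\item For any $\tau_n$ with $\sqrt n\tau_n\to\infty$, the spectral adjustment $e^{t\bar\lambda}$ acting on the bulk of tiny eigenvalues forces $\mathbb{E}V_n(\tau_n)-\mathbb{E}V_n(n^{-1/2})\gtrsim\sigma_{\varepsilon,n}^2\tau_n\Tr(T)$.
\item Concentration and convexity of $V_n$ then give $\widehat t_n^{\rm REML}\le\tau_n$ with probability tending to one.
\end{itemize}

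Second, your norm inequality $\|\widehat f_t^{\bmH}-f_0\|_{\mathcal H}^2\le t\sum_kc_k^2\{1-e^{-t\lambda_k}\}$ is too lossy on the signal, even with the correct rate for $\widehat t_n^{\rm REML}$. Write $c_k=b_k+\bmv_k^\top\bm\varepsilon$. The signal part of your bound is generically of order $t\sum_kb_k^2\asymp tn$, hence of order $\sqrt n$ at $t\asymp n^{-1/2}$. Your deviation bound $R^2n^{-1/2}$ is then $O(1)$ rather than $o(\sigma_{\varepsilon,n}^2)$.

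The paper instead splits signal from noise:
\begin{itemize}
\item The signal part is bounded by $\sum_kb_k^2/\lambda_k\le\|f_\star-f_0\|_{\mathcal H}^2=\sum_k\beta_k^2/\mu_k<\infty$. This uses Bessel's inequality for the orthonormal Nystr\"om system $\lambda_k^{-1/2}h(\cdot,\bmX)\bmv_k$ in $\mathcal H$, and is exactly where the smoothness assumption in Condition~\ref{ass: NTK spectral} enters.
\item The noise part has expectation at most $\sigma_{\varepsilon,n}^2\sum_k\lambda_k^{-1}(1-e^{-t\lambda_k})^2\le\sigma_{\varepsilon,n}^2t^2n\bar\lambda$, which is $O_p(1)$ for $t=O_p(n^{-1/2})$.
\end{itemize}

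So Condition~\ref{ass: NTK spectral} is not needed to relate the population and in-sample oracles, as you suggest; the trivial lower bound already handles that. Its role is to supply the $n^{-1/2}$ stopping-time rate and the RKHS smoothness of $f_\star-f_0$. Together these place $\widehat f^{\bmH}_{\widehat t}$ in a fixed Donsker class with probability close to one.
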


Proofs of Theorem~\ref{thm: test error} and Corollary~\ref{cor:out_sample_optimality}
are provided in the Supplementary Materials. Together, these results show that, for a specified fixed training operator $\bmH$, the REML early stopping time $\widehat{t}_{n}^{\rm REML}$ asymptotically attains the same prediction risk as the oracle stopping time along the corresponding fixed-operator gradient-flow trajectory, both for fixed-design in-sample prediction and, under additional regularity conditions, for random-design out-of-sample prediction.

Notably, although $\widehat{t}_{n}^{\rm REML}$ is motivated by the Gaussian likelihood,
Theorem~\ref{thm: test error} and Corollary~\ref{cor:out_sample_optimality} do not
require the errors $\varepsilon_i$ to be Gaussian. The results also allow the error
variance $\sigma_{\varepsilon,n}^{2}$ either to remain bounded away from zero, as in
standard regression settings, or to vanish as $n\to\infty$. The latter small-noise
regime is useful for describing settings in which the response is nearly deterministic
given the features \citep{li2021multi}. In this regime, the oracle risks in the denominators of
$\mathcal{E}_{n}(\widehat{t}_{n}^{\rm REML};\bmH)/\inf_{t\geq 0}\mathcal{E}_{n}(t;\bmH)$ and
$\mathcal{E}_{n}^{*}(\widehat{t}_{n}^{\rm REML};\bmH)/\inf_{t\geq 0}\mathcal{E}_{n}^{*}(t;\bmH)$
may converge to zero, while the ratios themselves still converge to one.

\begin{remark}
In the NTK settings, the results in Sections \ref{sec:test}--\ref{sec:prediction} apply by setting $\bmH=\bmH_\infty$, where $\bmH_\infty$ is the infinite-width NTK matrix on the training sample. The score test, empirical spectral covariance, effective degrees of freedom, and REML-guided stopping time are then defined with respect to this NTK operator. Thus, NTK gradient flow is an important special case of the fixed-operator random-effects inference framework developed above, and it provides the basis for the numerical experiments and UK Biobank application below.
\end{remark}

\section{Numerical Experiments}
\label{sec:numerical}

\subsection{Type I Error and Power of the Proposed Score Test}

We evaluate the performance of the proposed score test in terms of both Type I error control and statistical power. For each training sample size $n_{\text{train}}\in\{100, 200, 300, 400, 500\}$, we generate $1{,}000$ Monte Carlo replications. In each replication, we generate the features $\bmX\in \mathbb{R}^{n_\text{train}\times p}$ with $p=10$, where each feature is independently drawn from the standard normal distribution $N(0,1)$. Under the null hypothesis, we generate the response $y=\varepsilon\sim N(0,0.25)$. Under the alternative hypothesis, we generate $y=f_\star(\bmx)+\varepsilon$ with $f_\star(\bmx)=0.25 x_1+0.15 x_2^2 + 0.1 x_3x_4 + 0.2 \sin(x_5) + 0.15 \cos(x_6)\sin(x_7) + 0.05 x_8 x_9 x_{10}$ and $\varepsilon\sim N(0, 0.25)$. The significance level is set to $\alpha=0.05$. We then perform the score test by computing the empirical NTK matrix using a two-layer neural network with width $w=500$. We report the empirical Type I error rates and power in Supplementary Table \ref{tab: type1_power}. Overall, the proposed projected NTK-REML score test maintains nominal Type I error control across all simulated training sample sizes, and its power approaches one as the sample size increases.

\subsection{REML Provides a Principled Early-stopping Time}
\label{sec: simu early stop}

In this section, we conduct numerical experiments to evaluate the early-stopping time estimated by REML across three cases. In all cases, the predictor $\bmx=(x_1,\dots,x_{10})^\top\in\mathbb{R}^{10}$ is generated with independent features $x_j\sim N(0,1)$. The training and test sample sizes are $n_{\mathrm{train}}=1{,}000$ and $n_{\mathrm{test}}=100$, respectively. The response is generated as $y = f_\star(\bmx) + \varepsilon$ with $\varepsilon\sim N (0,0.25)$. We consider the following three ground truth functions: (1) Case 1: $f_\star(\bmx) = 0.1 x_1 + 0.16\tanh(x_2) + 0.2\sin(x_3) + 0.12x_4 + 0.06x_5^2 + 0.01e^{x_6} + 0.2\cos(x_7) + 0.1|x_8| + 0.08x_9 + 0.14\sin(x_{10})$; (2) Case 2: $f_\star(\bmx) =  0.25x_1 + 0.15x_2^2 + 0.1x_3x_4 + 0.2\sin(x_5) + 0.15\cos(x_6)\sin(x_7) + 0.05x_8x_9x_{10}$; (3) Case 3: $f_\star(\bmx) = 2\cos\!\big(\sin(x_1x_2+x_3x_4x_5) + \sin(x_3x_4x_5) + 2(\sin(x_6)\sin(x_7)+\sin(x_8)\sin(x_9)\sin(x_{10})\big)$. 

For each case, we train a fully connected neural network with two hidden layers of width $w=1{,}000$ and ReLU activation using full-batch gradient descent with learning rate $\eta=10^{-2}$. We compare the training dynamics of gradient descent with the corresponding NTK gradient flow solution, and evaluate performance using mean squared error (MSE) on both training and test datasets. To quantify performance, we consider two metrics: (1) test error; (2) computational time. The results are shown in Figure~\ref{fig: simu REML time}.

\begin{figure}[!htbp]
    \centering
    \includegraphics[width=1\linewidth]{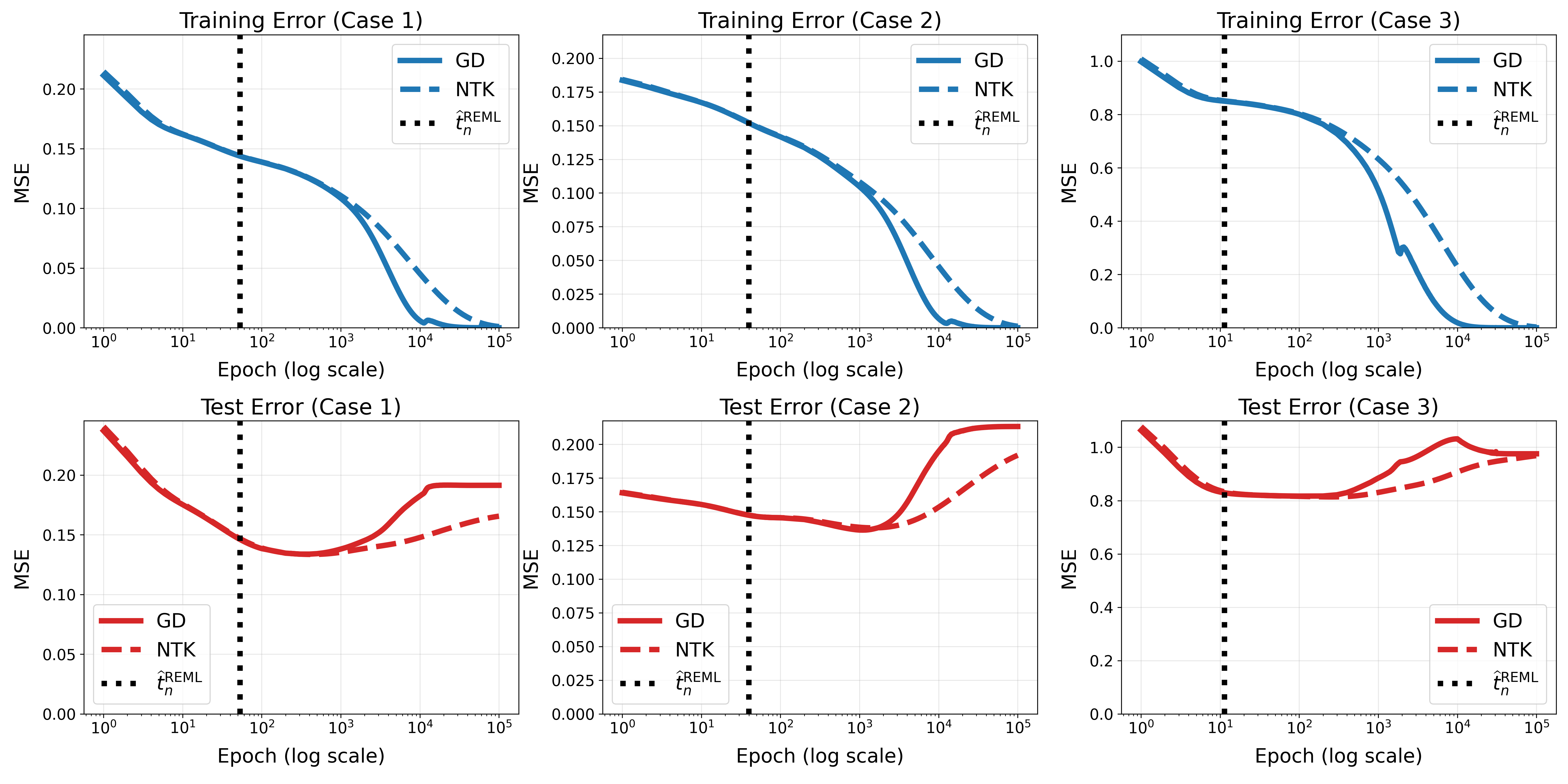}
    \caption{Comparison of gradient descent and NTK gradient flow in numerical experiments. The vertical dotted line $\widehat{t}_n^{\rm REML}$ represents the  REML early-stopping time.}
    \label{fig: simu REML time}
\end{figure}

Figure~\ref{fig: simu REML time} shows that,  in all three cases, the  REML-guided early stopping time occurs near the onset of the test-error plateau, where further training yields diminishing improvements. In addition, the gradient descent trajectories closely track the corresponding NTK gradient flow trajectories for both training and test errors in all three cases, especially before $10^3$ epochs. In Case 1, REML selects $\widehat{t}_n^{\mathrm{REML}}=104$ as the early stopping time, whereas the oracle stopping time is $3.83$ times longer. In the more complex Cases 2 and 3, the REML early stopping times are $\widehat{t}_n^{\mathrm{REML}}=124$ and $27$ epochs, respectively, while the oracle stopping epochs are approximately $8.90$ and $3.74$ times longer, respectively. The edf values for Cases 1--3 are $26.13$, $28.31$, and $9.53$, respectively. To characterize the dominant NTK eigenspaces, we select the leading NTK eigenvalues using a cumulative-eigenvalue criterion, as shown in Supplementary Figure \ref{fig: scree_plot}. Across all three cases, this criterion identifies a small set of leading eigenvalues, reflecting the rapid decay of the NTK spectrum and the concentration of training dynamics in low-dimensional spectral components.

We also compare the REML stopping time with the validation-based stopping time by splitting the $1{,}000$ training samples into training and validation subsets with a $2{:}1$ ratio \citep{prechelt1998early}. Validation-based early stopping requires evaluating validation loss over a checkpoint grid, whereas REML estimates the stopping time directly from the fixed training operator constructed at initialization. Compared with validation-based early stopping, REML achieves comparable test errors across all three cases, with test errors of $0.1384$, $0.1456$, and $0.8207$ compared with $0.1391$, $0.1494$, and $0.8274$ for validation. REML also substantially reduces computational time, requiring only $79.45$, $82.49$, and $66.48$ seconds, compared with $9{,}812.24$, $9{,}938.66$, and $9{,}942.61$ seconds for validation.

We further conduct numerical experiments using a Regression Transformer model \citep{born2023regression} and obtain similar results: the REML-guided early-stopping criterion achieves comparable test error with substantially shorter computational time than the validation-based method, as shown in Supplementary Section \ref{sec: supp simu}. All computations were performed on a Dell Pro Max 16 workstation equipped with an Intel Core Ultra 9 285H CPU, Intel Arc integrated GPU, and 32 GB RAM.

\section{Application to UK Biobank Proteomics Data}
\label{sec:UKB}
In this section, we evaluate the proposed random-effects model-based testing procedure and the REML-guided early stopping rule using data from the UK Biobank Pharma Proteomics Project \citep{sun2023plasma}. We analyzed protein measurements and trait values from $42{,}054$ independent individuals. We trained neural networks to predict four quantitative clinical traits from protein measurements: low-density lipoprotein (LDL), hemoglobin A1c (HbA1c), systolic blood pressure (SBP), and body mass index (BMI).

For each clinical trait, we selected the 20 proteins with the largest absolute marginal correlations with the trait based on a subsample of $2{,}000$. To avoid double-dipping, this screening subsample was excluded from all subsequent analyses. This marginal-screening step is a dimension-reduction procedure commonly used in high-dimensional prediction settings \citep{fan2008sure}. The selected proteins were then used as predictors in the neural network models. The resulting data were randomly split into training and test sets with a ratio of $2{:}1$. We used a fully connected neural network with one hidden layer of width $w = 4{,}000$ and ReLU activation.
We first assessed whether training a neural network provided predictive benefit using the proposed random-effects model-based test. To reduce computational cost, the test was conducted on a random subsample of size $n_{\rm sub}=4{,}000$. The resulting $p$-values for all four traits are effectively zero, indicating strong evidence that neural network training improves predictive performance in all cases.

\begin{figure}[!htbp] \centering \includegraphics[width=1.0\linewidth]{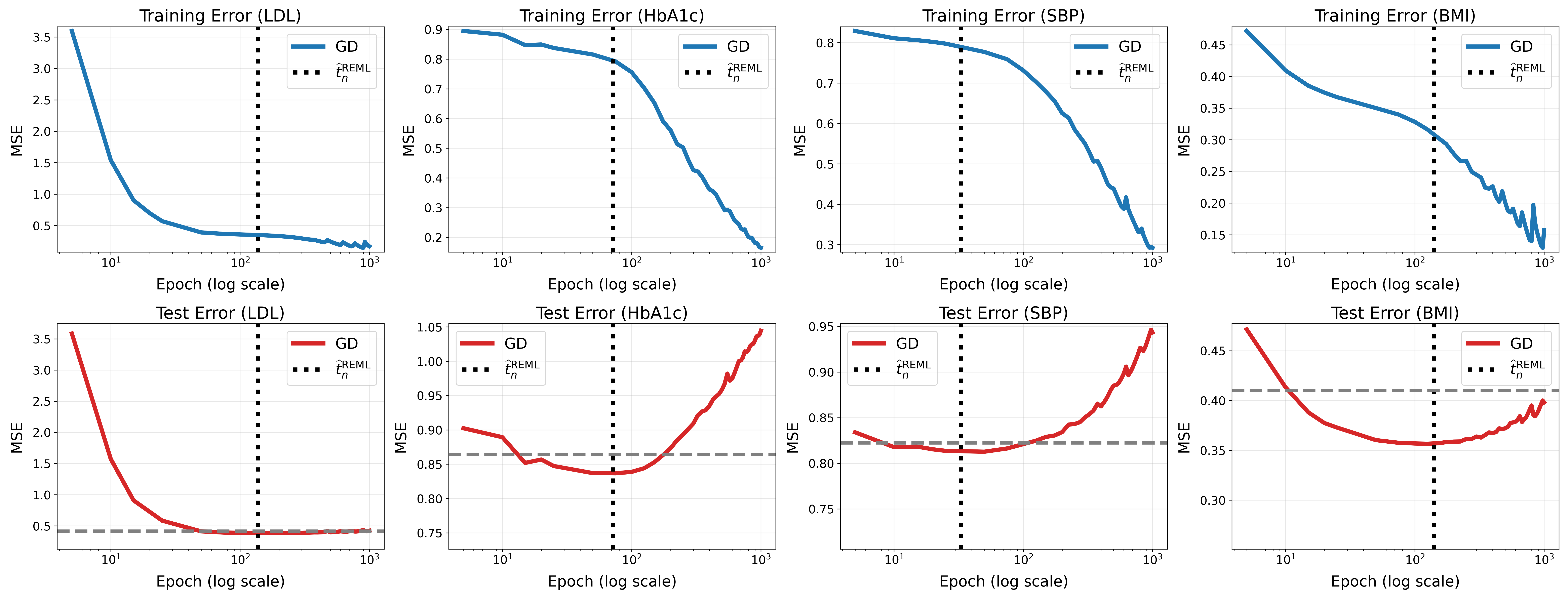} \caption{Training and test MSE curves for LDL, HbA1c, SBP, and BMI. The black dotted line indicates the REML-guided early stopping time. The grey dashed line indicates the test error of the linear regression as a benchmark.} \label{fig: UKBB-PPP REML time} \end{figure}

The neural network was trained using full-batch gradient descent with learning rate $\eta=10^{-2}$. The training and test errors are summarized in Figure~\ref{fig: UKBB-PPP REML time}. The errors reported in Figure~\ref{fig: UKBB-PPP REML time} are the MSEs normalized by the variance of the corresponding trait. For all four traits, the training loss decreases monotonically as training proceeds, while the test loss first decreases and then increases, reflecting overfitting. The REML-guided stopping time suggests training the neural networks for $138$, $72$, $33$,  and $140$ gradient descent iterations when modeling LDL, HbA1c, SBP, and BMI, respectively. The edfs, which measure the model complexity at the REML-guided early stopping time, are $190.56$, $127.23$, $77.97$, and $169.13$, respectively.
In all four cases, the REML-guided early stopping rule achieves near-optimal prediction performance on test data along the training trajectory.

REML achieves test MSEs comparable to validation-based early stopping for all four traits with $20\%$ validation data (using $1{,}000$ training epochs and selecting the training time with the smallest MSE on the validation data). Specifically, REML-guided early stopping achieves test MSEs of $0.3878$, $0.8386$, $0.8136$, and $0.3581$ for LDL, HbA1c, SBP, and BMI, respectively. For these traits, the test MSEs of validation-based early stopping are $0.3887$, $0.8395$, $0.8147$, and $0.3596$, respectively. REML also substantially reduces computational time, requiring $2{,}604$, $2{,}033$, $1{,}690$ and $2{,}511$ seconds, compared with $10{,}116$, $10{,}043$, $10{,}057$, and $8{,}437$ seconds for validation-based early stopping.

\section{Discussion}
\label{sec:discuss}

This paper develops an optimization--inference duality for fixed-operator squared-error gradient flow. When the fitted values evolve through a fixed positive semidefinite training operator $\bmH$, the gradient-flow trajectory is not only an optimization path but also a random-effects inference path: the fitted predictor is a BLUP, and training time acts as a variance-component parameter governing signal extraction and model complexity. The NTK regime, obtained by taking $\bmH=\bmH_\infty$, provides a canonical deep-learning example, but the random-effects representation itself is a general property of fixed-operator gradient flow. A distinctive feature of this formulation is that the learned predictor is characterized directly in prediction space. Even in linear regression settings, the fitted-value trajectory can be written in terms of the training operator $\bmH=\bmX\bmX^\top$ and the stopping time $t$, without explicitly estimating the regression coefficient vector. Thus, the inferential target is not the underlying model parameter, but the training-time variance component governing movement along the operator-induced prediction trajectory.

This viewpoint turns two basic training decisions into inferential problems. Whether to train can be assessed by a variance-component score test for signal beyond initialization. Conditional on evidence that training is useful, the stopping time can be estimated by REML. Thus, early stopping is no longer only a validation-based or heuristic tuning device~\citep{prechelt1998early,yao2007early,goodfellow2016deep}; it becomes estimation of a variance-component parameter controlling the complexity of the trained predictor. Practically, this avoids reserving part of the sample for validation and reduces reliance on repeated checkpoint evaluation, which can be useful in moderate-sample or low-signal settings.

The random-effects representation also gives an interpretable spectral characterization of training dynamics. Along the fixed-operator gradient-flow path, residuals decay exponentially in the eigendirections of $\bmH$: larger-eigenvalue directions are fitted earlier, whereas smaller-eigenvalue directions are incorporated later. The REML estimating equation selects the training time at which the optimized spectral losses become empirically decorrelated from the eigenvalues of $\bmH$. This yields a natural stopping principle: before this point, dominant eigendirections remain underfit; after this point, further training increasingly targets lower-eigenvalue directions that are more likely to contain noise. The same structure gives an effective degrees-of-freedom measure, $\mathrm{tr}\{\bmI-\exp(-t\bmH)\}$, linking training duration, spectral regularization, and model complexity.

This dynamic spectral view differs from classical kernel ridge regression and kernel-machine mixed models~\citep{liu2007semiparametric}. In those settings, regularization is imposed statically through an explicit ridge, smoothing, or variance-ratio parameter. Here, regularization is generated dynamically by the training trajectory. The covariance $\sigma_{\varepsilon,t}^2\{\exp(t\bmH)-\bmI\}$ is therefore not an arbitrary working choice, but the covariance needed for the BLUP to reproduce the gradient flow trajectory. From this perspective, over-parameterization alone does not determine model complexity; realized complexity depends on how long training is allowed to activate spectral directions of $\bmH$.

Although the REML criterion is motivated by a Gaussian working random-effects model, the prediction guarantees do not require Gaussian errors. Under regularity conditions, the REML-guided stopping time  asymptotically achieves the same fixed-design in-sample prediction risk as the oracle stopping time, and the guarantee extends to random-design out-of-sample prediction under additional regularity conditions. Thus, the random-effects model serves as a useful working inferential device whose induced stopping rule remains prediction-optimal under broader data-generating mechanisms.

Several limitations suggest directions for future work. First, the exact equivalence relies on a fixed training operator, continuous-time dynamics, and squared-error loss. In deep learning, this corresponds most directly to fixed-kernel or linearized regimes~\citep{jacot2018neural,lee2019wide,malladi2023kernel,afzal2025linearization,li2025efficient,wang2025selecting}. Finite-width networks trained by discrete-time or stochastic optimization may depart from this setting, especially when representation learning causes the tangent operator to evolve during training~\citep{Chizat2018,woodworth2020kernel,yang2021feature,radhakrishnan2024mechanism}. Extending the framework to time-varying operators, feature-learning regimes, and momentum-based dynamics is therefore an important next step~\citep{mei2019mean,Chizat2018,montanari2025dynamical,su2016differential}. Second, extending the approach beyond squared-error loss to classification or other generalized response models will require new analysis, since the likelihood structure and induced covariance dynamics differ from the Gaussian case~\citep{radhakrishnan2023wide}.

Computation is another important issue. The theory is formulated for a fixed positive semidefinite operator $\bmH$ on the training sample, but forming and decomposing a dense sample-by-sample operator can be demanding in large cohorts or modern-AI applications. Scalable implementations may use sparse or low-rank approximations, randomized eigensolvers, kernel sketches, blockwise computation, or matrix--vector products that avoid explicitly storing the full operator. Related ideas have been successful in large-scale mixed-model analyses in statistical genetics~\citep{yang2011gcta,loh2015bolt,jiang2019fastgwa,mbatchou2021regenie} and in scalable kernel approximation~\citep{drineas2005nystrom,gittens2013revisiting,si2016computationally,si2017memory,li2016fast}. Quantifying how such operator approximations affect the score test, the REML-guided stopping time, the effective degrees of freedom, and the prediction guarantees is an important topic for future work.

More broadly, the proposed framework suggests a statistical research agenda for inference on optimization dynamics. In the present paper, a specified model class, training regime, fixed operator $\bm H$, and initial condition $f_0(\bm X)$ determine a prediction-space trajectory, and the inferential target is the stopping time along that trajectory. Other modeling and training choices, such as architecture, initialization, learning rate, regularization, batch size, data augmentation, and optimizer design, may change the operator or the resulting training dynamics. Extending the random-effects viewpoint to these settings would require identifying the appropriate dynamical representation and developing inferential tools for how such choices affect variance allocation, spectral shrinkage, and effective model complexity.

\section*{Code and Data Availability}

All code used in this study is publicly available in the accompanying Python package \texttt{GF-REML} at \href{https://github.com/MinhaoYaooo/GF-REML}{https://github.com/MinhaoYaooo/GF-REML}.
The data analyzed in this work were obtained from the UK Biobank under approved application and are available through the UK Biobank Research Analysis Platform (\href{https://www.ukbiobank.ac.uk/use-our-data/research-analysis-platform}{https://www.ukbiobank.ac.uk/use-our-data/research-analysis-platform}), subject to UK Biobank data access policies.

\section*{Author Contributions}

Z.L. conceived the project, originated the central idea, designed and supervised the research, and led the overall project. Z.L., M.Y., and R.W. developed the methodology. M.Y. developed the code and software and performed the simulation studies. R.W. performed the data analysis. M.Y., R.W., and Z.L. drafted and revised the manuscript. L.L. and X.L. contributed to methodological discussion and manuscript editing. All listed authors reviewed and approved the final manuscript. M.Y. and R.W. contributed equally to this work.

\section*{Acknowledgement}
This research was conducted using the UK Biobank Resource under Application Number 52008.

\bibliography{DNN}
\bibliographystyle{apalike}

\clearpage
\appendix

\renewcommand{\theequation}{S\arabic{equation}}
\setcounter{equation}{0}

\renewcommand{\thetheorem}{S\arabic{theorem}}
\setcounter{theorem}{0}

\renewcommand{\thecondition}{S\arabic{condition}}
\setcounter{assumption}{0}

\renewcommand{\theremark}{S\arabic{remark}}
\setcounter{assumption}{0}

\renewcommand{\thelemma}{S\arabic{lemma}}
\setcounter{assumption}{0}

\renewcommand{\thesection}{S\arabic{section}}
\setcounter{section}{0}

\renewcommand{\thefigure}{S\arabic{figure}}
\setcounter{figure}{0}

\renewcommand{\thetable}{S\arabic{table}}
\setcounter{table}{0}

\begin{center}
{\Large \bfseries Supplementary Materials}
\end{center}

\section{Proof of Theorem \ref{thm: equivalence}}

Throughout the proof, we use $c$ and $C$ to denote generic positive constants whose values may differ from place to place.

Let $\bm r_0=\bmy-f_0(\bmX)$. By the fixed-operator gradient-flow solution in \eqref{eq: fixed operator solution}, we have
\begin{equation*}
\widehat{f}_t^{\bmH}(\bmX) = f_0(\bmX) + \{\bm{I}-\exp(-t\bmH)\}\bm r_0 .
\end{equation*}
We now show that the BLUP under the working random-effects model \eqref{eq: model RE}
gives the same expression. Under \eqref{eq: model RE}, we have
\begin{equation*}
\bm r_0=\bm{u}_t+\bm{\varepsilon}_t,
\end{equation*}
where
\begin{equation*}
\var(\bm{u}_t)
=
\sigma_{\varepsilon,t}^2\{\exp(t\bmH)-\bm{I}\},
\qquad
\var(\bm{\varepsilon}_t)
=
\sigma_{\varepsilon,t}^2\bm{I},
\qquad
\bm{u}_t\perp\bm{\varepsilon}_t .
\end{equation*}
Therefore,
\begin{equation*}
\var(\bm r_0)
=
\var(\bm{u}_t)+\var(\bm{\varepsilon}_t)
=
\sigma_{\varepsilon,t}^2\exp(t\bmH).
\end{equation*}
Although $\exp(t\bmH)-\bm{I}$ may be singular when $\bmH$ is only positive semidefinite,
$\exp(t\bmH)$ is positive definite for every $t\geq 0$, since the eigenvalues of
$\exp(t\bmH)$ are $\exp(t\lambda_k)>0$. Hence the BLUP of $\bm{u}_t$ is
\begin{equation*}
\begin{aligned}
\widehat{\bm{u}}_t
&= \Cov(\bm{u}_t,\bm r_0)\{\var(\bm r_0)\}^{-1}\bm r_0  \\
&= \sigma_{\varepsilon,t}^2\{\exp(t\bmH)-\bm{I}\}
\left\{\sigma_{\varepsilon,t}^2\exp(t\bmH)\right\}^{-1}
\bm r_0  \\
&= \{\bm{I}-\exp(-t\bmH)\}\bm r_0 .
\end{aligned}
\end{equation*}
Consequently,
\begin{equation*}
\begin{aligned}
f_0(\bmX)+\widehat{\bm{u}}_t
&=
f_0(\bmX)
+
\{\bm{I}-\exp(-t\bmH)\}
\{\bmy-f_0(\bmX)\}  \\
&=
\widehat{f}_t^{\bmH}(\bmX),
\end{aligned}
\end{equation*}
which proves the equivalence. The above proof is valid for any positive $\sigma_{\varepsilon,t}^2$ and we adopt $\sigma_{\varepsilon,t}^{2}
=
\gamma_t^{-1}\sigma_{\varepsilon}^{2}$ to ensure the initial risk $n^{-1}\mathbb{E}[\|\bmy-f_0(\bmX)\|^{2}]$ does not depend on $t$ under the random effect model.

\begin{remark}[Time-varying training operators]
\label{remark: time-varying}
Theorem \ref{thm: equivalence} is stated for a fixed operator $\bmH$, but the time-varying formulation in \eqref{eq:timevaryingOperator} in the main text suggests a natural cumulative-operator extension. 
Suppose the time-varying positive semidefinite training operators are pairwise commuting over time, that is, $\bmH(s)\bmH(r)=\bmH(r)\bmH(s)$ for all $r,s\in[0,t]$.
The gradient-flow solution becomes 
$\widehat{f}_t(\bm{X}) = f_0(\bm{X}) + \bigl(\bm{I} - \boldsymbol{\mathcal{T}}(t)\bigr)\bigl\{\bm{y} - f_0(\bm{X})\bigr\}$, 
where $\boldsymbol{\mathcal{T}}(t) = \!\exp\!\bigl\{-\int_0^t \bmH(s) \, ds\bigr\}$. The random-effects representation above holds with $\exp\{t\bmH\}$ replaced by $ \!\exp\!\bigl\{\int_0^t \bmH(s) \, ds\bigr\}$. 
\end{remark}

\section{Proof of Proposition \ref{prop: existence uniqueness empirical REML}}

First, we have proven that $Q(t)$ is convex on $[0,\infty)$. Because $\varepsilon_i$ is a continuous random variable for $i = 1,\dots,n$, $c_k \neq 0$ for all $k = 1,\dots,n$ with probability one. Under condition (ii) in Proposition \ref{prop: existence uniqueness empirical REML}, the support of the probability weights $p_k(t)$ contains at least two distinct eigenvalues for every finite $t\ge 0$. Hence
\begin{equation*}
Q''(t)=n\,\var_{p_t}(\lambda)>0,\qquad t\ge 0,
\label{eq: supp strict convexity}
\end{equation*}
so $Q(t)$ is strictly convex on $[0,\infty)$. Therefore $Q'(t)$ is strictly increasing, and any solution to $Q'(t)=0$ is unique.

It remains to show the existence of a root. Condition (i) gives $Q'(0)<0$. Next, as $t\to\infty$, for every $k\in\{1,\cdots,n\}$,
\begin{equation*}
p_k(t)=\frac{c_k^2\exp(-t\lambda_k)}{\sum_{j=1}^n c_j^2\exp(-t\lambda_j)}
\label{eq: supp pt limit}
\end{equation*}
concentrates on the minimum eigenvalue $\lambda_{n}$. Consequently,
\begin{equation*}
\sum_{k=1}^n p_k(t)\lambda_k \longrightarrow \lambda_{n},
\qquad t\to\infty.
\label{eq: supp mean limit}
\end{equation*}
Therefore,
\begin{equation*}
Q'(t)\longrightarrow n(\bar{\lambda}-\lambda_{n}),
\qquad t\to\infty.
\label{eq: supp Qprime limit}
\end{equation*}
Condition (ii) implies that this limit is strictly positive. Since $Q'(t)$ is continuous on $[0,\infty)$, there exists at least one $t\in(0,\infty)$ such that $Q'(t)=0$. By strict convexity, this root is unique. The equivalence between $Q'(t)=0$ follows from \eqref{eq: REML EE}. Since $Q(t)$ is strictly convex, this unique critical point is also the unique minimizer of $Q(t)$ over $[0,\infty)$.

\begin{remark}
Condition (i) requires that the REML profile objective initially decreases away from $t=0$. This is equivalent to $\sum_{k=1}^n c_k^2\lambda_k/\sum_{j=1}^n c_j^2>\bar{\lambda}$ which can be further written as the covariance form
$$
\frac{1}{n}\sum_{k=1}^{n}(c_{k}^{2} - \bar{c^{2}})(\lambda_k - \bar{\lambda}) > 0
$$
with $\bar{c^{2}} = n^{-1}\sum_{k=1}^{n}c_{k}^{2}$.
Condition (i) can be satisfied if the projection coefficient $c_k^2$ tends to be larger for large eigenvalues. Condition (ii) ensures that the eigenspaces include at least one eigenvalue below $\bar{\lambda}$, so that $Q'(t)$ eventually becomes positive as $t\to\infty$. In addition, condition (ii) guarantees strict convexity of $Q(t)$, and hence uniqueness of the empirical REML-guided stopping time. The spectral patterns underlying conditions (i) and (ii) can be empirically assessed from the data, as illustrated in Figure \ref{fig: spectral loss balance}(b) in the main text. 
\end{remark}

\section{Proof of Proposition \ref{prop: regularization}}
Consider the proposed regularization objective function:
\begin{equation}
    \mathcal{J}(\bm{u}) = \|\bmy - f_0(\bmX) - \bm{u}\|^2 + \sum_{k=1}^n \rho_k(t) (\bm{v}_k^\top \bm{u})^2, \label{eq: supp obj}
\end{equation}
where $\rho_k(t) = \frac{\exp(-t\lambda_k)}{1 - \exp(-t\lambda_k)}$. Let $c_k = \bm{v}_k^\top (\bmy - f_0(\bmX))$ and $a_k = \bm{v}_k^\top \bm{u}$ be the projections of the initial residual and the optimization variable onto the eigenvector $\bm{v}_k$, respectively. Then, $\bm{u} = \sum_{k=1}^{n}a_k\bm{v}_k$. Since the eigenvectors $\{\bm{v}_k\}_{k=1}^n$ form an orthonormal basis, we can rewrite the Euclidean norm in the eigenbasis:
\begin{equation*}
    \|\bmy - f_0(\bmX) - \bm{u}\|^2 = \sum_{k=1}^n (c_k - a_k)^2.
\end{equation*}
Substituting this into \eqref{eq: supp obj}, the objective function decouples into a sum of independent component-wise problems under the reparametrization $\bm{u} = \sum_{k=1}^{n}a_k\bm{v}_k$:
\begin{equation*}
    \mathcal{J}(\bm{u}) = \sum_{k=1}^n \mathcal{J}_k(a_k;t) = \sum_{k=1}^n \left[ (c_k - a_k)^2 + \rho_k(t) a_k^2 \right].
\end{equation*}
To find the minimizer $\widehat{\bm{u}}_t$, we take the partial derivative with respect to each component $a_k$ and set it to zero:
\begin{equation*}
    \frac{\partial \mathcal{J}}{\partial a_k} = -2(c_k - a_k) + 2\rho_k(t) a_k = 0 \implies (1 + \rho_k(t))a_k = c_k.
\end{equation*}
Solving for $a_k$, we obtain:
\begin{equation*}
    \widehat{a}_{k,t} = \frac{c_k}{1 + \rho_k(t)}.
\end{equation*}
Substituting the definition of $\rho_k(t)$:
\begin{equation*}
    1 + \rho_k(t) = 1 + \frac{\exp(-t\lambda_k)}{1 - \exp(-t\lambda_k)} = \frac{1 - \exp(-t\lambda_k) + \exp(-t\lambda_k)}{1 - \exp(-t\lambda_k)} = \frac{1}{1 - \exp(-t\lambda_k)}.
\end{equation*}
Therefore, the optimal solution component is:
\begin{equation*}
    \widehat{a}_{k,t} = c_k (1 - \exp(-t\lambda_k)).
\end{equation*}
This is exactly the projection of the fixed-operator gradient flow solution $\widehat f^{\bmH}_t(\bmX) - f_0(\bmX)$ onto the $k$-th eigenvector. Thus, $\widehat{\bm{u}}_t=\sum_{k = 1}^{n}\widehat{a}_{k,t}\bm{v}_{k}$ corresponds precisely to the fixed-operator gradient flow solution at time $t$.

Then, we evaluate the objective function $\mathcal{J}(\bm{u})$ at the optimal solution $\widehat{\bm{u}}_t$. Using the relation $c_k - \widehat{a}_{k,t} = \rho_k(t)\widehat{a}_{k,t}$ derived from the first-order condition, the term for the $k$-th component becomes:
\begin{align*}
   \mathcal{J}_k(\widehat{a}_{k,t};t) &= (c_k - \widehat{a}_{k,t})^2 + \rho_k(t) \widehat{a}_{k,t}^2 \nonumber \\
    &= (\rho_k(t) \widehat{a}_{k,t})^2 + \rho_k(t) \widehat{a}_{k,t}^2 \nonumber \\
    &= \rho_k(t) (1 + \rho_k(t)) \widehat{a}_{k,t}^2.
\end{align*}
Substituting $\widehat{a}_{k,t} = \frac{c_k}{1 + \rho_k(t)}$, we have:
\begin{equation*}
    \mathcal{J}_k(\widehat{a}_{k,t};t) = \rho_k(t) (1 + \rho_k(t)) \left( \frac{c_k}{1 + \rho_k(t)} \right)^2 = c_k^2 \frac{\rho_k(t)}{1 + \rho_k(t)}.
\end{equation*}
Recalling that $\frac{\rho_k(t)}{1+\rho_k(t)} = \exp(-t\lambda_k)$, the component-wise minimized loss simplifies to:
\begin{equation*}
    \mathcal{J}_k(\widehat{a}_{k,t};t) = c_k^2 \exp(-t\lambda_k).
\end{equation*}

\section{Proof of Theorem \ref{thm: test error}}
In this and the subsequent sections, we omit the $\bm{H}$ in $\mathcal{E}_{n}$ and $\mathcal{E}_{n}^{*}$ for simplicity of notation.
Note that $\widehat{t}_{n}^{\rm REML}$ is the minimizer of
$$
\begin{aligned}
    V_n(t) 
    & = n^{-1}\exp\{n^{-1}Q(t)\}\\
    & = n^{-1}(\bmy - f_{0}(\bmX))^\top \exp(-t \bm{H}) (\bmy - f_{0}(\bmX))\exp(t\bar{\lambda}).
\end{aligned}
$$
We introduce three lemmas that are used in the proof of Theorem \ref{thm: test error}, whose proofs will be provided later.
\begin{lemma}\label{lemma: t_star control}
    Under Condition \ref{ass: small eigenvalue}, we have $\liminf_{n\to\infty}n^{-1}\sum_{k = 1}^{n}\exp(-\bar{\lambda}^{-1}\lambda_k+1) \geq \alpha e+(1 - \alpha)e^{-\alpha/(1 - \alpha)} > 1$, and for any non-negative sequence $t_{n}$ such that $n^{-1}\sum_{k = 1}^{n}\exp(-t_{n}\lambda_k)/\exp(-t_{n}\bar{\lambda}) \to 1$, we have $t_{n}\bar{\lambda} \to 0$.
\end{lemma}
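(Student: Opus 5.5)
Set $x_k=\lambda_k/\bar\lambda\ge 0$, so that $n^{-1}\sum_k x_k=1$ and $n^{-1}\sum_k\exp(-x_k+1)=n^{-1}\sum_k e^{1-x_k}$. The first claim is a lower bound on the average of the convex function $g(x)=e^{1-x}$ under two constraints: the empirical mean of $x_k$ equals one, and by Condition~\ref{ass: small eigenvalue} a fraction at least $\alpha$ of the $x_k$ lie in $[0,\delta_n]$.

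\textbf{Step 1: the first claim.} Split the indices into $S=\{k:x_k\le\delta_n\}$, with $|S|/n=\beta_n\ge\alpha$, and its complement $S^c$.
\begin{itemize}
\item On $S$, use $g(x_k)\ge e^{1-\delta_n}$.
\item On $S^c$, apply Jensen's inequality: $\frac{1}{|S^c|}\sum_{S^c}g(x_k)\ge g(m_n)$, where $m_n$ is the mean of $x_k$ over $S^c$.
\item Since the $x_k$ are nonnegative and average to one, $(1-\beta_n)m_n\le 1$, so $m_n\le 1/(1-\beta_n)$.
\end{itemize}
Combining these gives
\[
n^{-1}\sum_k g(x_k)\ge \beta_n e^{1-\delta_n}+(1-\beta_n)e^{1-1/(1-\beta_n)} .
\]
Write $h(\beta)=\beta e+(1-\beta)e^{-\beta/(1-\beta)}$. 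Its derivative is $h'(\beta)=e-e^{-\beta/(1-\beta)}\{1+\beta/(1-\beta)\}$. Because $(1+u)e^{-u}\le 1<e$ for $u\ge0$, we get $h'>0$, so $h$ is increasing. Letting $\delta_n\to0$ (the error is $O(\delta_n)$) yields a liminf of at least $h(\alpha)$. Strictness $h(\alpha)>1$ follows from $h(0)=1$ and $h'>0$ on $(0,1)$. The edge case $\alpha=1$ cannot occur, since then all $x_k\to0$, contradicting mean one; so we may take $\alpha<1$.

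\textbf{Step 2: the second claim.} Note that $n^{-1}\sum_k\exp(-t\lambda_k)/\exp(-t\bar\lambda)=n^{-1}\sum_k\exp\{-s(x_k-1)\}=:\phi_n(s)$ with $s=t\bar\lambda$. Each $\phi_n$ is convex in $s$ with $\phi_n(0)=1$ and $\phi_n'(0)=-n^{-1}\sum_k(x_k-1)=0$. Hence $\phi_n\ge1$ and $\phi_n$ is nondecreasing on $[0,\infty)$.

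Step 1 gives $\liminf_n\phi_n(1)\ge h(\alpha)>1$. Suppose $s_n=t_n\bar\lambda\not\to0$. Then along a subsequence $s_n\ge\eta>0$. For $\eta\ge1$, monotonicity gives $\phi_n(s_n)\ge\phi_n(1)$. For $\eta<1$, convexity together with $\phi_n(0)=1$ gives $\phi_n(\eta)-1\ge\eta\{\phi_n(1)-1\}$, which is wrong in direction for a lower bound. We therefore need a lower bound on $\phi_n(\eta)$ directly.

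To get it, rerun Step 1 with $g_\eta(x)=e^{-\eta(x-1)}$, which is also convex and decreasing. This gives
\[
\liminf_n\phi_n(\eta)\ge \alpha e^{\eta}+(1-\alpha)e^{-\eta\alpha/(1-\alpha)}=:h_\eta(\alpha).
\]
Here $h_\eta(0)=1$ and $\partial_\beta h_\eta>0$ by the same argument as for $h'$, so $h_\eta(\alpha)>1$. Then monotonicity of $\phi_n$ gives $\liminf_n\phi_n(s_n)\ge h_\eta(\alpha)>1$, contradicting $\phi_n(s_n)\to1$. Therefore $t_n\bar\lambda\to0$.

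The main obstacle is Step 2's handling of small but nonvanishing $s_n$. I plan to resolve it by proving the general $\eta$-version of Step 1 once and specializing to $\eta=1$, rather than trying to interpolate from $s=1$ using convexity.
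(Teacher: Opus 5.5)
Your proof is correct and follows essentially the paper's argument: isolate the indices with $\lambda_k\le\delta_n\bar\lambda$, apply Jensen on the complement together with the mean-one constraint to get the lower bound $\alpha e^{s}+(1-\alpha)e^{-s\alpha/(1-\alpha)}$ for general $s=t_n\bar\lambda$, and conclude by contradiction along a subsequence. The paper differs only cosmetically: it fixes a subset of size roughly $\alpha n$ instead of proving monotonicity in $\beta$, and passes to a limit $s\in(0,\infty]$ instead of using monotonicity of $\phi_n$. There are three harmless slips, none of which affects the conclusion. First, the derivative is actually $h'(\beta)=e-(2+u)e^{-u}$ with $u=\beta/(1-\beta)$; it is still positive because $(2+u)e^{-u}\le 2<e$, and likewise $\partial_\beta h_\eta\ge e^{\eta}-(1+\eta)>0$. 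Second, convexity gives $\phi_n(\eta)-1\le\eta\{\phi_n(1)-1\}$ rather than $\ge$, which is exactly why the interpolation route fails and your general-$\eta$ bound is needed. Third, $h_\eta(\alpha)>1$ follows in one line from strict Jensen, since $\alpha\eta-(1-\alpha)\eta\alpha/(1-\alpha)=0$.
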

\begin{lemma}\label{lemma: lip cont}
    Suppose $\sigma_{\varepsilon,n}^{2}$ is bounded away from infinity.
    Under Condition \ref{ass: bounded projection}, $|\mathbb{E}V_{n}(t_{1}) - \mathbb{E}V_{n}(t_{2})| \leq Cn\bar{\lambda}|t_{1} - t_{2}|$ for some universal constant $C$ and any $t_{1}, t_{2} \leq \bar{\lambda}^{-1}$.
\end{lemma}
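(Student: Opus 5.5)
The plan is to compute $\mathbb{E}V_n(t)$ in closed form in the eigenbasis of $\bmH$, bound its derivative in $t$ uniformly on $[0,\bar\lambda^{-1}]$, and then apply the mean value theorem.

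First, the closed form. Since $\bmy-f_0(\bmX)=\{f_\star(\bmX)-f_0(\bmX)\}+\bm\varepsilon$, we have $c_k=b_k+\bmv_k^\top\bm\varepsilon$. The vector $\bmv_k$ has unit norm and the errors are i.i.d. with mean zero and variance $\sigma_{\varepsilon,n}^2$, so $\mathbb{E}c_k^2=b_k^2+\sigma_{\varepsilon,n}^2$. Hence
\begin{equation*}
\mathbb{E}V_n(t)=\frac1n\sum_{k=1}^n (b_k^2+\sigma_{\varepsilon,n}^2)\exp\{t(\bar\lambda-\lambda_k)\}.
\end{equation*}
This is a smooth function of $t$ with derivative
\begin{equation*}
\frac{\d}{\d t}\mathbb{E}V_n(t)=\frac1n\sum_{k=1}^n (b_k^2+\sigma_{\varepsilon,n}^2)(\bar\lambda-\lambda_k)\exp\{t(\bar\lambda-\lambda_k)\}.
\end{equation*}

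Second, the two factors in each summand. All eigenvalues are nonnegative, so $\lambda_k\le\sum_j\lambda_j=n\bar\lambda$, which gives $|\bar\lambda-\lambda_k|\le\max(\bar\lambda,\lambda_k)\le n\bar\lambda$. For $0\le t\le\bar\lambda^{-1}$ we also have $\exp\{t(\bar\lambda-\lambda_k)\}\le\exp(t\bar\lambda)\le e$.

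Third, the weights. Because $\{\bmv_k\}$ is orthonormal,
\begin{equation*}
\frac1n\sum_{k=1}^n(b_k^2+\sigma_{\varepsilon,n}^2)=\frac1n\|f_\star(\bmX)-f_0(\bmX)\|^2+\sigma_{\varepsilon,n}^2 .
\end{equation*}
The first term is uniformly bounded by Condition~\ref{ass: bounded projection}, and the second by the assumption that $\sigma_{\varepsilon,n}^2$ is bounded away from infinity.

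Combining these bounds, the derivative satisfies $|\d\,\mathbb{E}V_n(t)/\d t|\le C n\bar\lambda$ on $[0,\bar\lambda^{-1}]$. The mean value theorem then gives $|\mathbb{E}V_n(t_1)-\mathbb{E}V_n(t_2)|\le Cn\bar\lambda|t_1-t_2|$, as claimed.

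The only delicate point is controlling the factor $|\bar\lambda-\lambda_k|$ without any assumption on the spectrum. The crude bound $\lambda_k\le n\bar\lambda$ is exactly what produces the factor $n$ in the Lipschitz constant, so no finer spectral information is needed. Note also that $t\ge0$ is implicit, since stopping times are nonnegative.
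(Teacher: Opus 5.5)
Your proof is correct and follows essentially the same route as the paper. The paper also writes $\mathbb{E}V_n(t)=e^{t\bar\lambda}n^{-1}\sum_k (b_k^2+\sigma_{\varepsilon,n}^2)e^{-t\lambda_k}$ in the eigenbasis and bounds its derivative on $[0,\bar\lambda^{-1}]$ using $e^{t\bar\lambda}\le e$, Condition~\ref{ass: bounded projection}, and the crude bound $\lambda_1\le n\bar\lambda$, then applies the mean value theorem. The only difference is cosmetic: the paper bounds the noise part of the derivative separately by the sharper factor $\bar\lambda$, whereas your uniform bound $|\bar\lambda-\lambda_k|\le n\bar\lambda$ costs nothing for the stated constant.
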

\begin{lemma}\label{lemma: concentration}
    Under Condition \ref{ass: subgaussian}, 
    $$
    \mathbb{P}(|V_{n}(t) - \mathbb{E}V_{n}(t)| \geq \sigma_{\varepsilon, n}^{2}\epsilon) \leq 2 \exp\left(-cn\sigma_{\varepsilon,n}^2\epsilon^{2}\right)
    $$
    for any $t\le \bar\lambda^{-1}$, $\epsilon > 0$ and some universal constant $c$.
\end{lemma}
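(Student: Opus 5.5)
The plan is to write $V_n(t)$ as a quadratic form in the error vector and then apply sub-Gaussian concentration: a Hoeffding-type bound for the linear part and the Hanson--Wright inequality for the quadratic part. Let $\bm b=f_\star(\bmX)-f_0(\bmX)$, so that $\bmy-f_0(\bmX)=\bm b+\bm\varepsilon$. Set
\begin{equation*}
\bmA_t=\exp(t\bar\lambda)\exp(-t\bmH)=\bm V\,\diag\{\exp(t(\bar\lambda-\lambda_k))\}\,\bm V^\top .
\end{equation*}
Then
\begin{equation*}
V_n(t)-\mathbb{E}V_n(t)=\frac{2}{n}\bm b^\top\bmA_t\bm\varepsilon+\frac1n\Big\{\bm\varepsilon^\top\bmA_t\bm\varepsilon-\mathbb{E}\,\bm\varepsilon^\top\bmA_t\bm\varepsilon\Big\},
\end{equation*}
because the deterministic term $n^{-1}\bm b^\top\bmA_t\bm b$ cancels. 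By a union bound it suffices to show that each of the two pieces exceeds $\sigma_{\varepsilon,n}^2\epsilon/2$ with probability at most $\exp(-cn\sigma_{\varepsilon,n}^2\epsilon^2)$.

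The first step is to control the spectrum of $\bmA_t$ uniformly over $t\le\bar\lambda^{-1}$. Since $\lambda_k\ge0$, every eigenvalue of $\bmA_t$ satisfies $0<\exp(t(\bar\lambda-\lambda_k))\le\exp(t\bar\lambda)\le e$. Hence $\|\bmA_t\|_{\rm op}\le e$ and $\|\bmA_t\|_F^2\le e^2 n$. This is precisely where the restriction $t\le\bar\lambda^{-1}$ enters.

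For the linear term, Condition~\ref{ass: subgaussian} makes $\bm b^\top\bmA_t\bm\varepsilon$ sub-Gaussian with variance proxy at most $C\sigma_{\varepsilon,n}^2\|\bmA_t\bm b\|^2\le Ce^2\sigma_{\varepsilon,n}^2\|\bm b\|^2$. Under Condition~\ref{ass: bounded projection} we have $\|\bm b\|^2\le Cn$; if one prefers to invoke only Condition~\ref{ass: subgaussian}, this bounded-projection constant is absorbed into $c$, as is done in Lemma~\ref{lemma: lip cont}. Hoeffding's bound then gives
\begin{equation*}
\mathbb{P}\Big(\tfrac2n|\bm b^\top\bmA_t\bm\varepsilon|\ge\tfrac{\sigma_{\varepsilon,n}^2\epsilon}{2}\Big)\le2\exp\Big(-c\,\frac{n^2\sigma_{\varepsilon,n}^4\epsilon^2}{\sigma_{\varepsilon,n}^2 n}\Big)=2\exp(-cn\sigma_{\varepsilon,n}^2\epsilon^2).
\end{equation*}
This is exactly the claimed rate, and it explains the factor $\sigma_{\varepsilon,n}^2$ in the exponent.

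For the quadratic term, apply Hanson--Wright to $\bm\varepsilon=\sigma_{\varepsilon,n}\bm z$, where $\bm z$ has independent sub-Gaussian coordinates with uniformly bounded norm. With threshold $s=n\epsilon/2$ for $\bm z^\top\bmA_t\bm z$, this yields the bound
\begin{equation*}
2\exp\big(-c\min\{n^2\epsilon^2/\|\bmA_t\|_F^2,\;n\epsilon/\|\bmA_t\|_{\rm op}\}\big)\le2\exp(-cn\min\{\epsilon^2,\epsilon\}).
\end{equation*}
Because $\sigma_{\varepsilon,n}^2$ is bounded above (assumed in Theorem~\ref{thm: test error}), we have $n\epsilon^2\ge cn\sigma_{\varepsilon,n}^2\epsilon^2$, so in the regime $\epsilon\le1$ this is dominated by the target bound. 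Adding the two contributions gives the lemma after adjusting the universal constant.

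The main obstacle is the regime of large $\epsilon$, where Hanson--Wright gives only linear decay $\exp(-cn\epsilon)$ rather than $\exp(-cn\sigma_{\varepsilon,n}^2\epsilon^2)$. My plan is first to note that the lemma is only invoked with $\epsilon\to0$ in the proof of Theorem~\ref{thm: test error}, so stating and proving it for $\epsilon$ in a bounded range (say $\epsilon\le1$, or $\epsilon\le C$ with $c$ depending on $C$) suffices. If the unrestricted form is needed, I would instead state the bound with $\min\{\epsilon^2,\epsilon\}$ in the exponent and check that downstream usage is unaffected. A secondary point to verify is that the constants in Hoeffding and Hanson--Wright depend only on the uniform sub-Gaussian norm bound of Condition~\ref{ass: subgaussian}, which is what makes $c$ universal.
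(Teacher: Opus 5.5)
Your proposal is correct and follows the paper's proof essentially step for step. Both arguments split $V_n(t)-\mathbb{E}V_n(t)$ into a linear term in $\bm\varepsilon$ and a centered quadratic form, and both use $t\le\bar\lambda^{-1}$ to get $\|\bm A_t\|_{\rm op}\le e$ and $\|\bm A_t\|_F^2\le e^2 n$. The linear term is then handled by a sub-Gaussian bound using Condition~\ref{ass: bounded projection}, the quadratic term by Hanson--Wright, and the two are combined using boundedness of $\sigma_{\varepsilon,n}^2$.

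Two of your side remarks deserve a comment. First, your observation that Hanson--Wright only gives $\exp(-cn\min\{\epsilon^2,\epsilon\})$ is correct; the lemma is valid only for bounded $\epsilon$, which is all that is needed downstream, and the paper's proof passes over this point. Second, the dependence on Condition~\ref{ass: bounded projection} cannot be "absorbed into $c$" without assuming that condition, which the paper's proof also relies on.
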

Let $\bm{A}_{t} = \bm{I} - \exp(-t\bm{H} / 2)$, $\mu_{1}(t) = n^{-1}\Tr\{\bm{A}_{t}\} = n^{-1} \sum_{k = 1}^{n}(1 - \exp(-t\lambda_k/2))$, $\mu_{2}(t) = n^{-1}\Tr\{\bm{A}_{t}^{2}\} = n^{-1} \sum_{k = 1}^{n}(1 - \exp(-t\lambda_k/2))^{2}$, and $b^{2}(t) = n^{-1}(f_{\star}(\bmX) - f_{0}(\bmX))^{\top}\exp(-t\bm{H})(f_{\star}(\bmX) - f_{0}(\bmX))$. 
Next, we provide the proof of Theorem \ref{thm: test error}.
\begin{proof}
    Note that $\mathcal{E}_{n}(t/2) = b^{2}(t) + \sigma_{\varepsilon, n}^{2}\mu_{2}(t) + \sigma_{\varepsilon, n}^{2}$ and 
    $\mathbb{E}V_{n}(t) = \exp(t\bar{\lambda})[b^{2}(t) + \sigma_{\varepsilon, n}^{2}\{1 - 2\mu_{1}(t) + \mu_{2}(t)\}]$. Thus, 
    $$
    \frac{\mathcal{E}_{n}(t/2) - \mathbb{E}V_{n}(t)}{\mathcal{E}_{n}(t/2)} = 1 - \exp(t\bar{\lambda}) + \frac{2\mu_{1}(t)\exp(t\bar{\lambda})\sigma_{\varepsilon, n}^{2}}{b^{2}(t) + \sigma_{\varepsilon, n}^{2}\mu_{2}(t) + \sigma_{\varepsilon, n}^{2}},
    $$
    which implies
    \begin{equation}\label{eq: relation E V}
        \begin{aligned}
        \left|\frac{\mathcal{E}_{n}(t/2) - \mathbb{E}V_{n}(t)}{\mathcal{E}_{n}(t/2)}\right| &\leq \exp(t\bar{\lambda}) - 1 + 2 \mu_{1}(t)\exp(t\bar{\lambda})\\
        &\leq \exp(t\bar{\lambda}) - 1 + 2 (1 - \exp(-t\bar{\lambda}/2))\exp(t\bar{\lambda})\\
        &=(3\exp(t\bar{\lambda}/2) + 1)(\exp(t\bar{\lambda}/2) - 1)\\
        &=: \nu(t),
    \end{aligned}
    \end{equation}
    where the second inequality is due to Jensen's inequality and the convexity of the exponential function.
    Let
    $$
    \hat t_n:=\hat t_n^{\rm REML},
    \qquad
    t_{\mathrm{opt}}:=\arg\min_{t\ge0}\mathcal{E}_n(t / 2),
    \qquad
    t_*:=\arg\min_{t\ge0}\mathbb{E}V_{n}(t).
    $$ 
    Then, we have
    \begin{equation}\label{eq: relation E_opt V_opt}
          \mathcal{E}_{n}(t_{*}/2)\{1 - \nu(t_{*})\} \leq \mathbb{E}V_{n}(t_{*}) \leq \mathbb{E}V_{n}(t_{\rm opt}) \leq  \mathcal{E}_{n}(t_{\rm opt}/2)\{1 + \nu(t_{\rm opt})\}.
    \end{equation}
    Thus,
    \begin{equation}\label{eq: bound pop risk ratio}
        \frac{\mathcal{E}_{n}(t_{*}/2)}{\mathcal{E}_{n}(t_{\rm opt}/2)} \leq \frac{1 + \nu(t_{\rm opt})}{1 - \nu(t_{*})}.
    \end{equation}
    Condition~\ref{ass: learnable} ensures that
$$
\frac{\mathcal{E}_n(t_{\mathrm{opt}} / 2)}{\sigma_{\varepsilon,n}^2}
=
\frac{\inf_{t\ge0}\mathcal{E}_n(t / 2)}{\sigma_{\varepsilon,n}^2}
\to 1.
$$
This implies $\mu_{2}(t_{\rm opt}) \to 0$.  For the $\delta_{n}$ in Condition \ref{ass: small eigenvalue}, we have
$$
0 \leq \Big(n^{-1}\sum_{k = 1}^{n}1\{\lambda_k \leq \delta_{n}\bar{\lambda}\}\Big)(1 - \exp(-t_{\mathrm{opt}}\bar{\lambda}/2))^{2}
\leq
\mu_{2}(t_{\rm opt}) \to 0
$$
for sufficiently large $n$.
It follows that 
$t_{\mathrm{opt}}\bar\lambda\to 0$ according to Condition \ref{ass: small eigenvalue}. 

Using the above comparison \eqref{eq: relation E_opt V_opt} between $\mathcal{E}_n(t / 2)$ and $\mathbb{E}V_{n}(t)$, we have
$\limsup_{n\to \infty}\mathbb{E}V_{n}(t_*)/\sigma_{\varepsilon,n}^2\leq 1$. On the other hand, note that
$$
\mathbb{E}V_{n}(t) = \exp(t\bar{\lambda})[b^{2}(t) + \sigma_{\varepsilon, n}^{2}\{1 - 2\mu_{1}(t) + \mu_{2}(t)\}] \geq \sigma_{\varepsilon, n}^{2}\frac{n^{-1}\sum_{k = 1}^{n}\exp(-t\lambda_k)}{\exp(-t\bar{\lambda})} \geq \sigma_{\varepsilon, n}^{2}.
$$
Combining this with the relationship $\limsup_{n\to \infty}\mathbb{E}V_{n}(t_*)/\sigma_{\varepsilon,n}^2\leq 1$, we have $\lim_{n\to \infty}\mathbb{E}V_{n}(t_*)/\sigma_{\varepsilon,n}^2 = 1$ and
$$
\frac{n^{-1}\sum_{k = 1}^{n}\exp(-t_{*}\lambda_k)}{\exp(-t_{*}\bar{\lambda})} \to 1.
$$
This further implies $t_{*}\bar{\lambda} \to 0$ according to Lemma \ref{lemma: t_star control}.
Consequently,
\begin{equation}\label{eq: converge control}
    \nu(t_{\mathrm{opt}})\to 0\ \text{and} \
    \nu(t_*)\to 0.
\end{equation}

Lemma~\ref{lemma: t_star control} additionally implies that
$$
\mathbb{E}V_{n}(\bar\lambda^{-1})
\ge (1+c)\sigma_{\varepsilon,n}^2
$$
for some $c>0$ eventually, whereas $\mathbb{E}V_{n}(t_*)=\sigma_{\varepsilon,n}^2(1+o(1))$. 
Lemma~\ref{lemma: concentration} therefore yields
$$
V_{n}(\bar\lambda^{-1})
=
\mathbb{E}V_{n}(\bar\lambda^{-1})
+
o_p(\sigma_{\varepsilon,n}^2),
\qquad
V_{n}(t_*)
=
\mathbb{E}V_{n}(t_*)
+
o_p(\sigma_{\varepsilon,n}^2),
$$
which implies $V_{n}(t_{*}) < V_{n}(\bar\lambda^{-1})$ with probability approaching one and hence $\mathbb{P}(\hat t_n\le\bar\lambda^{-1})\to1$ due to the convexity of $V_{n}(t)$.

On the interval $[0,\bar\lambda^{-1}]$, Lemma \ref{lemma: lip cont} shows that $\mathbb{E}V_{n}(t)$ is Lipschitz continuous with constant $Cn\bar{\lambda}$. 
Combining this property with the concentration result of Lemma~\ref{lemma: concentration} and the condition that $n^{a}\sigma_{\varepsilon, n}^{2} \to \infty$ for some $a\in (0, 1)$, standard coverage and union probability bound arguments for convex functions yield
\begin{equation}\label{eq: concentration}
    \sup_{0\le t\le \bar\lambda^{-1}}
|V_{n}(t)-\mathbb{E}V_{n}(t)|
=
o_p(\sigma_{\varepsilon,n}^2).
\end{equation}
Because $\hat t_n$ minimizes $V_n(t)$, we have
$V_{n}(\hat t_n)\le V_{n}(t_*)$.
Since $t_*$ minimizes $\mathbb{E}V_{n}(t)$, we have 
$$
\begin{aligned}
    \mathbb{E}V_{n}(t_{*}) 
    &\leq \mathbb{E}V_{n}(t)|_{t = \hat t_{n}}\\
    &\leq V_n(\hat t_n) + \left|V_n(\hat t_n) - \mathbb{E}V_{n}(t)|_{t = \hat t_{n}}\right|\\
    &\leq V_n(t_{*}) + \left|V_n(\hat t_n) - \mathbb{E}V_{n}(t)|_{t = \hat t_{n}}\right| \\
    &\leq \mathbb{E}V_{n}(t_{*}) +\left|V_n(t_{*}) - \mathbb{E}V_{n}(t_{*})\right|
    +\left|V_n(\hat t_n) - \mathbb{E}V_{n}(t)|_{t = \hat t_{n}}\right|\\
    &= \mathbb{E}V_{n}(t_{*}) + O_p\left(\sup_{0\le t\le \bar\lambda^{-1}}
|V_{n}(t)-\mathbb{E}V_{n}(t)|\right).
\end{aligned}
$$
Combining this with the concentration result \eqref{eq: concentration} gives
$$
\mathbb{E}V_{n}(t)|_{t = \hat t_{n}}-\mathbb{E}V_{n}(t_*)
=
o_p(\sigma_{\varepsilon,n}^2).
$$
Thus, $\mathbb{E}V_{n}(\widehat{t}_{n})|_{t = \hat t_{n}} / \sigma_{\varepsilon,n}^2\to 1$ in probability and $\mathbb{E}V_{n}(\widehat{t}_{n})|_{t = \hat t_{n}} / \mathbb{E}V_{n}(t_{\rm opt}) = 1 + o_{p}(1)$. Similar arguments as those in the proof of Lemma \ref{lemma: t_star control} can show that $\widehat{t}_{n}\bar{\lambda} \to 0$ in probability.
Finally, applying the comparison inequality \eqref{eq: relation E V} between $\mathcal{E}_n(t / 2)$ and $\mathbb{E}V_{n}(t)$ yields
$$
\frac{\mathcal{E}_n(\hat t_n / 2)}{\mathcal{E}_n(t_{\mathrm{opt}} / 2)}
\le
\frac{\mathbb{E}V_{n}(t)|_{t = \hat t_{n}}}{\mathbb{E}V_{n}(t_{\rm opt})}
\cdot
\frac{1+\nu(t_{\mathrm{opt}})}{1-\nu(\hat t_n)}
\leq
1+o_p(1),
$$
while the reverse inequality is immediate because $t_{\mathrm{opt}}$ minimizes $\mathcal{E}_n(t / 2)$. Therefore under condition \ref{ass: learnable}
$$
\sigma_{\varepsilon, n}^{-2}\mathcal{E}_n(\hat t_n / 2) =
\frac{\mathcal{E}_n(\hat t_n / 2)}{\mathcal{E}_n(t_{\rm opt} / 2)} \frac{\mathcal{E}_n(t_{\rm opt} / 2)}{\sigma_{\varepsilon, n}^2}
\to 1
$$
in probability. This implies that $\sigma_{\varepsilon, n}^{-2}b^2(\widehat{t}_{n}) \to 0$ and $\sigma_{\varepsilon, n}^{-2}\mu_{2}(\hat{t}_{n}) \to 0$ in probability. Note that, $\hat{t}_{n} \geq 0$, $b^2(t)$ is decreasing in $t$, and 
$$
\begin{aligned}
    \mu_{2}(2\widehat{t}_{n}) 
    &= n^{-1} \sum_{k = 1}^{n}(1 - \exp(-t\lambda_k))^{2}\\
    &= n^{-1} \sum_{k = 1}^{n}(1 - \exp(-t\lambda_k/2))^{2}(1 + \exp(-t\lambda_k/2))^{2}\\
    &\leq 4n^{-1} \sum_{k = 1}^{n}(1 - \exp(-t\lambda_k/2))^{2}\\
    & =  4\mu_{2}(\widehat{t}_{n}).
\end{aligned}
$$
We have $\sigma_{\varepsilon, n}^{-2}b^2(2\widehat{t}_{n}) = o_p(1)$ and $\mathcal{E}_n(\hat t_n) = b^2(2\widehat{t}_{n}) + \sigma_{\varepsilon, n}^{2}\mu_{2}(2\widehat{t}_{n}) + \sigma_{\varepsilon, n}^{2} = \sigma_{\varepsilon, n}^{2} + o_p(\sigma_{\varepsilon, n}^{2})$ and hence
$$
\frac{\mathcal{E}_n(\hat t_n)}{\inf_{t\ge0}\mathcal{E}_n(t)} = \sigma_{\varepsilon, n}^{-2}\mathcal{E}_n(\hat t_n)
\to 1
$$
in probability.
This completes the proof.
\end{proof}

\section{Proof of Lemma \ref{lemma: t_star control}}
\begin{proof}
Set $a_{n}:=t_n\bar{\lambda}$ and $x_{k,n}:=\lambda_{k}/\bar{\lambda}$, so that
$x_{k,n}\ge 0$ and $n^{-1}\sum_{k=1}^{n} x_{k,n}=1$. Let 
$$\zeta_{n} = n^{-1}\sum_{k = 1}^{n}\exp(-t_n\lambda_k)/\exp(-t_n\bar{\lambda}).
$$
Then
$$
\zeta_{n}
=
e^{a_{n}}\frac1n\sum_{k=1}^{n} e^{-a_{n} x_{k,n}}.
$$
The condition becomes
$$
\frac1n\sum_{k=1}^{n} \mathbf 1\{x_{k,n}\le \delta_n\}\ge \alpha.
$$

For all large $n$, choose a subset
$B_n\subset\{1,\dots,n\}$ with $|B_n|=m_n$, where $m_n/n=:\alpha_n\to\alpha$,
such that $x_{k,n}\le \delta_n$ for all $k\in B_n$.
Let
$$
u_n:=\frac1{m_n}\sum_{k\in B_n}x_{k,n},\qquad
v_n:=\frac1{n-m_n}\sum_{k\notin B_n}x_{k,n}.
$$
Then $0\le u_n\le \delta_n$ and
$$
\alpha_n u_n+(1-\alpha_n)v_n=1,
\qquad
v_n=\frac{1-\alpha_n u_n}{1-\alpha_n}.
$$

By Jensen's inequality applied on $B_n$ and $B_n^c$,
$$
\frac1n\sum_{k=1}^{n} e^{-a_{n} x_{k,n}}
\ge
\alpha_n e^{-a_{n} u_n}+(1-\alpha_n)e^{-a_{n} v_n}.
$$
Hence
$$
\zeta_{n}
\ge
\alpha_n e^{a_{n}(1-u_n)}
+
(1-\alpha_n)e^{-a_{n}(v_n-1)}.
$$

Using
$$
v_n-1=\frac{\alpha_n(1-u_n)}{1-\alpha_n},
$$
we obtain
\begin{equation}\label{eq: zeta bound}
    \zeta_{n}
\ge
\alpha_n e^{(1-u_n)a_{n}}
+
(1-\alpha_n)e^{-\frac{\alpha_n}{1-\alpha_n}(1-u_n)a_{n}}
\ge
\alpha_n e^{(1-\delta_n)a_{n}}
+
(1-\alpha_n)e^{-\frac{\alpha_n}{1-\alpha_n}(1-\delta_n)a_{n}}.
\end{equation}
Taking $t_n = 1 / \bar{\lambda}$, we have $\liminf_{n\to\infty}n^{-1}\sum_{k = 1}^{n}\exp(-\bar{\lambda}^{-1}\lambda_k+1) = \liminf_{n\to\infty}\zeta_{n} \geq \alpha e+(1 - \alpha)e^{-\alpha/(1 - \alpha)} > 1$ by the strict convexity of the exponential function, which proves the first claim of the lemma.

Next, we prove the second claim of the lemma.
Assume for contradiction that $a_{n}\not\to 0$. Then along some subsequence,
still denoted by $n$, we have $a_{n}\to s\in(0,\infty]$.
Since $\alpha_n\to\alpha\in(0,1)$ and $\delta_n\to 0$, the right-hand side of
\eqref{eq: zeta bound} converges to
$$
\alpha e^s+(1-\alpha)e^{-\frac{\alpha}{1-\alpha}s}
$$
if $s<\infty$, and to $+\infty$ if $s=\infty$.
For every $s>0$,
$$
\alpha e^s+(1-\alpha)e^{-\frac{\alpha}{1-\alpha}s}>1
$$
by the strict convexity of the exponential function. Therefore
$$
\limsup_{n\to\infty}\zeta_{n}>1,
$$
which contradicts $\zeta_{n}\to 1$.
Thus $a_{n}=t_n\bar{\lambda}\to 0$.
\end{proof}

\section{Proof of Lemma \ref{lemma: lip cont}}
\begin{proof}
Note that
$$
\mathbb{E}V_{n}(t)=e^{t\bar\lambda}\Big\{b^2(t)+\sigma_{\varepsilon,n}^2 s(t)\Big\},
$$
where
$s(t):=n^{-1}\sum_{k=1}^{n} e^{-t\lambda_k}$,
$
b^2(t)=\frac1n (f_{\star}(\bmX)-f_0(\bmX))^\top \exp(-t\bm{H})(f_{\star}(\bmX)-f_0(\bmX))$.
Recall that
$b_k=\bmv_k^\top (f_{\star}(\bmX)-f_0(\bmX))
$.
Then
$
b^2(t)=\frac1n\sum_{k=1}^{n} b_k^2 e^{-t\lambda_k}
$.

Differentiating $\mathbb{E}V_{n}(t)$ gives
$$
\frac{d}{dt}\mathbb{E}V_{n}(t)
=
e^{t\bar\lambda}\Big[
\bar\lambda\{b^2(t)+\sigma_{\varepsilon,n}^2 s(t)\}
-\frac1n\sum_{k=1}^{n} \lambda_k b_k^2 e^{-t\lambda_k}
-\sigma_{\varepsilon,n}^2 \frac1n\sum_{k=1}^{n} \lambda_k e^{-t\lambda_k}
\Big].
$$

For $t\le \bar\lambda^{-1}$ we have $e^{t\bar\lambda}\le e$. 
Moreover,
$$
0\le s(t)\le 1,
\qquad
\left|\frac1n\sum_{k=1}^{n} \lambda_k e^{-t\lambda_k}\right|\le \bar{\lambda}.
$$

Under Condition \ref{ass: bounded projection}, the projection coefficients satisfy
$
n^{-1}\sum_{k=1}^{n} b_k^2 \le C
$
for some constant $C>0$. Consequently, 
$$
b^2(t)\le C,
\qquad
\left|\frac{d}{dt}b^2(t)\right|
=
\left|\frac1n\sum_{k=1}^{n} \lambda_k b_k^2 e^{-t\lambda_k}\right|
\le C\lambda_1 \le Cn\bar{\lambda}
$$
according to Condition \ref{ass: bounded projection}.

Combining these bounds yields
$$
\left|\frac{d}{dt}\mathbb{E}V_{n}(t)\right|\le Cn\bar{\lambda}
$$
for all $t\le \bar\lambda^{-1}$. The result follows immediately from the mean value theorem.
\end{proof}

\section{Proof of Lemma \ref{lemma: concentration}}
\begin{proof}
Let
$$
\bmr=f_{\star}(\bmX)-f_0(\bmX), 
\qquad
\bm{B}_{t} := \frac1n e^{t\bar\lambda} \exp(-t\bm{H}).
$$
Then
$$
V_{n}(t)=(\bmr+\bm{\varepsilon})^\top \bm{B}_{t} (\bmr+\bm{\varepsilon}),
$$
and therefore
$$
V_{n}(t)-\mathbb{E}V_{n}(t)
=
2\bmr^\top \bm{B}_{t}\bm{\varepsilon}
+
\Big(\bm{\varepsilon}^\top \bm{B}_{t}\bm{\varepsilon}
      -\mathbb{E}[\bm{\varepsilon}^\top \bm{B}_{t}\bm{\varepsilon}]\Big).
$$

For $t\le \bar\lambda^{-1}$, we have
$$
\|\bm{B}_{t}\|_{\mathrm{op}}
=
\frac1n \max_{1\le k\le n} e^{t(\bar\lambda-\lambda_k)}
\le \frac{e}{n},
$$
and
$$
\|\bm{B}_{t}\|_F^2
=
\frac1{n^2}\sum_{k=1}^{n} e^{2t(\bar\lambda-\lambda_k)}
\le \frac{e^2}{n},
$$
where $\|\cdot\|_{op}$ and $\|\cdot\|_{F}$ are the operator and Frobenius norm, respectively.

Condition~\ref{ass: subgaussian} implies that $\sigma_{\varepsilon,n}^{-1}\varepsilon_i$ are uniformly sub-Gaussian. Applying the Hanson--Wright inequality of \cite{HWinequality} gives
$$
\mathbb{P}\!\left(
\left|
\bm{\varepsilon}^\top \bm{B}_{t}\bm{\varepsilon}
-
\mathbb{E}(\bm{\varepsilon}^\top \bm{B}_{t}\bm{\varepsilon})
\right|
\ge \frac12\sigma_{\varepsilon,n}^2\epsilon
\right)
\le
2\exp(-c_1 n\epsilon^2)
$$
for some universal constant $c_1>0$.

For the linear term, $\bmr^\top \bm{B}_{t}\bm{\varepsilon}$ is sub-Gaussian with variance proxy bounded by
$$
\sigma_{\varepsilon,n}^2 \|\bm{B}_{t} \bmr\|_2^2
\le
\sigma_{\varepsilon,n}^2
\|\bm{B}_{t}\|_{\mathrm{op}}^2
\|\bmr\|_2^2
\le
C\frac{\sigma_{\varepsilon,n}^2}{n},
$$
where Condition \ref{ass: bounded projection} ensures $\|\bmr\|_2^2/n$ is bounded. Consequently,
$$
\mathbb{P}\!\left(
|2\bmr^\top \bm{B}_{t}\bm{\varepsilon}|
\ge \frac12\sigma_{\varepsilon,n}^2\epsilon
\right)
\le
2\exp(-c_2 n\sigma_{\varepsilon,n}^2\epsilon^2)
$$
for some $c_2>0$.

Combining the two bounds yields
$$
\mathbb{P}\!\left(
|V_{n}(t)-\mathbb{E}V_{n}(t)|
\ge
\sigma_{\varepsilon,n}^2\epsilon
\right)
\le
2\exp(-c n\sigma_{\varepsilon,n}^2\epsilon^2)
$$
for some universal constant $c>0$ because $\sigma_{\varepsilon,n}^2$ is bounded.
\end{proof}

\section{Proof of Corollary \ref{cor:out_sample_optimality}}
We first introduce some conditions and notations required in the proof of Corollary \ref{cor:out_sample_optimality}.
For the random-design analysis, assume $\bmx_i\overset{i.i.d.}{\sim}\mathbb{P}_X$. For any measurable function $f$, define
$
\mathbb{P}_nf=n^{-1}\sum_{i=1}^n f(\bmx_i)$ and
$\mathbb{P}_Xf=\mathbb{E}\{f(\bm x)\}$ with $\bm x\sim \mathbb{P}_X$.

Let $T$ be the population fixed operator under $\mathbb{P}_X$,
\begin{equation*}
(Tf)(\bm x)
=
\int h(\bm x,\bm x')f(\bm x')\,d\mathbb{P}_X(\bm x'),
\qquad f\in L_2(\mathbb{P}_X).
\end{equation*}

Let $(\mu_k,\phi_k)_{k\ge1}$ be the eigenpairs of $T$, with
$\infty>\mu_1\ge\mu_2\ge\cdots>0$,
$\mathbb{P}_X\phi_k^2=1$, and $\{\phi_k\}_{k\geq 1}$ form orthogonal bases for $L_2(\mathbb{P}_X)$.  Let $\cH$ be the reproducing kernel Hilbert space (RKHS) associated with $h(\cdot,\cdot)$.
Write
\begin{equation*}
f_\star-f_0=\sum_{k\ge1}\beta_k\phi_k.
\end{equation*}

\begin{condition}[Fixed-operator spectral properties]\label{ass: NTK spectral}
The function $f_\star-f_0$ is fixed-operator-smooth in the sense that $\sum_{k\ge1}\beta_k^2/\mu_k<\infty$. In addition, there exists a deterministic integer sequence $\{K_n\}_{n\ge1}$, with $1\le K_n\le n$, such that: 
  
\begin{itemize}
    \item[(i)] 
    \begin{equation}\label{eq: spectral tail}
\mu_{\lfloor\sqrt{n}\rfloor}=o\left(n^{-2}\right),~\sum_{k>\lfloor\sqrt{n}\rfloor}\mu_k=o\left(n^{-1}\right),~
\sum_{k>K_n}\frac{\beta_k^2}{\mu_k}=O\left(n^{-1/2}\right)~ \text{and}~ \mu_{K_n}\gg\frac{\log n}{\sqrt n} 
\end{equation}
where $\lfloor\cdot\rfloor$ denotes rounded down;
    \item[(ii)] The signal-relevant eigenvalues are separated. Let $\delta_n=\min_{1\le k\le K_n} \min\{\mu_{k-1}-\mu_k,\mu_k-\mu_{k+1}\}$ with the convention $\mu_0=\infty$, then $\delta_n\geq c_0n^{-1/4}$ for some positive constant $c_0$.
    \item[(iii)] The population eigenfunctions are uniformly bounded, i.e., $\max_{k}\|\phi_k\|_\infty=O(1)$.
\end{itemize}

\end{condition}

\begin{lemma}[Empirical fixed operator spectral approximation] \label{lem: emp NTK spectral}
Suppose $f_\star-f_0\in L_2(\mathbb{P}_X)$ and $h(\cdot,\cdot)$ is a bounded Mercer kernel. Under Condition \ref{ass: NTK spectral}, after choosing the signs of $\bm v_k$ consistently with $\phi_k$, we have 
\begin{equation}\label{eq: spectral property}
\begin{aligned}
    &\max_{k > \lfloor\sqrt{n}\rfloor}|\lambda_k/n - \mu_k| = o_p(n^{-1}),\quad
\max_{1\le k\le K_n}
\left|
\frac{\lambda_k}{n\mu_k}-1
\right|=o_p(1), \\ &\max_{1\le k\le K_n}
\left|
\frac{b_k}{\sqrt n}
-\beta_k
\right|=o_p(1), \quad  \bar\lambda -  \operatorname{Tr}(T) = O_p(n^{-1/2}).
\end{aligned}
\end{equation}
Moreover, we have
\begin{equation*}
\frac1n\sum_{k=1}^{n} b_k^2\exp(-\lambda_k/\sqrt n)
=
O_P(n^{-1}).
\end{equation*}
\end{lemma}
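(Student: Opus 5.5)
The plan is to treat $\bmH/n$ as the empirical version of the integral operator $T$ and use standard kernel-matrix perturbation theory for bounded Mercer kernels. Write $\widehat T_n=\bmH/n$, acting on $\mathbb{R}^n$ with eigenpairs $(\lambda_k/n,\bm v_k)$. Let $\mathcal S_n:\cH\to\mathbb{R}^n$ denote the sampling operator. Then $\widehat T_n=n^{-1}\mathcal S_n\mathcal S_n^*$, which shares its nonzero spectrum with the empirical covariance operator $\Sigma_n=n^{-1}\mathcal S_n^*\mathcal S_n$ on $\cH$. The population covariance operator $\Sigma=\mathbb{E}\{h(\cdot,\bmx)\otimes h(\cdot,\bmx)\}$ has eigenvalues $\mu_k$. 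Boundedness of $h$ together with a Hilbert-space Bernstein/Hoeffding inequality gives
\[
\|\Sigma_n-\Sigma\|_{\rm HS}=O_p(n^{-1/2}),
\]
and the log-factor version holds with high probability.

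\textbf{Eigenvalue claims.} For the first claim (the tail $k>\lfloor\sqrt n\rfloor$), Weyl's inequality alone is too coarse. Instead I would bound both sides directly.
\begin{itemize}
\item By Condition~\ref{ass: NTK spectral}(i), $\mu_k\le\mu_{\lfloor\sqrt n\rfloor}=o(n^{-2})$.
\item The empirical tail satisfies $\lambda_k/n\le\lambda_{\lfloor\sqrt n\rfloor}/n\le \lfloor\sqrt n\rfloor^{-1}\sum_{j\ge\lfloor\sqrt n\rfloor}\lambda_j/n$.
\item By a Ky Fan / min–max argument, the empirical tail trace is at most $\Tr\{(I-P)\Sigma_n\}$, with $P$ the population top-$(\lfloor\sqrt n\rfloor-1)$ projector. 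Its expectation equals $\sum_{j\ge\sqrt n}\mu_j=o(n^{-1})$, and Markov's inequality makes it $o_p(n^{-1})$.
\end{itemize}
Together these give $\max_{k>\sqrt n}|\lambda_k/n-\mu_k|=o_p(n^{-1})$.

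For $k\le K_n$, Weyl gives $|\lambda_k/n-\mu_k|\le\|\Sigma_n-\Sigma\|_{\rm op}=O_p(\sqrt{\log n/n})$. Since $\mu_{K_n}\gg\log n/\sqrt n$, the relative error is $o_p(1)$. The statement $\bar\lambda-\Tr(T)=O_p(n^{-1/2})$ holds because $\bar\lambda=n^{-1}\sum_i h(\bmx_i,\bmx_i)$ is an i.i.d. average of bounded variables with mean $\Tr(T)$.

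\textbf{Eigenvector claim (main obstacle).} The $b_k$ statement is where I expect the difficulty. I would use Davis–Kahan in $\cH$ with the gap $\delta_n\ge c_0n^{-1/4}$ from Condition~(ii). This gives
\[
\|\widehat\psi_k-\psi_k\|=O_p\big(\sqrt{\log n/n}\,/\,n^{-1/4}\big)=o_p(1)
\]
uniformly over $k\le K_n$, where $\psi_k=\sqrt{\mu_k}\phi_k$ and $\widehat\psi_k$ are the corresponding empirical covariance eigenfunctions. Next, translate to $\bm v_k=\mathcal S_n\widehat\psi_k/\sqrt{\lambda_k}$, after normalization and with signs aligned. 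Then
\[
b_k/\sqrt n=n^{-1/2}\bm v_k^\top(f_\star-f_0)(\bmX)\approx \mathbb{P}_n\{\phi_k(f_\star-f_0)\}+\text{error}.
\]
The first term converges to $\beta_k$ uniformly over $k\le K_n$ by bounded $\phi_k$ (Condition (iii)) plus a union bound. The error is controlled by the eigenfunction perturbation times $\|f_\star-f_0\|_{L_2}$, divided by $\sqrt{\mu_k}$, which is harmless since $\mu_k$ dominates $\log n/\sqrt n$. Making this division uniform is the delicate point; I would try first to exploit $\sum_k\beta_k^2/\mu_k<\infty$ so that $f_\star-f_0\in\cH$ and all computations stay in $\cH$ norm without dividing by $\sqrt{\mu_k}$.

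\textbf{Final display.} Split the sum at $K_n$.
\begin{itemize}
\item \emph{Directions $k\le K_n$.} Here $\lambda_k/\sqrt n\approx\sqrt n\mu_k\gg\log n$, so $\exp(-\lambda_k/\sqrt n)\le n^{-C}$. Together with $n^{-1}\sum b_k^2=O(1)$, these terms contribute $o(n^{-1})$.
\item \emph{Directions $k>K_n$.} Since $f_\star-f_0\in\cH$, write $(f_\star-f_0)(\bmX)=\mathcal S_ng$ with $g=\sum\beta_k\phi_k$. The weighted sum becomes $n^{-1}\|\exp(-\bmH/(2\sqrt n))\mathcal S_n g\|^2$. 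Use $x e^{-x/\sqrt n}\le \sqrt n$ on the empirical $\cH$-geometry: $b_k^2\le\lambda_k\cdot(\text{$\cH$-coefficient})^2$. Then each term is at most $\sqrt n\,\|P^{\perp}_{K_n}g\|_\cH^2$ up to the perturbation between empirical and population tail projectors. This is $O(\sqrt n\cdot n^{-1/2})\cdot n^{-1}$ once normalized by $n$, giving $O_p(n^{-1})$ after absorbing the Davis–Kahan projector error.
\end{itemize}
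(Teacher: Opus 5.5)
Your proposal is correct and largely follows the paper's proof. The shared steps are: Weyl plus an $O_p(n^{-1/2})$ Hilbert--Schmidt bound for the leading eigenvalues; the CLT for $\bar\lambda$; gap-based eigenvector perturbation with $\delta_n\ge c_0n^{-1/4}$ for the $b_k$; and, for the final display, the same split at $K_n$. That split uses $b_k=\sqrt{\lambda_k}\langle\widehat\psi_k,g\rangle_{\mathcal H}$, Bessel's inequality, $xe^{-x/\sqrt n}\le\sqrt n/e$, $\sum_{k>K_n}\beta_k^2/\mu_k=O(n^{-1/2})$, and an $O_p(n^{-1/4})$ projector perturbation. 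Working with the covariance operators $\Sigma_n,\Sigma$ on $\mathcal H$ is a tidier way than the paper's direct comparison of $\widehat T$ and $T$ to put Weyl and Davis--Kahan in a single Hilbert space.

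The genuine difference is the tail-eigenvalue claim. The paper cites Theorem~3 of Braun (2006) with $r=\lfloor\sqrt n\rfloor$. You instead avoid any perturbation bound: under Condition~(i), both $\mu_k$ and $\lambda_k/n$ are individually $o(n^{-1})$ for $k>\lfloor\sqrt n\rfloor$, the latter via Ky Fan, $\mathbb E\Sigma_n=\Sigma$ and Markov. This is elementary and self-contained.

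One inequality there is false as written: $\lambda_m\le m^{-1}\sum_{j\ge m}\lambda_j$ does not hold in general. Take $\lambda_j=0$ for $j>m$; since $\lambda_m$ is the largest tail term, it dominates the tail sum divided by the number of tail terms, not the tail sum divided by $m$. The factor $m^{-1}$ is unnecessary: $\lambda_k/n\le\lambda_m/n\le\sum_{j\ge m}\lambda_j/n=o_p(n^{-1})$ is all you need.

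Your fallback for the $b_k$ claim is the cleaner route and closes in one line. Since $\beta_k=\sqrt{\mu_k}\langle\psi_k,g\rangle_{\mathcal H}$ and $b_k/\sqrt n=\sqrt{\lambda_k/n}\,\langle\widehat\psi_k,g\rangle_{\mathcal H}$,
\[
\Big|\frac{b_k}{\sqrt n}-\beta_k\Big|\le\Big(\big|\lambda_k/n-\mu_k\big|^{1/2}+\sqrt{\mu_1}\,\|\widehat\psi_k-\psi_k\|_{\mathcal H}\Big)\|g\|_{\mathcal H}=O_p(n^{-1/4})
\]
uniformly in $k\le K_n$. This needs neither Condition~(iii) nor a uniform law of large numbers. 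It also sidesteps both the division by $\sqrt{\mu_k}$ and the mismatch between the normalizations $\sqrt{n/\lambda_k}$ and $\mu_k^{-1/2}$ that your main line has to handle. In $\mathcal H$-norm that mismatch is of order $|\lambda_k/n-\mu_k|\,\mu_k^{-3/2}$, which $\mu_{K_n}\gg\log n/\sqrt n$ alone does not control. In $L_2(\mathbb P_n)$ it equals $|1-\sqrt{\lambda_k/(n\mu_k)}|=o_p(1)$.
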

\begin{proof}
The first result in this lemma follows from Condition \ref{ass: NTK spectral} (i) and Theorem 3 of \citep{braun2006accurate} by taking $r$ therein as $\lfloor\sqrt{n}\rfloor$.

Define the empirical fixed operator
\begin{equation*}
(\widehat Tf)(\bm x)
=
\frac1n\sum_{i=1}^n h(\bm x,\bmx_i)f(\bmx_i).
\end{equation*}

The nonzero eigenvalues of $\widehat T$ are $\lambda_k/n$. More precisely, if
$\bm H \bm v_k=\lambda_k\bm v_k$,
then the corresponding empirical eigenfunction is
$
\widehat\phi_k(\bm x)
=\sqrt n h(\bm x,\bm X)\bm v_k/\lambda_k$,
and it satisfies
$
\widehat T\widehat\phi_k
=
\lambda_k\widehat\phi_k/n$.
At the training covariates $
\widehat\phi_k(\bm x_\ell)=\sqrt n\,v_{k\ell}$ with $v_{k\ell}$ being the $\ell$th component of $\bm{v}_k$. Thus,
$\bm v_k=n^{-1/2}\widehat\phi_k(\bm X)$.

Because $h$ is bounded,
$
\|\widehat T-T\|_{\mathrm{op}}
\le
\|\widehat T-T\|_{\mathrm{HS}}
=
O_p(n^{-1/2})$
by calculating $\mathbb{E}[\|\widehat T-T\|_{\mathrm{HS}}^{2}]$.
By Weyl's inequality,
\begin{equation*}
\max_{1\le k\le K_n}
\left|
\frac{\lambda_k}{n}-\mu_k
\right|
\le
\|\widehat T-T\|_{\mathrm{op}}
=
O_p(n^{-1/2}).
\end{equation*}
Condition \ref{ass: NTK spectral} (i) gives
$\mu_{K_n}\gg\frac{\log n}{\sqrt n}$.
We have
$\|\widehat T-T\|_{\mathrm{op}}/\mu_{K_n}=o_p(1)$.
Therefore,
\begin{equation*}
\max_{1\le k\le K_n}
\left|
\frac{\lambda_k}{n\mu_k}-1
\right|=o_p(1).
\end{equation*}

Next, because $\delta_n\ge c_0n^{-1/4}$ and $\|\widehat T-T\|_{\mathrm{op}} = O_p(n^{-1/2})$, we have 
$$\max_{1\le k\le K_n}
\|\widehat\phi_k-\phi_k\|_{\infty} \leq C\max_{1\le k\le K_n}
\|\widehat\phi_k-\phi_k\|_{\cH}
=
o_p(1)$$ 
after choosing signs consistently according to Theorem 2 of \cite{zwald2005convergence}. Thus, we have $\max_{1\le k\le K_n}
\|\widehat\phi_k-\phi_k\|_{L_2(\mathbb{P}_X)}
=
o_p(1)$ and $\max_{1\le k\le K_n}
\|\widehat\phi_k-\phi_k\|_{L_2(\mathbb{P}_n)}
=
o_p(1)$.
 The leading eigenfunctions are uniformly bounded, and
$
K_n\mu_{K_n}\le \sum_{k=1}^{K_n}\mu_k\le \operatorname{Tr}(T)<\infty$.
Together with $\mu_{K_n}\gg \log n/\sqrt n$, this gives
$
K_n=o\left(\sqrt n/\log n\right)$
and hence
$K_n\log K_n/n=o(1)$.

Now consider the empirical projection of $f_\star-f_0$. Because $\bm v_k=n^{-1/2}\widehat\phi_k(\bm X)$, we have
\begin{equation*}
\frac{b_k}{\sqrt n}
=
\frac{\bm v_k^\top\{f_\star(\bm X)-f_0(\bm X)\}}{\sqrt n}
=
\mathbb{P}_n\{\widehat\phi_k(f_\star-f_0)\}.
\end{equation*}

Thus,
\begin{equation*}
\mathbb{P}_n\{\widehat\phi_k(f_\star-f_0)\}-\beta_k
=
\mathbb{P}_n\{(\widehat\phi_k-\phi_k)(f_\star-f_0)\}
+
(\mathbb{P}_n-\mathbb{P}_X)\{\phi_k(f_\star-f_0)\}.
\end{equation*}

The first term is $o_p(1)$ uniformly over $k\le K_n$ by Cauchy--Schwarz, because
\begin{equation*}
\|\widehat\phi_k-\phi_k\|_{L_2(\mathbb{P}_n)}=o_p(1)
\end{equation*}
uniformly over $k\le K_n$, and $f_\star-f_0\in L_2(\mathbb{P}_X)$. The second term is $o_p(1)$ uniformly over $k\le K_n$ by a uniform law of large numbers for the finite class
\begin{equation*}
\{\phi_k(f_\star-f_0):1\le k\le K_n\},
\end{equation*}
using boundedness of the leading $\phi_k$'s and $K_n\log K_n/n=o(1)$. Therefore,
\begin{equation*}
\max_{1\le k\le K_n}
\left|
\frac{b_k}{\sqrt n}
-\beta_k
\right|=o_p(1).
\end{equation*}

For the average eigenvalue,
\begin{equation*}
\bar\lambda
=
\frac1n\operatorname{Tr}(\bm H)
=
\frac1n\sum_{i=1}^n h(\bmx_i,\bmx_i).
\end{equation*}
Note that $\operatorname{Tr}(T) = E\{h(\bm x,\bm x)\}$
By the central limit theorem,
$\bar\lambda -
E\{h(\bm x,\bm x)\}
=
O_p(n^{-1/2})$.

Next, we prove the final result of this lemma. Let $r=f_\star-f_0$. Under Condition \ref{ass: NTK spectral}, 
$r\in\mathcal H$ and
$$
\|r\|_{\mathcal H}^2
=
\sum_{k\ge1}\frac{\beta_k^2}{\mu_k} < \infty.
$$
Let $S_n:\mathcal H\to\mathbb R^n$ be the evaluation operator
$S_n f=\{f(\bmx_1),\ldots,f(\bmx_n)\}^{\top}$. Then
$\bm{H}=S_n S_n^*$ with $S_n^{*}$ being the adjoint operator of $S_n$. For each $\lambda_k>0$, define
$$
\hat{g}_k=\lambda_k^{-1/2}S_n^* \bmv_k 
=
\sqrt{\frac{\lambda_k}{n}}\widehat\phi_k
=
\lambda_k^{-1/2}h(\cdot,\bmX)\bmv_k.$$
Because $\bm{H} \bmv_k=\lambda_k \bmv_k$, we have
$$
\langle \hat{g}_k,\hat{g}_j\rangle_{\mathcal H}
=
\frac{\bmv_k^\top \bm{H} \bmv_j}{\sqrt{\lambda_k\lambda_j}}
=
\mathbf 1\{k=j\}.
$$
Thus $\{\hat{g}_k\}_{k=1}^{n}$ is an orthonormal system in
$\mathcal H$. Moreover, with
$b_k=\bmv_k^\top\{f_\star(\bmX)-f_0(\bmX)\}$, we have
$$
b_k
=
\bmv_k^\top S_n r
=
\langle S_n^* \bmv_k,r\rangle_{\mathcal H}
=
\sqrt{\lambda_k}\langle \hat{g}_k,r\rangle_{\mathcal H}.
$$
Therefore, by Bessel's inequality,
$$
\sum_{k = 1}^{n}\frac{b_k^2}{\lambda_k}
=
\sum_{k = 1}^{n}\langle \hat{g}_k,r\rangle_{\mathcal H}^2
\le
\|r\|_{\mathcal H}^2
=
\sum_{k\ge1}\frac{\beta_k^2}{\mu_k}.
$$
Thus, using
$$
\sup_{x\ge0}x\exp(-x/\sqrt n)=\frac{\sqrt n}{e},
$$
we obtain
$$
\begin{aligned}
\frac1n\sum_{k=1}^{n} b_k^2\exp(-\lambda_k/\sqrt n)
&=
\frac1n\sum_{k = 1}^{n}
\frac{b_k^2}{\lambda_k}\lambda_k\exp(-\lambda_k/\sqrt n)\\
&\leq
\frac1n\sum_{k = 1}^{K_n}
\frac{b_k^2}{\lambda_k}\lambda_k\exp(-\lambda_k/\sqrt n) + 
\frac1n\sum_{k = K_n + 1}^{n}
\frac{b_k^2}{\lambda_k}\lambda_k\exp(-\lambda_k/\sqrt n)\\
&\leq
\frac1n\sum_{k = 1}^{K_n}
\frac{b_k^2}{\lambda_k}\lambda_k\exp(-n\log n/ n) + 
\frac1n\sum_{k = K_n + 1}^{n}
\frac{b_k^2}{\lambda_k}\lambda_k\exp(-\lambda_k/\sqrt n)\\
&\le \frac1n\sum_{k = 1}^{n}
b_k^2 \times \frac{1}{n}
+
\frac1n
\left\{\sup_{x\ge0}x\exp(-x/\sqrt n)\right\}
\sum_{k = K_n + 1}^{n}\frac{b_k^2}{\lambda_k}\\
&\le \frac1n\sum_{k = 1}^{n}
b_k^2 \times \frac{1}{n}
+
\frac1n
\left\{\sup_{x\ge0}x\exp(-x/\sqrt n)\right\}
\left(\sum_{k\ge1}\frac{\beta_k^2}{\mu_k} - \sum_{k = 1}^{K_n}\frac{b_k^2}{\lambda_k}\right)\\
&\le O_p\left(\frac{1}{n}\right) + 
\frac{1}{e\sqrt n}
\left(\sum_{k\ge1}\frac{\beta_k^2}{\mu_k} - \sum_{k = 1}^{K_n}\frac{b_k^2}{\lambda_k}\right)
\end{aligned}
$$
with probability approaching one
according to \eqref{eq: spectral tail}.
Then, the proof is completed if we prove 
$$
\sum_{k\ge1}\frac{\beta_k^2}{\mu_k}
-
\sum_{k=1}^{K_n}\frac{b_k^2}{\lambda_k} = O_p\left(n^{-1/2}\right).
$$
Note that $g_k=\sqrt{\mu_k}\phi_k$
forms an orthonormal system in $\mathcal H$.
Let $\Pi_{K_n}$ be the orthogonal projection onto
$
\operatorname{span}(g_1,\ldots,g_{K_n})$,
and let $\widehat\Pi_{K_n}$ be the orthogonal projection onto
$
\operatorname{span}(\hat g_1,\ldots,\hat g_{K_n})$.
Then,
$$
\sum_{k\ge1}\frac{\beta_k^2}{\mu_k}
-
\sum_{k=1}^{K_n}\frac{b_k^2}{\lambda_k}
=
\|r\|_{\mathcal H}^2
-
\|\widehat\Pi_{K_n}r\|_{\mathcal H}^2
=
\|(I-\widehat\Pi_{K_n})r\|_{\mathcal H}^2 .
$$
Now, decompose
$(I-\widehat\Pi_{K_n})r
=
(I-\Pi_{K_n})r
+
(\Pi_{K_n}-\widehat\Pi_{K_n})r$.
Thus,
$$
\|(I-\widehat\Pi_{K_n})r\|_{\mathcal H}^2
\le
2\|(I-\Pi_{K_n})r\|_{\mathcal H}^2
+
2\|(\Pi_{K_n}-\widehat\Pi_{K_n})r\|_{\mathcal H}^2.
$$
The first term
$
\|(I-\Pi_{K_n})r\|_{\mathcal H}^2
=
\sum_{k>K_n}\beta_k^2/\mu_k = O(1/\sqrt{n})$ according to Condition \ref{ass: NTK spectral} (i).
For the second term, by the spectral perturbation arguments applied to the eigenspaces
$\operatorname{span}(g_1,\ldots,g_{K_n})$ and
$\operatorname{span}(\widehat g_1,\ldots,\widehat g_{K_n})$, we have
$$
\|\widehat\Pi_{K_n}-\Pi_{K_n}\|_{op}
=
O_p\left(
\frac{\|\widehat T-T\|_{op}}{\delta_n}
\right) = O_p(n^{-1/4})
$$
according to Theorem 3 of \cite{zwald2005convergence} and Condition \ref{ass: NTK spectral} (ii), and hence
$$
\|(\Pi_{K_n}-\widehat\Pi_{K_n})r\|_{\mathcal H} \leq \|\widehat\Pi_{K_n}-\Pi_{K_n}\|_{op}\|r\|_{\mathcal H} = O_p(n^{-1/4})
.
$$
Thus,
$$
\sum_{j\ge1}\frac{\beta_j^2}{\mu_j}
-
\sum_{k=1}^{K_n}\frac{b_k^2}{\lambda_k}
=
\|r\|_{\mathcal H}^2
-
\|\widehat\Pi_{K_n}r\|_{\mathcal H}^2
=
\|(I-\widehat\Pi_{K_n})r\|_{\mathcal H}^2 = O_p\left(n^{-1/2}\right).
$$
This completes the proof.
\end{proof}

\begin{lemma}[Order of the REML-guided stopping time] \label{lem: order of stop time}
Under the conditions of Lemma \ref{lemma: concentration} and Lemma \ref{lem: emp NTK spectral}, we have
\begin{equation*}
\widehat t_n^{\mathrm{REML}}
=
O_p\left(
n^{-1/2}
\right).
\end{equation*}
\end{lemma}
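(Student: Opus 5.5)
The plan is to show directly that, with probability approaching one, the convex REML criterion $V_n(t)=n^{-1}\exp\{n^{-1}Q(t)\}$ is larger at $t_1:=Mn^{-1/2}$ than at $t_0:=n^{-1/2}$, for a fixed but sufficiently large constant $M$. Since $\widehat t_n^{\rm REML}$ is the unique minimizer of $V_n$ (Proposition~\ref{prop: existence uniqueness empirical REML}, with $Q$ and hence $V_n$ convex), $V_n(t_1)>V_n(t_0)$ with $t_0<t_1$ forces $\widehat t_n^{\rm REML}<t_1$. This yields $\widehat t_n^{\rm REML}=O_p(n^{-1/2})$.

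Throughout I use the random-design spectral facts of Lemma~\ref{lem: emp NTK spectral}. Write $\tau:=\operatorname{Tr}(T)\in(0,\infty)$. Then $\bar\lambda=\tau+O_p(n^{-1/2})$ and $\lambda_k\approx n\mu_k$, uniformly in the regimes stated there.

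\textbf{Step 1 (mean comparison).} I will use $\mathbb{E}V_n(t)=e^{t\bar\lambda}\{b^2(t)+\sigma_{\varepsilon,n}^2s(t)\}$ with $s(t)=n^{-1}\sum_k e^{-t\lambda_k}$, as in the proof of Lemma~\ref{lemma: lip cont}.

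At $t_0$, the last display of Lemma~\ref{lem: emp NTK spectral} gives $b^2(t_0)=O_p(n^{-1})$. Together with $s\le1$, this yields
\[
\mathbb{E}V_n(t_0)\le\sigma_{\varepsilon,n}^2\{1+\tau n^{-1/2}(1+o_p(1))\}+O_p(n^{-1}).
\]

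At $t_1$, I lower bound $s(t_1)\ge1-n^{-1}\sum_k\min(1,t_1\lambda_k)$ and split the sum at $k=\lfloor\sqrt n\rfloor$.
\begin{itemize}
\item The indices $k\le\sqrt n$ contribute at most $n^{-1/2}$.
\item The indices $k>\sqrt n$ contribute at most $n^{-1}t_1\sum_{k>\sqrt n}\lambda_k=Mn^{-1/2}\{\sum_{k>\sqrt n}\mu_k+o_p(n^{-1})\}=o_p(Mn^{-1})$, using $\max_{k>\sqrt n}|\lambda_k/n-\mu_k|=o_p(n^{-1})$ and $\sum_{k>\sqrt n}\mu_k=o(n^{-1})$.
\end{itemize}
Using $b^2\ge0$ and $e^{x}\ge1+x$, this gives
\[
\mathbb{E}V_n(t_1)\ge\sigma_{\varepsilon,n}^2\{1+M\tau n^{-1/2}(1+o_p(1))\}\{1-n^{-1/2}(1+o_p(1))\}.
\]
Hence
\[
\mathbb{E}V_n(t_1)-\mathbb{E}V_n(t_0)\ge\sigma_{\varepsilon,n}^2n^{-1/2}\{(M-1)\tau-2\}(1+o_p(1))-O_p(n^{-1}).
\]
Because $\sqrt n\sigma_{\varepsilon,n}^2\to\infty$, the $O_p(n^{-1})$ term is $o_p(\sigma_{\varepsilon,n}^2n^{-1/2})$. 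Choosing $M>1+4/\tau$ makes the gap at least $c\,\sigma_{\varepsilon,n}^2n^{-1/2}$ for some $c>0$. These bounds are conditional on $\bmX$; the spectral quantities are only controlled in probability.

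\textbf{Step 2 (fluctuation of the difference; the main obstacle).} Lemma~\ref{lemma: concentration} applied separately at $t_0$ and $t_1$ is too weak. It controls deviations only at scale $\sigma_{\varepsilon,n}^2\epsilon$ with tail $\exp(-cn\sigma_{\varepsilon,n}^2\epsilon^2)$, which is not small when $\epsilon\asymp n^{-1/2}$. I will instead treat the difference $V_n(t_1)-V_n(t_0)=(\bmr+\bm\varepsilon)^\top\bm D(\bmr+\bm\varepsilon)$ as a single quadratic form, with $\bm D=\bm B_{t_1}-\bm B_{t_0}$, $\bm B_t=n^{-1}e^{t\bar\lambda}\exp(-t\bmH)$ and $\bmr=f_\star(\bmX)-f_0(\bmX)$.

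For the quadratic part, $\bm D$ has eigenvalues $n^{-1}\{e^{t_1(\bar\lambda-\lambda_k)}-e^{t_0(\bar\lambda-\lambda_k)}\}$, each bounded by $C/n$. Hence $\|\bm D\|_{\rm op}\le C/n$ and $\|\bm D\|_F^2\le C/n$. Hanson--Wright (as in Lemma~\ref{lemma: concentration}) then gives a quadratic fluctuation of order $\sigma_{\varepsilon,n}^2n^{-1/2}$ at deviation level $\eta\sigma_{\varepsilon,n}^2n^{-1/2}$, with tail $\exp(-c\eta^2)$. The constant $\eta$ can then be beaten by the gap $c\,\sigma^2_{\varepsilon,n}n^{-1/2}$ once $M$ is large, since the gap grows linearly in $M$ while $\|\bm D\|_F$ grows at most linearly too. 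If this bookkeeping is too tight, I would sharpen it by noting that most eigenvalues have $\lambda_k$ small, so the entries are about $n^{-1}(M-1)\tau n^{-1/2}$; this gives $\|\bm D\|_F=O(M/n)$, i.e.\ fluctuations $O_p(\sigma_{\varepsilon,n}^2M/n)$, which is $o_p$ of the gap.

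The linear term $2\bmr^\top\bm D\bm\varepsilon$ has conditional variance $4\sigma_{\varepsilon,n}^2n^{-2}\sum_k b_k^2d_k^2$, where $d_k:=n\,[\bm D]_{kk}$ in the eigenbasis. Since $|d_k|\le Ce^{-t_0\lambda_k}+CMn^{-1/2}$, this variance is at most $C\sigma_{\varepsilon,n}^2n^{-2}\{\sum_kb_k^2e^{-\lambda_k/\sqrt n}+M^2n^{-1}\sum_kb_k^2\}$. By Lemma~\ref{lem: emp NTK spectral} and Condition~\ref{ass: bounded projection}, this is $O_p(\sigma_{\varepsilon,n}^2M^2n^{-2})$. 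Its standard deviation $O_p(\sigma_{\varepsilon,n}Mn^{-1})$ is $o_p(\sigma_{\varepsilon,n}^2n^{-1/2})$ because $\sqrt n\sigma_{\varepsilon,n}^2\to\infty$ implies $\sigma_{\varepsilon,n}\gg n^{-1/4}\gg n^{-1/2}$.

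Combining Steps 1 and 2 gives $\mathbb{P}\{V_n(t_1)>V_n(t_0)\}\to1$. Convexity of $V_n$ and the uniqueness of its minimizer then finish the argument as described at the start.
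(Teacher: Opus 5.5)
Your proof is correct in substance, but it controls the fluctuations differently from the paper. The skeleton is shared: compare the convex $V_n$ at $n^{-1/2}$ and at a larger time, and lower-bound the conditional-mean gap. That gap bound drops the bias at the larger time, uses $n^{-1}\sum_k b_k^2e^{-\lambda_k/\sqrt n}=O_p(n^{-1})$ at $n^{-1/2}$, and extracts the growth of $e^{t\bar\lambda}$ with $\bar\lambda\to\operatorname{Tr}(T)$.

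The paper takes an arbitrary $\tau_n$ with $\sqrt n\tau_n\to\infty$ and invokes the pointwise Lemma~\ref{lemma: concentration} at the two times. This is shorter, and because $\sqrt n\tau_n$ diverges, the quadratic part is harmless. However, the linear-term part of that lemma's tail, $\exp(-cn\sigma_{\varepsilon,n}^2\epsilon^2)$ at $\epsilon\asymp\tau_n$, is small only when $\sqrt n\tau_n\sigma_{\varepsilon,n}\to\infty$. That can fail in the small-noise regime the hypotheses allow.

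You instead fix $M$, treat $V_n(t_1)-V_n(t_0)$ as a single quadratic form, and bound the linear term's variance through the last display of Lemma~\ref{lem: emp NTK spectral}. This gives $O_p(\sigma_{\varepsilon,n}M/n)=o_p(\sigma_{\varepsilon,n}^2n^{-1/2})$, which removes that weakness. With the correction below, it also yields the stronger conclusion $\mathbb{P}(\widehat t_n^{\rm REML}<Mn^{-1/2})\to1$ for an explicit $M>1+4/\operatorname{Tr}(T)$.

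Three steps need repair, none fatal.

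\emph{The tail-eigenvalue sum.} Summing $\max_{k>\sqrt n}|\lambda_k/n-\mu_k|=o_p(n^{-1})$ over up to $n$ indices gives $n^{-1}\sum_{k>\sqrt n}\lambda_k=\sum_{k>\sqrt n}\mu_k+o_p(1)$, not $+\,o_p(n^{-1})$. So that contribution is $o_p(Mn^{-1/2})$, not $o_p(Mn^{-1})$. It is still negligible against $M\operatorname{Tr}(T)n^{-1/2}$ for fixed $M$.

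\emph{The sharpened Frobenius bound.} The claim $\|\bm D\|_F=O(M/n)$ is false in general. Indices $k\le\sqrt n$ with $t_0\lambda_k$ of order one have $|d_k|$ of order one, and nothing in the conditions keeps $\sum_{k\le\sqrt n}d_k^2$ bounded. The correct bound uses $|d_k|\le e^{t_1\bar\lambda}\le2$ for $k\le\sqrt n$ and $|d_k|=O_p(Mn^{-1/2})$ for $k>\sqrt n$, giving $\|\bm D\|_F^2\le n^{-2}\{4\sqrt n+O_p(M^2)\}$. Hence $\|\bm D\|_F=O_p(n^{-3/4})$, and the quadratic fluctuation is $O_p(\sigma_{\varepsilon,n}^2n^{-3/4})=o_p(\sigma_{\varepsilon,n}^2n^{-1/2})$. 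This, not the $O(M/n)$ claim, is what actually gives $\mathbb{P}\{V_n(t_1)>V_n(t_0)\}\to1$.

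\emph{Beating $\eta$ by choosing $M$ large.} Saying the gap wins because both it and $\|\bm D\|_F$ grow linearly in $M$ is not an argument. With only the crude bound, the right reasoning is that $\|\bm D\|_F^2\le 4/n$ holds with a constant independent of $M$. Taking $\eta\asymp(M-1)\operatorname{Tr}(T)$ then gives $\limsup_n\mathbb{P}(\widehat t_n^{\rm REML}\ge Mn^{-1/2})\le 2e^{-cM^2}$. That suffices for $O_p(n^{-1/2})$, but not for convergence to one at a fixed $M$.
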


\begin{proof}
Recall that $\widehat{t}_{n}^{\rm REML}$ is the minimizer of the convex function
$$
\begin{aligned}
    V_n(t) 
    & = n^{-1}\exp\{n^{-1}Q(t)\}\\
    & = n^{-1}(\bmy - f_{0}(\bmX))^\top \exp(-t \bm{H}) (\bmy - f_{0}(\bmX))\exp(t\bar{\lambda}),
\end{aligned}
$$
and 
$\mathbb{E}\{V_{n}(t)\mid\bmX\} = \exp(t\bar{\lambda})[b^{2}(t) + \sigma_{\varepsilon, n}^{2}\{1 - 2\mu_{1}(t) + \mu_{2}(t)\}] = n^{-1}\exp(t\bar{\lambda})[\sum_{k=1}^{n} b_k^2\exp(-t\lambda_k) + \sigma_{\varepsilon, n}^{2}\sum_{k=1}^{n}\exp(-t\lambda_k)]$. By Condition \ref{ass: NTK spectral} and Lemma \ref{lem: emp NTK spectral}, $\bar{\lambda}$ converges to a fixed constant and $\lambda_{\lfloor\sqrt{n}\rfloor} < \bar{\lambda} / 2$ with probability approaching one. For any sequence $\tau_{n}$ such that $\sqrt{n}\tau_n \to \infty$,
we have 
$$
\begin{aligned}
    &\mathbb{E}\{V_{n}(\tau_{n})\mid\bmX\} - \mathbb{E}\{V_{n}(1/\sqrt{n})\mid\bmX\}\\
    &\geq \exp(\tau_n\bar{\lambda})\left[\sigma_{\varepsilon, n}^{2}n^{-1}\sum_{k=\lfloor\sqrt{n}\rfloor}^{n}\exp(-\tau_n\lambda_k)\right] \\
    & \quad{} - \exp(\bar{\lambda}/\sqrt{n})\left[n^{-1}\sum_{k=1}^{n} b_k^2\exp(-\lambda_k/\sqrt{n}) +  \sigma_{\varepsilon, n}^{2}n^{-1/2}+ \sigma_{\varepsilon, n}^{2}n^{-1}\sum_{k=\lfloor\sqrt{n}\rfloor}^{n}\exp(-\lambda_k/\sqrt{n})\right]\\
    &\geq \sigma_{\varepsilon, n}^{2}n^{-1}\sum_{k=\lfloor\sqrt{n}\rfloor}^{n}\left[\exp\{\tau_n(\bar{\lambda} - \lambda_k)\} - \exp\{(\bar{\lambda} - \lambda_k)/\sqrt{n}\}\right] + O_p(n^{-1} + \sigma_{\varepsilon, n}^{2}n^{-1/2})\\
    &\geq \sigma_{\varepsilon, n}^{2}n^{-1}(n - \sqrt{n})\bar\lambda\tau_n\{1 - 1/(\sqrt{n}\tau_n)\} + O_p(n^{-1} + \sigma_{\varepsilon, n}^{2}n^{-1/2})\\
    & = \sigma_{\varepsilon, n}^{2}\tau_n\operatorname{Tr}(T) + o_{p}(\sigma_{\varepsilon, n}^{2}\tau_n)
\end{aligned}
$$
according to Lemma \ref{lem: emp NTK spectral} and $n^{1/2}\sigma_{\varepsilon,n} \to \infty$.
Combining this with Lemma \ref{lemma: concentration}, we have $V_{n}(\tau_{n}) - V_{n}(1/\sqrt{n}) = \sigma_{\varepsilon, n}^{2}\tau_n\operatorname{Tr}(T) + o_{p}(\sigma_{\varepsilon, n}^{2}\tau_n)$ and hence $V_{n}(\tau_{n}) - V_{n}(1/\sqrt{n})  > 0$ with probability approaching one. Thus, according to the convexity of $V_{n}(t)$, we have $\widehat t_n^{\mathrm{REML}} \leq \tau_n$ with probability approaching one for any sequence such that $\sqrt{n}\tau_n \to \infty$, which implies $\widehat t_n^{\mathrm{REML}} = O_p(n^{-1/2})$. This completes the proof.

\end{proof}

\begin{proof}[Proof of Corollary \ref{cor:out_sample_optimality}]
Define
$$
Q_f(\bmx)
=
\{f_\star(\bmx)- f(\bmx)\}^2+\sigma_{\varepsilon,n}^2 .
$$
Then the random-design prediction risk can be written as
$$
\mathcal E_n^*(t)
=
\mathbb{P}_XQ_{\widehat f^{\bmH}_t}
=
\mathbb E\left[
\{y_{\rm new}-\widehat f^{\bmH}_t(\bmx_{\rm new})\}^2
\right],
$$
whereas the fixed-design in-sample prediction risk considered in Theorem \ref{thm: test error}
can be written as
$$
\mathcal E_n(t)
=
\mathbb P_nQ_{\widehat f^{\bmH}_t}
=
\frac1n\sum_{i=1}^n
\mathbb E\left[
\{y_i^{\rm new}-\widehat f^{\bmH}_t(\bmx_i)\}^2
\mid \bmX
\right].
$$
Recall that $\widehat f^{\bmH}_t(\bmx) = f_0(\bmx) + h (\bmx,\bmX) \bm{H}^{\dagger} \left\{ \bm{I} - \exp\left( - t \bm{H} \right)\right\} \left\{\bm{y} - f_0(\bmX) \right\}$. Let $\cH$ be the RKHS associated with $h(\cdot,\cdot)$. Note that $h (\bmx,\bmX) \bm{H}^{\dagger} \left\{ \bm{I} - \exp\left( - t \bm{H} \right)\right\} \left\{\bm{y} - f_0(\bmX) \right\} \in \cH$ and
\begin{equation*}
    \begin{aligned}
        &\|h (\bmx,\bmX) \bm{H}^{\dagger} \left\{ \bm{I} - \exp\left( - t \bm{H} \right)\right\} \left\{\bm{y} - f_0(\bmX) \right\}\|^2_{\cH}\\
        &= \left\{\bm{y} - f_0(\bmX) \right\}^{\top}\left\{ \bm{I} - \exp\left( - t \bm{H} \right)\right\}\bm{H}^{\dagger} \left\{ \bm{I} - \exp\left( - t \bm{H} \right)\right\} \left\{\bm{y} - f_0(\bmX) \right\}\\
        &=\mathbb{E}\left[\left\{\bm{y} - f_0(\bmX) \right\}^{\top}\left\{ \bm{I} - \exp\left( - t \bm{H} \right)\right\}\bm{H}^{\dagger} \left\{ \bm{I} - \exp\left( - t \bm{H} \right)\right\} \left\{\bm{y} - f_0(\bmX) \right\}\mid\bmX\right] + o_p(1)
    \end{aligned}
\end{equation*}
uniformly for $t\in[0,\tau_n]$
according to similar arguments as the proof of Lemmas \ref{lemma: lip cont}, \ref{lemma: concentration} and \eqref{eq: concentration}. Moreover,
\begin{equation}\label{eq: RKHS norm bound}
    \begin{aligned}
        &\mathbb{E}\left[\left\{\bm{y} - f_0(\bmX) \right\}^{\top}\left\{ \bm{I} - \exp\left( - t \bm{H} \right)\right\}\bm{H}^{\dagger} \left\{ \bm{I} - \exp\left( - t \bm{H} \right)\right\} \left\{\bm{y} - f_0(\bmX) \right\}\mid\bmX\right]\\
        & = \sum_{k=1}^{n}\lambda_k^{-1}\{1 - \exp(-t\lambda_k)\}^{2}[\bm{v}_k^{\top}\{f_{\star}(\bmX) - f_{0}(\bmX)\}]^2 + \sigma_{\varepsilon,n}^{2}\sum_{k=1}^{n}\lambda_k^{-1}\{1 - \exp(-t\lambda_k)\}^{2}\\
        &\leq \sum_{k=1}^{n}\frac{b_k}{\lambda_k} + Ct^{2}\sum_{k=1}^{n}\lambda_k\\
        &=O(1) + Cnt^{2}n^{-1}\sum_{k=1}^{n}h(\bmx_i,\bmx_i)\\
        &= O(1) + Cnt^{2}\{\mathbb{E}\{h(\bmx,\bmx)\} + o_p(1)\} = O(1) + o_p(1)    \end{aligned}
\end{equation}
according to Condition \ref{ass: bounded projection}. For any $C > 0$,
let
$$
\mathcal Q_{C}=\{Q_f: f = f_0 + h\ \text{with}\ \|h\|_{\cH}\leq C\}.
$$
Under the conditions $\mathbb{E}[f_{0}(\bmx)^4] < \infty$ and $\sup_{\bmx}|h(\bmx,\bmx)| < \infty$, the function class $\mathcal Q_{C}$ is $P$-Donsker according to Example 1.8.5, Theorem 2.6.14, and Lemma 2.6.20 of \cite{van1996weak}. 
Therefore,
\begin{equation}\label{eq: empirical process}
    \sup_{q\in \mathcal Q_{C}}
\left|
(\mathbb P_n-\mathbb{P}_X)q
\right|
=
O_{p}(n^{-1/2}).
\end{equation}
In addition, for any $\epsilon > 0$,
there is some $C$ such that $Q_{\widehat f^{\bmH}_{\widehat t_n^{\rm REML}}}\in \mathcal Q_{C}$ with probability larger than $1 - \epsilon$ according to \eqref{eq: RKHS norm bound} and Lemma \ref{lem: order of stop time}. Then, \eqref{eq: empirical process} implies $(\mathbb P_n-\mathbb{P}_X)Q_{\widehat f^{\bmH}_{\widehat t_n^{\rm REML}}} = O_p(1/\sqrt{n})$.
Notice that
$\sqrt n\,\sigma_{\varepsilon,n}^2\to\infty$.
Then,
\begin{equation}\label{eq: gap fix-random}
\left|
\mathcal E_n(\widehat t_n^{\rm REML})-\mathcal E_n^*(\widehat t_n^{\rm REML})
\right|
= |(\mathbb P_n-\mathbb{P}_X)Q_{\widehat f^{\bmH}_{\widehat t_n^{\rm REML}}}| = O_p(1/\sqrt{n}) =
o_{\mathbb P}(\sigma_{\varepsilon,n}^2).
\end{equation}
By Theorem \ref{thm: test error},
$$
\frac{
\mathcal E_n(\widehat t_n^{\rm REML})
}{
\inf_{t\ge 0}\mathcal E_n(t)
}
\to 1
$$
in probability. Moreover, by Condition \ref{ass: learnable},
$$
\inf_{t\ge 0}\mathcal E_n(t)
=
\sigma_{\varepsilon,n}^2\{1+o(1)\}.
$$
Thus, we have
$$
\mathcal E_n(\widehat t_n^{\rm REML})
=
\sigma_{\varepsilon,n}^2\{1+o_{\mathbb P}(1)\}.
$$
Combining this with \eqref{eq: gap fix-random}, we obtain
$$
\mathcal E_n^*(\widehat t_n^{\rm REML})
=
\mathcal E_n(\widehat t_n^{\rm REML})
+
o_{\mathbb P}(\sigma_{\varepsilon,n}^2)
=
\sigma_{\varepsilon,n}^2\{1+o_{\mathbb P}(1)\}.
$$
Note that $\sigma_{\varepsilon,n}^2\leq\inf_{t\geq0}\mathcal{E}_n^*(t)\leq \mathcal E_n^*(\widehat t_n^{\rm REML})=
\sigma_{\varepsilon,n}^2\{1+o_{\mathbb P}(1)\}$. We have
$$
\frac{
\mathcal E_n^*(\widehat t_n^{\rm REML})
}{
\inf_{t\ge 0}\mathcal E_n^*(t)
}
=
\frac{
\sigma_{\varepsilon,n}^2\{1+o_{\mathbb P}(1)\}
}{
\sigma_{\varepsilon,n}^2\{1+o_{\mathbb P}(1)\}
}
\to 1
$$
in probability. This proves Corollary \ref{cor:out_sample_optimality}.
\end{proof}

\section{Additional Simulations}
\label{sec: supp simu}

We conduct numerical experiments to evaluate the early-stopping time estimated by REML with a Regression Transformer model \citep{born2023regression}. We generate $n_{\text{train}}=1{,}000$ training samples and $n_\text{test}=100$ test samples. For each sample $i$, the predictor is generated as a token sequence $\bm{X}_i=(\bm{x}_{i1},\ldots,\bm{x}_{iQ})^\top\in\mathbb{R}^{Q\times d}$, where the sequence length is $Q=5$ and the token dimension is $d=5$. Each token $\bm{x}_{iq}$ represents the covariates observed at position $q$ and is written as $\bm{x}_{iq}=(\bm{z}_{iq}^\top,\tau_q)^\top$, where $\bm{z}_{iq}=(z_{iq,1},\ldots,z_{iq,4})^\top\in\mathbb{R}^4$ contains four dynamic features and $\tau_q$ is a deterministic position feature equally spaced on $[-1,1]$. The dynamic features follow the autoregressive process $\bm{z}_{i1}\sim N(\bm{0},\bm{I}_4)$ and $\bm{z}_{iq}=0.7\bm{z}_{i,q-1}+\sqrt{1-0.7^2}\bm{\epsilon}_{iq}$ for $q=2,\ldots,Q$, where $\bm{\epsilon}_{iq}\sim N(\bm{0},\bm{I}_4)$. The response is generated as $y_i=f_\star(\bm{X}_i)+\epsilon_i$, with $\epsilon_i\sim N(0,0.25)$. The ground-truth function is constructed as $f_\star(\bm{X}_i)=0.45f_1(\bm{X}_i)+0.30f_2(\bm{X}_i)+0.70f_3(\bm{X}_i)+0.35f_4(\bm{X}_i)$, where $f_1(\bm{X}_i)=\sum_{q=1}^{Q}\{(0.18+0.08\tau_q)\sin(z_{iq,1})+0.12\cos(z_{iq,2})+0.10\tau_q\tanh(z_{iq,3})\}$, $f_2(\bm{X}_i)=(Q-1)^{-1}\sum_{q=1}^{Q-1}\tanh(\bm{z}_{iq}^{\top}\bm{z}_{i,q+1}/\sqrt{4})$, $f_3(\bm{X}_i)=\sum_{q=1}^{Q}a_{iq}\{0.60z_{iq,1}-0.40z_{iq,2}+0.30\sin(z_{iq,4})\}$ with $a_{iq}=\exp(\bm{z}_{iQ}^{\top}\bm{z}_{iq}/\sqrt{4}+0.75\tau_q)/\sum_{\ell=1}^{Q}\exp(\bm{z}_{iQ}^{\top}\bm{z}_{i\ell}/\sqrt{4}+0.75\tau_\ell)$, and $f_4(\bm{X}_i)=\sin(z_{i1,1}z_{iQ,2})+0.50\tanh(z_{i3,3}z_{iQ,4})$. This data-generating process makes the signal depend on nonlinear token-wise effects.

We train a Regression Transformer model with depth $L=2$, hidden width $w=1{,}000$, softmax attention, layer normalization, feed-forward layers with ReLU activation functions, mean pooling over tokens, and a linear readout \citep{vaswani2017attention, born2023regression}. The model is optimized using full-batch gradient descent with learning rate $\eta=10^{-2}$. We compare the finite-width Regression Transformer training dynamics with the corresponding infinite-width Transformer NTK gradient-flow solution, and evaluate performance using mean squared error (MSE) on both training and test datasets. We compute the infinite-width Transformer NTK using the algorithm in \citet{yang2020tensor}. To quantify performance, we consider two metrics: (1) test error; (2) computational time. The results are shown in Figure~\ref{fig: simu REML time transformer}.

Figure~\ref{fig: simu REML time transformer} shows that the  REML-guided early stopping time occurs near the onset of the test-error plateau, where further training yields diminishing improvements. In addition, the gradient descent trajectories closely track the corresponding NTK gradient flow trajectories for both training and test errors, especially before $10^5$ epochs. Specifically, REML selects $\widehat{t}_n^{\mathrm{REML}}=1{,}907$ epochs as the early stopping time, whereas the oracle stopping time is $25.6$ times longer, at $48{,}800$ epochs. The edf of this Regression Transformer model stopped at $\widehat{t}_n^{\mathrm{REML}}$ is 12.64. We also compare the REML stopping time with the validation-based stopping time by splitting the $1000$ training sample into training and validation subsets with a $2{:}1$ ratio \citep{prechelt1998early}. Validation-based early stopping requires evaluating validation loss over a checkpoint grid, whereas REML estimates the stopping time directly from the fixed training operator constructed at initialization. Compared with validation-based early stopping, REML achieves comparable test error, with a test error of $0.2071$, compared with $0.2092$ for validation. REML also substantially reduces computational time, requiring only $7.11$ minutes, compared with $306.50$ minutes for validation.  All computations were performed on a Dell Pro Max 16 workstation equipped with an Intel Core Ultra 9 285H CPU, Intel Arc integrated GPU, and 32 GB RAM. These results demonstrate that REML provides a principled criterion for early stopping. It reliably identifies a training time that achieves competitive predictive performance while substantially reducing computational time cost.

\clearpage

\section*{Supplementary Tables and Figures}

\begin{table}[ht!]
\centering
\caption{Comparison between classical \textit{static} random-effects representations for estimators with an explicit quadratic penalty $\rho\mathcal P(f)$, such as penalized splines and kernel ridge regression, and the proposed \textit{dynamic }random-effects representation for early-stopped gradient flow. In the static case, the fitted values can be written as BLUPs under $\bmy=\bmu_\rho+\bm\varepsilon_\rho$, where $\bmu_\rho\sim N(\bm 0,\sigma_{\varepsilon,\rho}^2\bm C_\rho)$, $\bm\varepsilon_\rho\sim N(\bm 0,\sigma_{\varepsilon,\rho}^2\bm I)$, and $\bm C_\rho=\rho^{-1}\bm C$ for a fixed covariance matrix $\bm C$ determined by the explicit quadratic penalty. For early-stopped gradient flow, the covariance is time-dependent, $\bm C_t=\exp(t\bm H)-\bm I$, where $\bm H$ is the fixed training operator.}
\label{tab:compare}
\resizebox{\textwidth}{!}{
\renewcommand{\arraystretch}{1.25}
\begin{tabular}{lcc}
\toprule
\textbf{Aspect} & \textbf{Explicit Quadratic-Penalty Estimators}  & \textbf{Early-Stopped Gradient Flow} \\
\midrule
Objective function & $\min_f \sum_{i=1}^n (y_i - f(\bmx_i))^2 + \rho\mathcal P(f)$  & $\min_f \sum_{i=1}^n (y_i - f(\bmx_i))^2$ \\
Regularization mechanism & Explicit penalty  & Implicit via early stopping \\
Covariance structure & Static $\bm C_\rho=\rho^{-1}\bm C$
& Dynamic $\displaystyle \bm C_t=\exp(t\bm H)-\bm I$ \\
Parameter &  Static penalty parameter $\rho$ & Dynamic training-time parameter $t$ \\
BLUP & Final regularized estimator  & Entire training trajectory \\
Spectral shrinkage & Static ridge-type shrinkage  & Dynamic exponential  shrinkage\\
\bottomrule
\end{tabular}}
\end{table}

\begin{table}[thb]
\centering
\caption{Empirical Type I Error Rates and Power of the Projected NTK-REML Score Test.}
\label{tab: type1_power}
\begin{tabular}{lccccc}
\toprule
 & $n=100$ & $n=200$ & $n=300$ & $n=400$ & $n=500$ \\
\midrule
Type I Error & 0.042 & 0.047 & 0.051 & 0.050 & 0.052 \\
Power        & 0.620 & 0.901 & 0.967 & 0.982 & 0.994 \\
\bottomrule
\end{tabular}
\end{table}

\newpage

\begin{figure}[ht]
    \centering
    \includegraphics[width=1\linewidth]{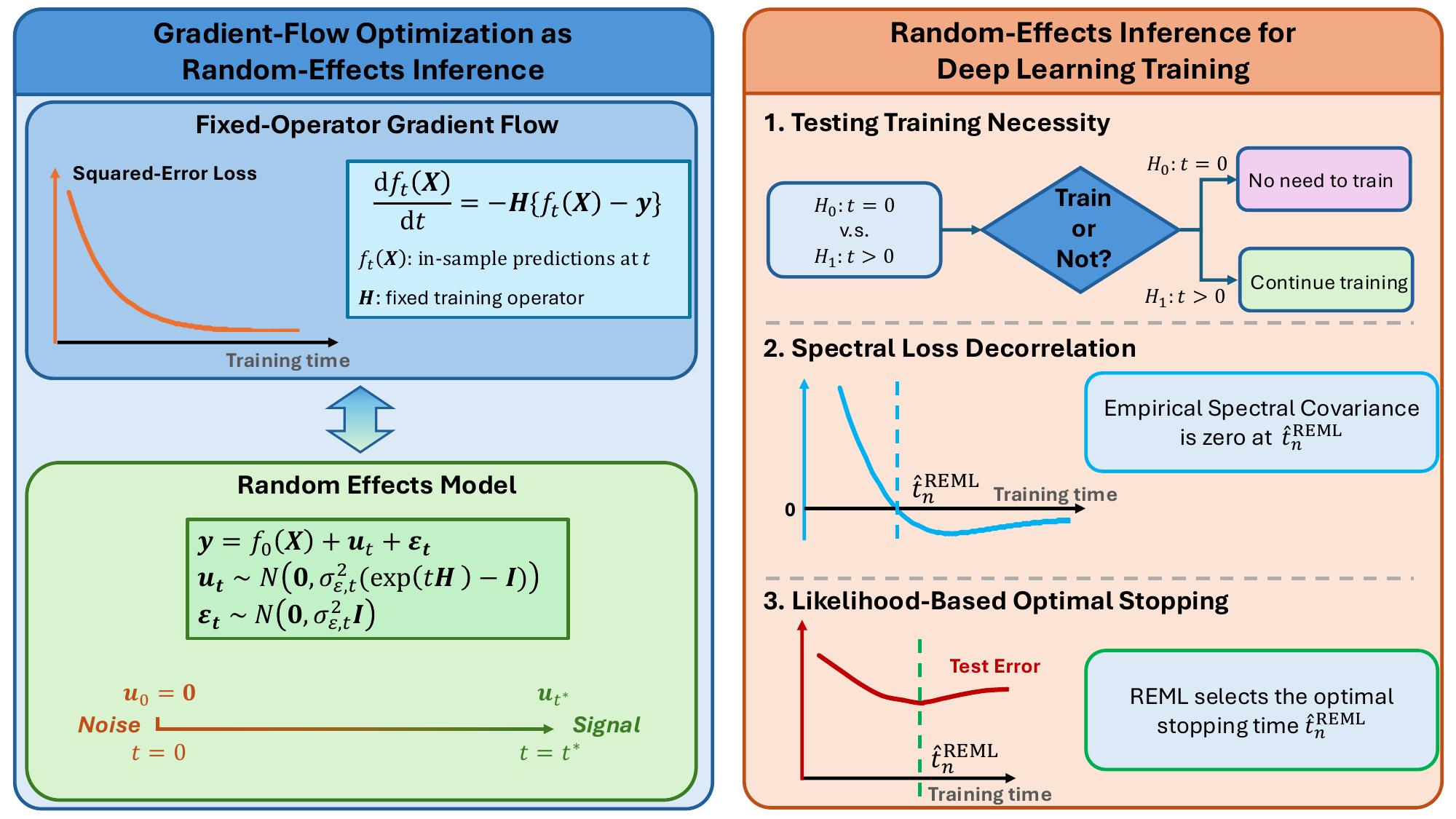}
    \caption{
Random-effects interpretation and inference-based training of deep learning models.
Left: Equivalence between fixed-operator gradient flow and a random-effects model, where the network output corresponds to the best linear unbiased predictor (BLUP) and training time acts as a variance component governing the allocation of variance from noise to signal.
Right: Inference-based training procedure. A statistical test determines whether training beyond initialization is necessary. Training then proceeds until spectral loss decorrelation is achieved, characterized by zero empirical spectral covariance (ESC). The corresponding stopping time is selected via restricted maximum likelihood (REML), yielding an asymptotically optimal prediction rule under the regularity conditions stated in Theorem~\ref{thm: test error} and Corollary~\ref{cor:out_sample_optimality}.
}
    \label{fig: nn diagram}
\end{figure}

\begin{figure}[!htbp]
    \centering
    \includegraphics[width=1.0\linewidth]{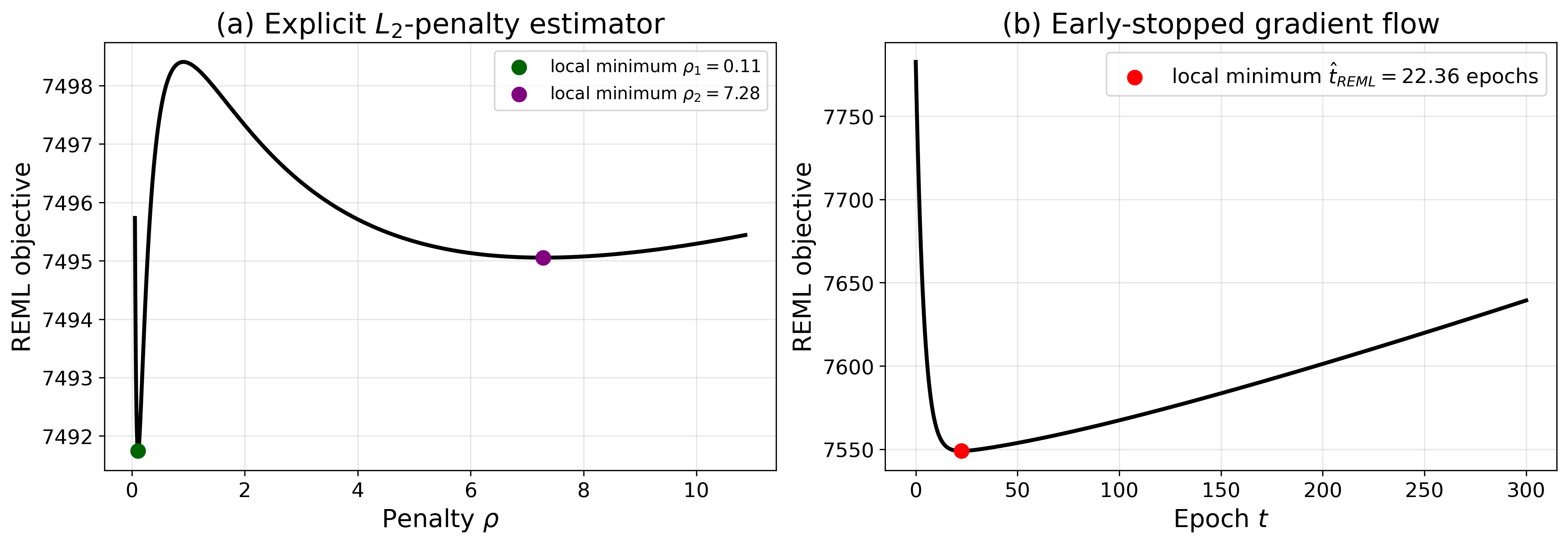}
    \caption{
    (a) The REML objective for explicit $L_2$-penalty estimators need not be convex and can have multiple local minima. The kernel is chosen to be a degree-3 polynomial kernel $k(\bmx,\bmx')=(2+0.01 \bmx^\top \bmx')^3$. The horizontal axis shows the penalty, and the vertical axis shows the corresponding REML objective. The green and purple dashed vertical lines mark two local minima. (b) Empirical illustration of convexity of the gradient-flow REML objective $Q(t)$ in the simulation setting. The horizontal axis shows training time on the epoch scale, and the vertical axis shows the REML objective. The red dashed vertical line marks the REML-guided stopping time $\widehat t_n^{\mathrm{REML}}$. 
    }
    \label{fig:supp_Q_convexity}
\end{figure}

\begin{figure}[!htbp]
    \centering
    \includegraphics[width=1\linewidth]{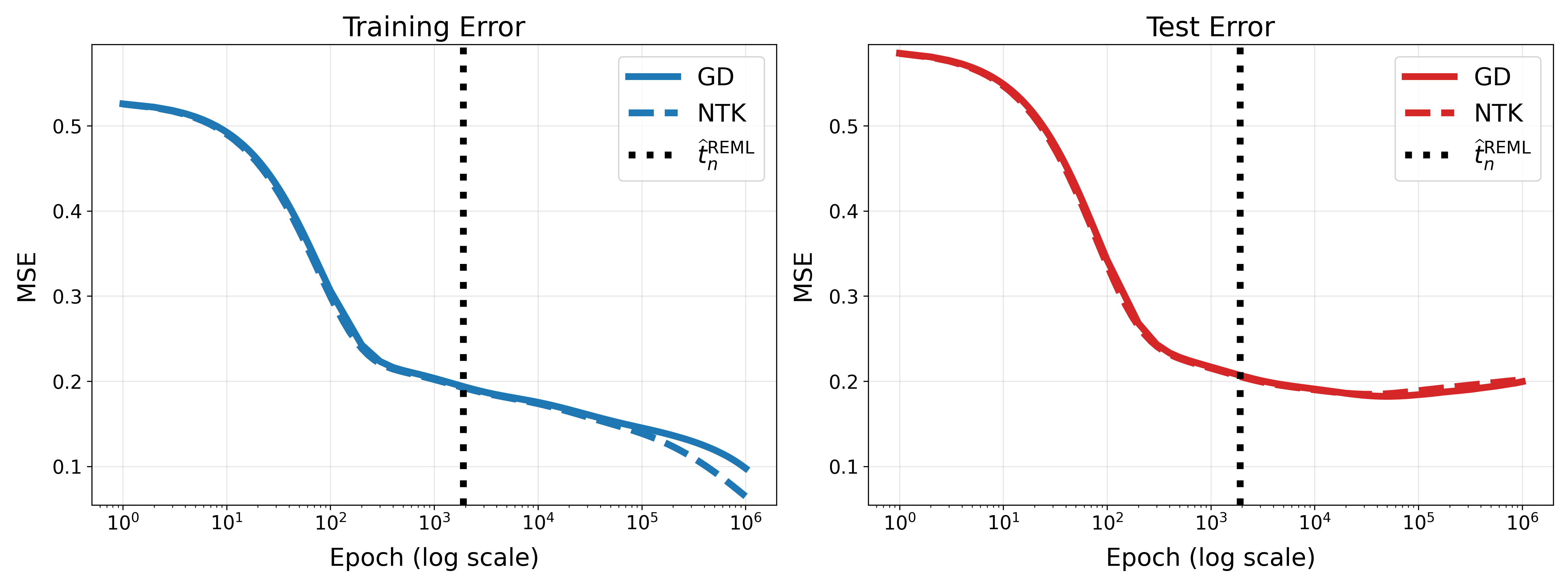}
    \caption{Comparison of gradient descent and NTK gradient flow with a Regression Transformer model. The vertical dotted line $\widehat{t}_n^{\rm REML}$ represents the  REML early-stopping time.}
    \label{fig: simu REML time transformer}
\end{figure}

\begin{figure}[!thb]
    \centering
    \includegraphics[width=1\linewidth]{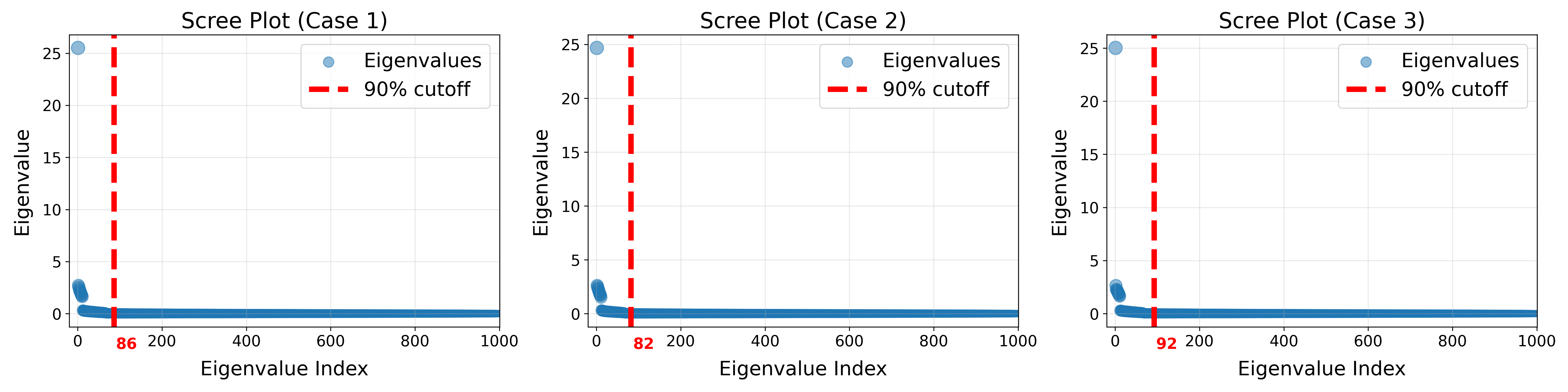}
    \caption{Scree plot of NTK matrices in Section \ref{sec: simu early stop}. Red dashed lines represent the eigenvalue index at which the cumulative sum of eigenvalues reaches 90\% of the total sum of eigenvalues.}
    \label{fig: scree_plot}
\end{figure}

\end{document}